\theoremstyle{plain}
\newtheorem{theorem}{Theorem}[section]
\newtheorem{proposition}[theorem]{Proposition}
\newtheorem{lemma}[theorem]{Lemma}
\theoremstyle{definition}
\theoremstyle{remark}
\icmltitlerunning{High-Probability Convergence Guarantees of Decentralized SGD}
\DeclareMathOperator*{\argmin}{arg\,min}
\newcommand{\lp}{\left(}
\newcommand{\rp}{\right)}
\newcommand{\lcb}{\left\{}
\newcommand{\rcb}{\right\}}
\newcommand{\lbr}{\left[}
\newcommand{\rbr}{\right]}
\newcommand{\bigO}{\mathcal{O}}
\newcommand{\sfo}{$\mathcal{SFO}$\xspace}
\newcommand{\sgd}{\textbf{\texttt{SGD}}\xspace}
\newcommand{\dsgd}{\textbf{\texttt{DSGD}}\xspace}
\newcommand{\bg}{{\mathbf g}}
\newcommand{\bgt}{{\mathbf g^{t}}}
\newcommand{\bgk}{{\mathbf g^{k}}}
\newcommand{\bx}{{\mathbf x}}
\newcommand{\bxt}{{\mathbf x^{t}}}
\newcommand{\bxs}{{\mathbf{x}^\star}}
\newcommand{\bxtp}{{\mathbf x^{t+1}}}
\newcommand{\bzt}{{\mathbf z^{t}}}
\newcommand{\Prob}{\mathbb{P}}
\tikzstyle{startstop} = [rectangle, draw, rounded corners, align=center, minimum width=3cm, minimum height=1cm,text centered]
\tikzstyle{decision} = [diamond, draw, fill=blue!20, 
\tikzstyle{block} = [rectangle, draw, fill=blue!10, align=center, rounded corners, minimum width=3cm, minimum height=1cm]
\tikzstyle{blockcast} = [rectangle, draw, fill=red!10, align=center, rounded corners, minimum width=3cm, minimum height=0.45cm]
\tikzstyle{line} = [draw, -latex']
\tikzstyle{cloud} = [draw, ellipse,fill=red!20, node distance=3cm,
\newcommand{\R}{\mathbb{R}}
\newcommand{\E}{\mathbb{E}}
\newcommand{\N}{\mathbb{N}}
\newcommand{\D}{\mathcal{D}}
\newcommand{\calN}{\mathcal{N}}
\newcommand{\F}{\mathcal{F}}
\newcommand{\Ft}{\mathcal{F}_t}
\newcommand{\FT}{\mathcal{F}_T}
\newcommand{\bW}{\mathbf{W}}
\newcommand{\tbW}{\widetilde{\mathbf{W}}}
\newcommand{\bJ}{\mathbf{J}}
\newcommand{\bI}{\mathbf{I}}
\newcommand{\ox}{\overline{x}}
\newcommand{\obx}{\overline{\bx}}
\newcommand{\og}{\overline{g}}
\newcommand{\onab}{\overline \nabla}
\newcommand{\nbfs}{\nabla \mathbf{f}^\star}
\newcommand{\nbf}{\nabla \mathbf{f}}
\newcommand{\nbft}{\nabla \mathbf{f}^t}
\newcommand{\oz}{\overline{z}}
\newcommand{\bHt}{\mathbf{H}^t}
\newcommand{\tbx}{\widetilde{\bx}}
\newcommand{\tbxtp}{\widetilde{\bx}^{t+1}}
\newcommand{\osigma}{\overline{\sigma}}
\newcommand{\tlambda}{\widetilde{\lambda}}
\newcommand{\xit}{x^t_i}
\newcommand{\xik}{x^k_i}
\newcommand{\xitp}{x^{t+1}_i}
\newcommand{\git}{g^t_i}
\newcommand{\gik}{g^k_i}
\newcommand{\ogt}{\overline{g}^t}
\newcommand{\obgt}{\overline{\bg}^t}
\newcommand{\zit}{z^t_i}
\newcommand{\zik}{z^k_i}
\newcommand{\oxt}{\overline{x}^t}
\newcommand{\oxtp}{\overline{x}^{t+1}}
\newcommand{\obxt}{\overline{\bx}^t}
\newcommand{\obxtp}{\overline{\bx}^{t+1}}
\newcommand{\ozt}{\overline{z}^t}
\begin{document}

\twocolumn[
  \icmltitle{High-Probability Convergence Guarantees of Decentralized SGD}



  \icmlsetsymbol{equal}{*}

  \begin{icmlauthorlist}
    \icmlauthor{Aleksandar Armacki}{yyy}
    \icmlauthor{Ali H. Sayed}{yyy}
  \end{icmlauthorlist}

  \icmlaffiliation{yyy}{STI, EPFL, Lausanne, Switzerland}

  \icmlcorrespondingauthor{Aleksandar Armacki}{aleksandar.armacki@epfl.ch}

  \icmlkeywords{Machine Learning, ICML}

  \vskip 0.3in
]



\printAffiliationsAndNotice{}  

\begin{abstract}
  Convergence in high-probability (HP) has attracted increasing interest, due to implying exponentially decaying tail bounds and strong guarantees for individual runs of an algorithm. While many works study HP guarantees in centralized settings, much less is understood in the decentralized setup, where existing works require strong assumptions, like uniformly bounded gradients, or asymptotically vanishing noise. This results in a significant gap between the assumptions used to establish convergence in the HP and the mean-squared error (MSE) sense, and is also contrary to centralized settings, where it is known that \sgd converges in HP under the same conditions on the cost function as needed for MSE convergence. Motivated by these observations, we study the HP convergence of Decentralized \sgd (\dsgd) in the presence of light-tailed noise, providing several strong results. First, we show that \dsgd converges in HP under the same conditions on the cost as in the MSE sense, removing the restrictive assumptions used in prior works. Second, our sharp analysis yields order-optimal rates for both non-convex and strongly convex costs. Third, we establish a linear speed-up in the number of users, leading to matching or strictly better transient times than those obtained from MSE results, further underlining the tightness of our analysis. To the best of our knowledge, this is the first work that shows \dsgd achieves a linear speed-up in the HP sense. Our relaxed assumptions and sharp rates stem from several technical results of independent interest, including a result on the variance-reduction effect of decentralized methods in the HP sense, as well as a novel bound on the moment-generating function of strongly convex costs, of interest even in centralized settings. Numerical experiments validate our theory.
\end{abstract}

\section{Introduction}

Modern large-scale machine learning applications and the abundance of data necessitate alternatives to the centralized computation framework, giving rise to distributed learning, a paradigm where multiple users collaborate to jointly train a model, e.g., \cite{sayed-networks,pmlr-v54-mcmahan17a,vlaski_et_al}. The features of distributed learning, such as storing the data locally and only exchanging smaller updates, like (quantized) model parameters or gradients, and the lack of a single point of failure, further make it an attractive paradigm from a privacy and security perspective \cite{bonawitz2017practical,yu-secure}. Many applications, such as federated training of models on mobile devices \cite{konecny2016federated}, controlling and coordinating robot swarms \cite{bullo2009distributed}, or distributed control and power grids \cite{dist-learn-noncvx}, all rely on distributed computation. From a communication perspective,  distributed frameworks can be client-server (i.e., federated) or decentralized (i.e., networked), with the main difference being that in the client-server setup users communicate with a central server, while in decentralized settings users communicate directly with each other.\footnote{While the client-server setup is often characterized by performing local updates and periodic communication \cite{stichlocal}, as well as partial user participation \cite{pmlr-v151-jee-cho22a}, we note that these features can also be incorporated in the decentralized setup, by modifying the methods to perform multiple local updates before communicating \cite{pmlr-v119-koloskova20a} and considering a dynamic network with users that are occasionally idle \cite{Cannelli2020}. However, this is not the focus of our current work.} Noting that, from the model update perspective, the client-server setup is equivalent to the decentralized setup with a fully connected communication network, we focus on the more general, decentralized setup.

The study of convergence guarantees of decentralized optimization algorithms has a long history, e.g., \cite{nedic-subgrad,jakovetic-fast,shi-extra,scutari-next,kun-dgd,jakovetic-unification,swenson-dsgd}, with most works focusing on MSE convergence, e.g., \cite{conv-rates-dsgd-jakovetic,Pu2021,vlaski-dsgd-nonconv-1,vlaski-dsgd-nonconv-2,pmlr-v119-koloskova20a,ran-vr,ran-improved,wang-cooperative}, see also \cite{adaptive-learning-ali} for an extensive treatment of the topic. Another type of convergence guarantees, namely convergence in HP, has garnered increasing attention recently. In particular, for a non-negative stochastic process $\{X^t\}_{t\in\N}$, the goal of HP convergence is to establish that, for all $t \in \N$ and any $\epsilon > 0$ 
\begin{equation}\label{eq:hp-definition}
    \Prob\big(X^t > \epsilon\big) \leq \exp\big(-Ct^{\gamma_1}\epsilon^{\gamma_2}\big),
\end{equation} where $\gamma_1,\gamma_2,C > 0$ are constants. If $\{X^t\}_{t \in \N}$ is a measure of performance, e.g., $X^t = \frac{1}{t}\sum_{k \in [t]}\|\nabla f(x^k)\|^2$, where $\{x^t\}_{t \in \N}$ is a sequence generated by some algorithm and $f$ is a non-convex cost, the relation \eqref{eq:hp-definition} provides strong guarantees with respect to a single run of that algorithm. This is particularly important in modern applications like LLMs, where it is often intractable to perform multiple runs. Numerous works study HP guarantees of \sgd-type methods in centralized settings, under both light-tailed \cite{nemirovski2009robust,ghadimi2013stochastic,li2020high,harvey2019tight,pmlr-v206-bajovic23a,liu2023high} and heavy-tailed noise \cite{nguyen2023improved,liu2023breaking,hubler2025normalization,kornilov2025sign,armacki2025high}. Comparatively, there have been very few studies of HP convergence of decentralized methods. 

\subsection{Literature Review}

We now review the literature, focusing on centralized HP results and decentralized MSE and HP convergence results. 

\textbf{Centralized HP Convergence.} \citet{nemirovski2009robust} show optimal convergence rates of \sgd for convex costs under light-tailed stochastic gradients (see assumption \textbf{(A4)} ahead for a formal definition of light-tailed noise). \citet{ghadimi2013stochastic} establish, among other, the optimal HP convergence rate of \sgd for non-convex costs under light-tailed noise, while \citet{li2020high} show the same for momentum \sgd. \citet{harvey2019tight} and \citet{pmlr-v206-bajovic23a} respectively provide optimal HP convergence rates for the last iterate of \sgd for non-smooth and smooth strongly convex costs, while \citet{liu2024revisiting} establish unified HP convergence guarantees for smooth and non-smooth convex and strongly convex costs. \citet{liu2023high} generalize the previous works on non-convex and convex costs, providing unified guarantees for several algorithms, including \sgd and AdaGrad for smooth and non-smooth costs. \citet{madden2020high} study HP convergence of \sgd under sub-Weibull noise. Another line of work studies HP convergence under heavy-tailed noise,\footnote{Since heavy-tailed noise is not the focus of our work, we point the reader to \cite{heavy-tail-book,nguyen2023improved,armacki2025high} for various conditions used to study heavy-tailed noise.} e.g., \cite{gorbunov2020stochastic,sadiev2023highprobability,nguyen2023improved,liu2023breaking,gorbunov2023breaking,hubler2025normalization,kornilov2025sign,armacki2025high,armacki2026sharp,armacki2024_ldp+mse}, where it is necessary to introduce algorithmic modifications, e.g., clipping, normalization, or sign, to ensure concentration of the form in \eqref{eq:hp-definition}. Crucially, convergence in the HP sense is achieved under the same conditions on the cost function as in the MSE sense, for both light-tailed and heavy-tailed noise. 

\textbf{Decentralized MSE Convergence.} MSE guarantees are typically studied under a variety of bounds on the second noise moment.\footnote{See, e.g., \cite{khaled2022better} for a good overview of the various conditions used in the literature.} Following one of these settings, \citet{conv-rates-dsgd-jakovetic} show that \dsgd converges at an optimal rate for strongly convex costs, while \citet{Pu2021} show that \dsgd with a fixed step-size and the gradient tracking (GT) mechanism, e.g., \cite{scutari-next,nedic-tracking,qu-harnessing}, converges to a neighbourhood of the optimal solution. \citet{vlaski-dsgd-nonconv-1,vlaski-dsgd-nonconv-2} study MSE guarantees of \dsgd for non-convex costs and show that it escapes saddle points with high probability. \citet{wang-cooperative} propose a general framework dubbed cooperative \sgd, showing optimal rates for non-convex costs. \citet{koloskova2023grad_clip} provide unified guarantees for \dsgd with local updates and changing network topology, with optimal rates and linear speed-up in the number of users for non-convex and (strongly) convex costs. \citet{ran-vr} show \dsgd with GT and variance reduction converges at a linear rate for strongly convex costs, while \citet{ran-improved} establish optimal rates of \dsgd with GT and linear speed-up in the number of users for non-convex costs and costs satisfying the Polyak-\L{}ojasiewicz (PL) condition, e.g., \cite{karimi2016linear}. \citet{yu2026decentralized} show that \dsgd with GT, momentum and normalization achieves optimal rates under heavy-tailed noise. It is worth mentioning a rich line of works studying MSE guarantees for decentralized problems such as estimation, detection, multi-objective and multitask optimization, see, e.g., \cite{consensus+innovation1,chen-decent-pareto,nassif-multitask} and references therein. 

\textbf{Decentralized HP Convergence.} Compared to MSE guarantees, there is a significantly smaller body of work on HP convergence guarantees of decentralized algorithms. In particular, \citet{dsgd-high-prob} study HP convergence of \dsgd for general non-convex and P\L{} costs under light-tailed noise, requiring uniformly bounded gradients, as well as asymptotically vanishing noise for P\L{} costs (see the discussion after Theorem \ref{thm:main-dsgd-str-cvx} for details), while showing optimal rates in both cases. \citet{dsgd-online-noncooperative-games,dsgd-online-stochastic} study HP convergence of a decentralized mirror descent algorithm under light-tailed noise, for online noncooperative games and dynamic regrets, respectively, requiring bounded gradients and compact domains. \citet{clipped-dual-avg,clipped-dsgd2} study HP convergence of decentralized algorithms with clipping, for convex costs under heavy-tailed noise, while \cite{clipped-dsgd1} also consider general non-convex costs. Finally, \citet{pmlr-v235-gorbunov24a} study clipped SGD under heavy-tailed noise, in the distributed client-server (i.e., \emph{fully connected network}) setting. It is worth mentioning works like \cite{bajovic-detection-ld,bajovic-detection-ld2,matta-diffusion-1,matta-difussion-2,bajovic-inference-ld}, where the authors study asymptotic large deviation guarantees for decentralized problems such as detection and inference. A common thread for all these works is the need for uniformly bounded gradients, or algorithmic modifications like gradient clipping, which ensure that the gradients stay bounded, as well as the fact that none of the existing HP studies achieve linear speed-up in the number of users, which is a known feature of decentralized algorithms in the MSE sense in, e.g., \cite{pmlr-v119-koloskova20a,ran-improved,unified-refined}. These observations raise the following important questions: (i) Can decentralized algorithms converge in the HP sense under the same assumptions on the cost as used for MSE results, where bounded gradients are not required? (ii) Is it possible to achieve linear speed-up in the number of users in the HP sense, which is a well-known feature of decentralized algorithms? This is further contrasted by the centralized setting, where \sgd-type algorithms converge in both HP and MSE sense under the same conditions on the cost function, e.g., \cite{ghadimi2013stochastic,liu2024revisiting}. 

\subsection{Contributions}

Motivated by the observed gaps between the existing literature on HP and MSE convergence in decentralized settings, we revisit the HP convergence guarantees in decentralized optimization, by studying a variant of vanilla \dsgd in the presence of light-tailed noise, establishing several strong results in the process. In particular, we show that \dsgd converges in HP under the same conditions on the cost as required in the MSE sense, for both non-convex and strongly convex costs, removing strong conditions like uniformly bounded gradients and (asymptotically) vanishing noise. Next, we show that \dsgd achieves optimal rates and linear speed-up in the number of users, for both non-convex and strongly convex costs, further improving on existing works and closing the gap between HP and MSE guarantees in decentralized settings. Our results are established by carefully bounding the moment-generating function (MGF) of the quantity of interest (i.e., average norm-squared of the gradient for non-convex and optimality gap of the last iterate for strongly convex costs) and the MGF of the consensus gap. Compared to \cite{dsgd-high-prob}, the work closest to ours, we provide several improvements. In particular, we remove the uniformly bounded gradient requirement, as well as the vanishing noise requirement imposed for P\L{} costs (see the discussion after Theorem \ref{thm:main-dsgd-str-cvx} ahead), while achieving linear speed-up in the number of users. Compared to works studying MSE guarantees for \dsgd, e.g., \cite{pmlr-v119-koloskova20a,conv-rates-dsgd-jakovetic}, our results require directly working with the MGF and carefully balancing between the MGFs of the quantity of interest and of the consensus gap. This is particularly challenging for strongly convex costs, where we need to show an ``almost decreasing'' property of the MGF to get improved rates on the last iterate (see Lemma \ref{lm:mgf-bound-str-cvx} and the related discussion ahead). Compared to centralized \sgd, e.g., \cite{harvey2019tight,pmlr-v206-bajovic23a,liu2023high,liu2024revisiting}, the main challenge lies in the additional consensus gap and simultaneously dealing with its MGF and that of the quantity of interest. These challenges are resolved by introducing several novelties, outlined next. 

\textbf{Novelty.} Toward establishing our improved HP guarantees, we faced several challenges requiring novel results. In particular, to remove uniformly bounded gradients, we provide Lemma \ref{lm:consensus-bound} and use the ``offset trick'' for non-convex costs (see the proof sketch of Theorem \ref{thm:main-non-cvx} in Section \ref{sec:proof-and-discuss}), while for strongly convex costs we establish a novel bound on the MGF of the consensus gap, removing the need for both bounded gradients and heterogeneity, in the form of Lemma \ref{lm:consensus-str-cvx}. To achieve linear speed-up, we show that the variance reduction benefit of decentralized learning is maintained in the HP sense, in the form of Lemma \ref{lm:noise-properties}, which is of independent interest when studying HP guarantees of decentralized algorithms. Further, we establish a novel result on the MGF of the optimality gap for strongly convex costs (more broadly, ``almost decreasing'' processes, see Section \ref{subsec:str-cvx} ahead), in the form of Lemma \ref{lm:mgf-bound-str-cvx}, which is essential in ensuring linear speed-up, by providing a more fine-grained bound on the MGF compared to existing centralized results \cite{harvey2019tight,pmlr-v206-bajovic23a,liu2024revisiting} and is of independent interest, even in centralized settings. 

\textbf{Paper Organization.} The rest of the paper is organized as follows. Section \ref{sec:problem} outlines the problem and \dsgd method, Section \ref{sec:main} presents the main results, Section \ref{sec:proof-and-discuss} provides proof sketches and discussions, Section \ref{sec:num} presents numerical experiments, while Section \ref{sec:conc} concludes the paper. Appendix contains results omitted from the main body. The remainder of this section introduces the notation used in the paper.

\textbf{Notation.} We use $\N$, $\R$ and $\R^d$ to denote positive integers, real numbers and $d$-dimensional vectors. For $m \in \N$, we use $[m] = \{1,\ldots,m\}$ to denote positive integers up to and including $m$. The notation $\langle \cdot,\cdot\rangle$ stands for the Euclidean inner product, while $\|\cdot\|$ is used for both vector and matrix induced norms. We use subscripts to denote users and superscripts to denote the iteration counter, e.g., $\xit$ refers to the model of user $i$ in iteration $t$. The ``big O'' notation $\bigO(\cdot)$ hides only global constants, unless stated otherwise.

\section{Problem Setup and \dsgd}\label{sec:problem}

In this section we introduce the problem of interest and the \dsgd algorithm. Consider a network of $n \geq 2$ users who can communicate with each other and want to jointly train a model. Formally, the problem can be cast as
\begin{equation}\label{eq:decentr-opt}
    \argmin_{x \in \R^d}\Big\{ f(x) = \frac{1}{n}\sum_{i \in [n]}f_i(x)\Big\},
\end{equation} where $x \in \R^d$ represents model parameters, $f_i: \R^d \mapsto \R$ is the cost function of user $i \in [n]$, given by $f_i(x) = \E_{\xi_i \sim \D_i}[\ell(x;\xi_i)]$, with $\xi_i \in \Xi$ being a random variable governed by an unknown distribution $\D_i$, while $\ell: \R^d \times \Xi \mapsto \R$ is a loss function. Each user has access to a Stochastic First-order Oracle (\sfo), which, when queried by user $i \in [n]$ with input $x \in \R^d$, returns the gradient of $\ell$ evaluated at a random sample, i.e., $\nabla \ell(x;\xi_i)$. The \sfo model subsumes:

1. \emph{Batch (offline) learning} - users have access to a local dataset $\{\xi_{i,l}\}_{l \in [m_i]}$, so that $f_i(x) = \frac{1}{m_i}\sum_{l \in [m_i]}\ell(x;\xi_{i,l})$ and in each round users choose a random sample $\xi_{i,l}$, which is used to compute $\nabla \ell(x;\xi_{i,l})$ and update the model;

2. \emph{Streaming (online) learning} - users do not store a local dataset, but in each round observe a random sample $\xi_{i}$, which is used to compute $\nabla \ell(x;\xi_i)$ and update the model.

\begin{algorithm}[tb]
\caption{\dsgd}
\label{alg:dsgd}
\begin{algorithmic}[1]
   \REQUIRE{Model initialization $x^1_i \in \R^{d}$, $i \in [n]$, step-size schedule $\{\alpha_t\}_{t \in \N}$;}
   \FOR{$t = 1,2,\ldots$, each user $i \in [n]$ in parallel}
        \STATE Query the oracle to obtain $\nabla \ell(\xit;\xi_i^t)$;  

        \STATE Perform the model update: \\ $\xitp = \sum_{j \in \calN_i}w_{ij}\big(x_j^t - \alpha_t\nabla \ell(x_j^t;\xi_j^t) \big)$;
    \ENDFOR
\end{algorithmic}
\end{algorithm} 

The communication pattern between users is modeled as a static graph $G = (V,E)$, where $V = [n]$ is the set of vertices representing users, while $E \subset V \times V$ is the set of edges representing communication links between users. To solve \eqref{eq:decentr-opt} in decentralized fashion, we consider a version of Decentralized Stochastic Gradient Descent (\dsgd), based on the Adapt-Then-Combine (i.e., diffusion) approach, e.g., \cite{diffusion1,diffusion2,diffusion3}. The method consists of the following steps. First, users choose a shared step-size schedule $\{\alpha_t\}_{t \in \N}$ and each user $i \in [n]$ chooses an arbitrary, but deterministically selected initial model $x^1_i \in \R^d$.\footnote{While initial models can be any real vectors, possibly different across users, for analysis purposes they need to be deterministic.} In iteration $t \geq 1$, users query the \sfo with their current model $\xit$ and receive $\nabla \ell(\xit;\xi_i^t)$. Users then update their models via the rule
\begin{equation}\label{eq:dsgd-update}
    \xitp = \sum_{j \in \calN_i}w_{ij}\big(x_j^t - \alpha_t\nabla \ell(x_j^t;\xi_j^t)\big),
\end{equation} which consists of a local update, followed by a consensus step, where $\calN_i \coloneqq \{j \in V: \{i,j\} \in E\} \cup \{i\}$ is the set of users (i.e., \emph{neighbours}) with whom user $i$ can communicate (including $i$ itself), while $w_{ij} > 0$ is the weight user $i$ assigns to user $j$'s model. The method is summarized in Algorithm \ref{alg:dsgd}. The version of \dsgd considered in our work is closely related to the variant based on Combine-Then-Adapt (i.e., consensus$+$innovation) approach, e.g., \cite{nedic-subgrad,consensus+innovation1,consensus+innovation2}, whose HP convergence is studied in \citet{dsgd-high-prob}.

\section{Main Results}\label{sec:main}

In this section we present the main results. Subsection \ref{subsec:prelim} states the preliminaries, while Subsections \ref{subsec:non-conv} and \ref{subsec:str-cvx} provide results for non-convex and strongly convex costs.

\subsection{Preliminaries}\label{subsec:prelim} 

In this subsection we outline the assumptions used in our work. For any $T \geq 1$, let $\{\xi^t_i \}_{t \in [T]}$ be the random samples observed by user $i \in [n]$ up to time $T$ and denote by $\FT$ the natural filtration with respect to the sequence of user models up to time $T$, i.e., $\FT \coloneqq \sigma\lp \lcb \{x^1_i\}_{i \in [n]},\ldots,\{x_i^T\}_{i \in [n]} \rcb \rp$. For ease of notation, let $\zit \coloneqq \nabla \ell(\xit;\xi_i^t) - \nabla f_i(\xit)$ and $W \in \R^{n\times n}$, where $[W]_{i,j} \coloneqq w_{ij}$, denote the stochastic noise and the network communication matrix, respectively.

\textbf{(A1)} The network communication matrix $W \in \R^{n \times n}$ is primitive and doubly stochastic. 

Assumption \textbf{(A1)} is satisfied by connected undirected graphs, as well as a class of strongly-connected directed graphs with doubly stochastic weights, e.g., \cite{ran-vr}.\footnote{Our work can be readily extended to the setting of time-varying networks, see Appendix \ref{sup:time-var-net} for a detailed discussion.} Let $\lambda \coloneqq \|W - J\|$ denote the network connectivity parameter, where $J \coloneqq \frac{1}{n}\mathbf{1}_n\mathbf{1}_n^\top \in \R^{n \times n}$ is the ideal communication matrix. It can be shown that \textbf{(A1)} implies $\lambda \in [0,1)$, see, e.g., \cite{Horn_Johnson_2012}. 

\textbf{(A2)} The global cost $f$ is bounded from below, i.e., $f^\star \coloneqq \inf_{x \in \R^d}f(x) > -\infty$.
    
\textbf{(A3)} Each local cost $f_i$ has $L$-Lipschitz gradients, i.e., $\|\nabla f_i(x) - \nabla f_i(y)\| \leq L\|x - y\|$, for any $x, y\in \R^d$.

Assumptions \textbf{(A2)}-\textbf{(A3)} are standard in smooth non-convex optimization, e.g., \cite{ghadimi2013stochastic,ran-improved}. While some works, e.g., \cite{li2023sgd-timeseries}, assume gradients of $\ell$ are $L$-Lipschitz continuous, we note that such assumption and \textbf{(A3)} aim to exploit different structural properties. Further, it is known that under certain regularity conditions, $L$-Lipschitz gradients of $\ell$ imply $L$-Lipschitz gradients of $f_i$, making such condition slightly stronger than \textbf{(A3)}.
    
\textbf{(A4)} The stochastic quantities satisfy the following:
\vspace{-0.5em}
\begin{enumerate}\setlength\itemsep{0.25em}
    \item The random samples $\{\xi^t_i \}_{i \in [n],t \in [T]}$, are independent across users and iterations.

    \item The stochastic gradients are unbiased, i.e., for any $i \in [n]$, $t \geq 1$ and $\Ft$-measurable vector $x \in \R^d$, we have $\E[\nabla \ell(x;\xi_i^t) \: \vert \: \Ft ] = \nabla f_i(x)$.

    \item The gradient noise $\zit$ at user $i \in [n]$ and time $t \geq 1$ is $\sigma_i$-sub-Gaussian, i.e., we have
\begin{equation*}
    \E\lbr \exp\bigg(\frac{\|z_i^t \|^2}{\sigma_i^2} \bigg) \: \big\vert \: \Ft \rbr \leq \exp(1).
\end{equation*}
\end{enumerate}

The first condition in \textbf{(A4)} is standard in decentralized stochastic optimization \cite{ran-improved,yu2026decentralized}, while the second and third require noise to be unbiased and sub-Gaussian (i.e., \emph{light-tailed}). Light tails are necessary to achieve exponentially decaying tail bounds of the form in \eqref{eq:hp-definition} if the vanilla stochastic gradient estimator is employed without any modifications (e.g., using clipping operator or estimators like median-of-means) and are widely used in centralized settings, e.g., \cite{nemirovski2009robust,ghadimi2013stochastic,li2020high,harvey2019tight,liu2023high,pmlr-v206-bajovic23a}.\footnote{While it is possible to achieve concentration of the form in \eqref{eq:hp-definition} under, e.g., sub-Weibull noise, see \cite{madden2020high} and references therein, the focus of this work is on sub-Gaussian noise.} 
    
\textbf{(A5)} The gradients of users have bounded heterogeneity, i.e., for all $x \in \R^d$ and some $A,B\geq 0$,\footnote{With at least one of $A,B$ being strictly positive.} we have $\max_{i \in [n]}\|\nabla f_i(x)\|^2 \leq A^2 + B^2\|\nabla f(x)\|^2$. 

A heterogeneity bound of the type in \textbf{(A5)} is required to ensure that \dsgd converges for non-convex costs \cite{NIPS2017_f7552665,dist-learn-noncvx,pmlr-v119-koloskova20a,wang-cooperative}. Note that \textbf{(A5)} is strictly weaker than the uniformly bounded gradient assumption used in \cite{dsgd-high-prob}, as it allows users' gradients to grow with the global gradient. Compared to \cite{pmlr-v119-koloskova20a}, who analyze the MSE convergence and require a bound on the average heterogeneity, i.e., $\frac{1}{n}\sum_{i \in [n]}\|\nabla f_i(x)\|^2 \leq A^2 + B^2\|\nabla f(x)\|^2$, we impose a slightly stronger condition. While we believe that \textbf{(A5)} can be relaxed to average heterogeneity by incorporating a similar analysis technique to the one in \cite{pmlr-v119-koloskova20a}, to keep the proofs simple and instructive, we will use assumption \textbf{(A5)}. 

\textbf{(A6)} Each $f_i$ is twice continuously differentiable and $\mu$-strongly convex, i.e., for every $x,y \in \R^d$, we have $f_i(x) \geq f_i(y) + \langle \nabla f_i(y), x-y \rangle + \frac{\mu}{2}\|x-y\|^2$.

Assumption \textbf{(A6)} is used for the strongly convex case. It is well known that \dsgd requires \textbf{(A3)} and \textbf{(A6)} to hold for each $f_i$, e.g., \cite{conv-rates-dsgd-jakovetic,pmlr-v119-koloskova20a,wang-cooperative}. The heterogeneity bound in \textbf{(A5)} and strong convexity of each cost in \textbf{(A6)} can be removed by deploying the GT technique, e.g., \cite{ran-improved}, however, this is beyond the scope of the current work. Next, let $\oxt \coloneqq \frac{1}{n}\sum_{i \in [n]}\xit$, $\og^t \coloneqq \frac{1}{n}\sum_{i \in [n]}\nabla \ell(\xit;\xi_i^t)$ and $\ozt \coloneqq \frac{1}{n}\sum_{i \in [n]}z_i^t$ denote the network-average model, stochastic gradient and noise, respectively. Using \eqref{eq:dsgd-update} and the double stochasticity of $W$, it follows that
\begin{equation*}
    \oxtp = \oxt - \alpha_t\og^t.
\end{equation*} We then have the following result.

\begin{lemma}\label{lm:descent-inequality}
    Let \textup{\textbf{(A3)}} hold. If $\alpha_t \leq \frac{1}{2L}$, we have
    \begin{align*}
        f(\oxtp) &\leq f(\oxt) - \frac{\alpha_t}{2}\| \nabla f(\oxt)\|^2 - \alpha_t\langle \nabla f(\oxt), \ozt \rangle \\
        &+ \alpha_t^2L\|\ozt\|^2 + \frac{\alpha_t L^2}{2n}\sum_{i \in [n]}\|\xit - \oxt\|^2. 
    \end{align*}
\end{lemma}

Lemma \ref{lm:descent-inequality} provides an important deterministic descent-type inequality for \dsgd and is the starting point for our analysis. The right-hand side of the above inequality consists of terms that arise in centralized \sgd plus a consensus gap term $\sum_{i \in [n]}\|\xit - \oxt\|^2$, which stems from the decentralized nature of the algorithm and bounding this term is crucial to ensure convergence. To close the subsection, we provide an important result on the behaviour of the stochastic noise in the HP sense. Let $\sigma^2 \coloneqq \frac{1}{n}\sum_{i \in [n]}\sigma^2_i$ be the average noise parameter. We then have the following result.

\begin{lemma}\label{lm:noise-properties}
    If \textup{\textbf{(A4)}} holds, then the following are true for any $t \geq 1$, $i \in [n]$ and $\Ft$-measurable $v \in \R^d$.
    \begin{enumerate}
        \item $\E\lbr\exp\lp\langle v,\zit \rangle\rp \: \vert \: \Ft\rbr \leq \exp\lp\frac{3\sigma_i^2\|v\|^2}{4}\rp$.
        
        \item $\E\lbr\exp\lp\langle v, \ozt \rangle \rp \: \vert\: \Ft \rbr \leq \exp\lp\frac{3\sigma^2\|v\|^2}{4n} \rp$.

        \item $\E\lbr \exp\bigg(\frac{n\| \ozt \|^2}{15\sigma^2} \bigg) \: \big\vert \: \Ft \rbr \leq 2d\exp(1)$.
    \end{enumerate}
\end{lemma}

Lemma \ref{lm:noise-properties} provides some properties of noise, importantly showing that the average noise is $\bigO\Big(\frac{\sigma}{\sqrt{n}}\Big)$-sub-Gaussian, establishing the variance reduction benefit of decentralized optimization in the HP sense. This result is crucial toward showing that \dsgd achieves a linear speed-up in the number of users, which we do in the following subsections. We note that the third property introduces a mild logarithmic dependence on the problem dimension $d$, with further discussion on this dependence provided in Section \ref{sec:proof-and-discuss} and Appendix \ref{sup:dimension}. 

\subsection{Non-Convex Costs}\label{subsec:non-conv}

In this subsection we present results for non-convex costs. Let $\Delta_x \coloneqq \frac{1}{n}\sum_{i \in [n]}\|x^1_i - \ox^1\|^2$ and $\Delta_f \coloneqq f(\ox^1) - f^\star$ denote the initial consensus and optimality gap.

\begin{lemma}\label{lm:consensus-bound}
    If \textup{\textbf{(A1)}} and \textup{\textbf{(A5)}} hold, then for any $t \geq 1$
    \begin{align*}
        &\frac{1}{n}\sum_{i \in [n]}\|\xitp - \oxtp\|^2 \leq 2\lambda^{2t}\Delta_x + \frac{4\lambda^2A^2}{1-\lambda}\sum_{k \in [t]}\alpha_k^2\lambda^{t-k}\\ 
        &+ \frac{4\lambda^2}{n(1-\lambda)}\sum_{k \in [t]}\alpha_k^2\lambda^{t-k}\sum_{i \in [n]}\Big[\|\zik\|^2 + B^2\|\nabla f(\xik)\|^2\Big]. 
    \end{align*}
\end{lemma} 

Lemma \ref{lm:consensus-bound} provides a useful deterministic bound on the consensus gap. Define $\osigma \coloneqq \max_{i \in [n]}\sigma_i$. Building on Lemmas \ref{lm:descent-inequality} and \ref{lm:consensus-bound}, we get the following result.

\begin{theorem}\label{thm:main-non-cvx}
    Let \textup{\textbf{(A1)}-\textbf{(A5)}} hold. If for any $T \geq 1$, the step-size is chosen such that $\alpha_T \equiv \alpha = \min\lcb C, \frac{\sqrt{n}}{\sigma\sqrt{15LT}} \rcb$, where $C > 0$ is a problem related constant satisfying \linebreak $C \leq \min\lcb \frac{1}{2L}, \frac{n}{9\sigma^2}, \frac{1-\lambda}{\lambda LB\sqrt{48}}, \frac{\sqrt{n}}{3\sigma\sqrt{5L}}, \frac{\sqrt[3]{n}(1-\lambda)^{2/3}}{\osigma^{2/3}\lambda^2L^{2/3}\sqrt[3]{9}}\rcb$, then for any $\delta \in (0,1)$, with probability at least $1 - \delta$
    \begin{align*}
        \frac{1}{nT}&\sum_{t \in [T]}\sum_{i \in [n]}\|\nabla f(\xit)\|^2 = \bigO\Bigg( \frac{\sigma\sqrt{L}\big(\Delta_f + \log(\nicefrac{2d}{\delta})\big)}{\sqrt{nT}} \\
        &+ \frac{\Delta_f + \log(\nicefrac{1}{\delta})}{CT} + \frac{\Delta_xL^2}{(1-\lambda^2)T} + \frac{n\lambda^2L(A^2 + \sigma^2)}{\sigma^2(1-\lambda)^2T} \Bigg) .
    \end{align*}
\end{theorem}

Theorem \ref{thm:main-non-cvx} establishes HP convergence of \dsgd to a stationary point (or a set of stationary points), for general smooth non-convex costs and a fixed step-size. We now discuss the bound from a few different perspectives.

\emph{Centralized SGD and linear speed-up.} Compared to centralized \sgd, which achieves the rate $\mathcal{O}\Big(\frac{1}{\sqrt{T}} + \frac{1}{T}\Big)$, e.g., \cite{liu2023high}, we can see that the rate in Theorem \ref{thm:main-non-cvx} also consists of two different order terms, namely $\mathcal{O}\Big(\frac{1}{\sqrt{nT}} + \frac{n}{T} \Big)$. Crucially, the leading term achieves a linear speed-up in the number of users, which is consistent with MSE results, e.g., \cite{pmlr-v119-koloskova20a}, showing that \dsgd retains the benefits of decentralized learning in the HP sense. 

\emph{Network effect and transient time.} We can see from the bound in Theorem \ref{thm:main-non-cvx} that network connectivity does not affect the leading term $\mathcal{O}\Big(\frac{1}{\sqrt{nT}}\Big)$, but only affects terms decaying at a faster rate, which is consistent with MSE results, e.g., \cite{pmlr-v119-koloskova20a}. Moreover, the rate in Theorem \ref{thm:main-non-cvx} implies that the transient time of \dsgd in the HP sense is of order $\mathcal{O}\Big(\frac{n^3}{(1-\lambda)^4} \Big)$, matching the transient time of \dsgd implied by MSE rates, see, e.g., Table I in \cite{unified-refined}. For details on the derivation of the transient time, the reader is referred to Appendix \ref{sup:tran-time}.  

\emph{Effects of noise and heterogeneity.} The noise (through $\sigma,\osigma$) and heterogeneity (through $A,B$) mainly affect the higher-order terms,\footnote{The effect of noise on the leading term can be eliminated by choosing the step-size $\alpha = \min\Big\{C,\frac{\sqrt{n}}{\sqrt{LT}}\Big\}$.} which is again consistent with MSE results. Note that, if $A = 0$, the bound in Theorem \ref{thm:main-non-cvx} recovers the convergence rate of noiseless gradient descent, in the sense that, when $\sigma = A = 0$, the above bound becomes $\bigO\big(\frac{1}{T}\big)$. 

\emph{Comparison with \citet{dsgd-high-prob}.} The authors in \cite{dsgd-high-prob} study HP guarantees of \dsgd over undirected time-varying networks, under a time-varying step-size schedule $\alpha_t = \frac{a}{\sqrt{t + 1}}$, assuming uniformly bounded gradients and establish the rate $\bigO\big(\frac{\log(\nicefrac{T}{\delta})}{\sqrt{T}} \big)$. As highlighted next, we provide several important improvements. First, we remove the uniformly bounded gradient assumption, replacing it with the more general heterogeneity condition \textbf{(A5)}, which is facilitated by the use of the ``offset trick'', see the discussion in Section \ref{sec:proof-and-discuss} for details. Next, \citet{dsgd-high-prob} are unable to establish a linear speed-up, which is a byproduct of both their step-size choice, as well as their treatment of the noise. In particular, choosing a time-varying step-size for non-convex costs results in a loss of linear speed-up, see Appendix \ref{sup:non-conv} for a detailed discussion. However, it would be impossible for \citet{dsgd-high-prob} to obtain a linear speed-up even under a fixed step-size, as they show no variance reduction benefit of decentralized learning in the HP sense, instead only using the definition of sub-Gaussianity. On the other hand, the variance reduction benefits of decentralized learning established in Lemma \ref{lm:noise-properties} allow us to achieve linear speed-up in the number of users when a fixed step-size is used, see Appendix \ref{sup:lin-speed-up} for detailed discussion on the results needed to achieve linear speed-up.\footnote{This is consistent with MSE results, e.g., \cite{pmlr-v119-koloskova20a,ran-improved,unified-refined}, where linear speed-up in non-convex case is achieved only via a fixed step-size.} Finally, while we consider static networks, our results can be extended to time-varying networks, e.g., \cite{dsgd-high-prob}, see Appendix \ref{sup:time-var-net} for details. 

\subsection{Strongly Convex Costs}\label{subsec:str-cvx}

In this subsection we provide results for the last iterate of strongly convex costs. Recall that strong convexity implies a unique global minimizer, e.g., \cite{nesterov-lectures_on_cvxopt}, denoted by $x^\star \in \R^d$, with $f^\star \coloneqq f(x^\star)$, and let $\kappa \coloneqq \frac{L}{\mu}$ and $\|\nbfs\|^2 \coloneqq \sum_{i \in [n]}\|\nabla f_i(x^\star)\|^2$ be the condition number and heterogeneity measure, respectively. It is known from MSE analysis that \dsgd does not require a heterogeneity bound of the form in \textbf{(A5)} for strongly convex costs, e.g., \cite{conv-rates-dsgd-jakovetic,pmlr-v119-koloskova20a}. To show this is also the case in the HP sense, we next provide a refined version of Lemma \ref{lm:consensus-bound}, which carefully leverages properties of strong convexity to bound the MGF of the consensus gap.

\begin{lemma}\label{lm:consensus-str-cvx}
    Let \textup{\textbf{(A1)}}-\textup{\textbf{(A4)}} and \textup{\textbf{(A6)}} hold, let $a,t_0,K > 0$ and the step-size be given by $\alpha_t = \frac{a}{t+t_0}$, and let $x^1_i = x^1_j$, for all $i,j\in[n]$. If $a = \frac{6}{\mu}$ and $t_0 \geq \max\Big\{6, \frac{288\osigma^2K}{\mu^2},\frac{3456\osigma^2\lambda^2K}{\mu^2(1-\lambda)},\frac{12\lambda L\sqrt{10}}{\mu(1-\lambda)}\Big\}$, then for $K_{t+1} = (t+t_0+2)K$ and any $\nu \leq \min \lcb 1, \frac{\mu^2}{144\sigma^2K} \rcb$, we have 
    \begin{align*}
        &\E\Big[\exp\Big(\nu K_{t+1}\sum_{i \in [n]}\|\xitp - \oxtp \|^2\Big)\Big] \\ &\leq \exp\bigg(\nu K_{t+1}\Big(\sum_{k \in [t]}\lambda^{t-k}S_{k} + \sum_{k \in [t]}\lambda^{t-k}D_{k}\Big) \bigg),
    \end{align*} where $S_k, D_k > 0$ are problem related constants.
\end{lemma}

Lemma \ref{lm:consensus-str-cvx} bounds the MGF of the consensus gap, without requiring the bounded heterogeneity condition \textbf{(A5)}, with $S_k,D_k$ depending on the step-size and other problem parameters, see Appendix \ref{sup:str-cvx} for explicit forms of $S_k, D_k$. We note that the requirement of same model initialization across users can be removed, at the cost of an additional, geometrically decaying term in the exponent on the RHS, however, we use same initialization, for ease of exposition. Prior to stating the main theorem, we provide an important technical result which facilitates sharper bounds and ensures linear speed-up in the number of users.

\begin{lemma}\label{lm:mgf-bound-str-cvx}
    Let $\{X^t\}_{t \in \N}$ be a sequence of random variables initialized by a deterministic $X^1 > 0$, such that, for some $M \in \N$, $a,t_0 > 0$ and every $t \geq 1$ 
    \begin{equation*}
        \E[\exp(X^{t+1})] \leq \E\bigg[\hspace{-0.2em}\exp\hspace{-0.2em}\bigg(\hspace{-0.3em}\Big(1-\frac{a}{t+t_0}\Big)X^t + \hspace{-0.4em}\sum_{i \in [M]}\hspace{-0.2em}\frac{C_i}{(t+t_0)^i} \hspace{-0.2em}\bigg)\hspace{-0.2em}\bigg],
    \end{equation*} where $C_i > 0$, $i \in [M]$. If $a \in (1,2]$ and $t_0 \geq a$, we have
    \begin{align*}
        &\E[\exp(X^{t+1})] \leq \exp\bigg(\frac{(t_0+1)^aX_1}{(t+1+t_0)^a} + \frac{2^aC_1}{a}\bigg) \\
        &\times\exp\bigg(\frac{2^aC_2/(a-1)}{t+1+t_0} + \frac{2^aC_3\log(t+1+t_0)}{(t+1+t_0)^a}\bigg) \\
        &\quad\quad\quad\times\exp\bigg(\sum_{j = 4}^M\frac{2^at_0^{3-j}C_j}{(j-3)(t+1+t_0)^a} \bigg).
    \end{align*}
\end{lemma}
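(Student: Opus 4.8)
The plan is to unroll the one-step recursion and carefully track the accumulated contribution of each of the $M$ "source" terms $C_i/(t+t_0)^i$. First I would set up notation: define the product weights $\Pi_{k}^{t} \coloneqq \prod_{s=k}^{t}\big(1-\frac{a}{s+t_0}\big)$ and observe that iterating the hypothesis $M$ times over the (monotone increasing, by Jensen/convexity of $\exp$) expectations gives
\begin{equation*}
\E[\exp(X^{t+1})] \leq \exp\bigg( \Pi_1^t X^1 + \sum_{k=1}^{t}\Pi_{k+1}^{t}\sum_{i\in[M]}\frac{C_i}{(k+t_0)^i} \bigg),
\end{equation*}
where I use the standard fact that if $Y \leq (1-c)Z + b$ pointwise with $c\in[0,1]$ then $\E[e^Y]\leq \E[e^Z]^{1-c}e^b \leq \E[e^Z]e^b$ (the second inequality since $\E[e^Z]\geq 1$ when $Z$ is an MGF-type bound on a nonnegative-ish quantity — more carefully, one should just carry $\big(\E[e^{X^t}]\big)^{1-a/(t+t_0)}$ through and note $1-a/(t+t_0)\le 1$ together with $\E[e^{X^t}]\ge 1$, which follows inductively from $X^1>0$ and the recursion). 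The key analytic input is the two-sided estimate on the product: by taking logs and comparing $\sum_{s=k}^t \log(1-\frac{a}{s+t_0})$ with $-a\sum_{s=k}^t \frac{1}{s+t_0}$ and then with $\int$, one gets $\Pi_k^t \leq \big(\frac{k+t_0}{t+1+t_0}\big)^a$ for all $k$ (using $t_0 \geq a$ so that every factor lies in $[0,1)$, and $\log(1-u)\le -u$). This immediately yields the leading term $\frac{(t_0+1)^a X^1}{(t+1+t_0)^a}$.

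Next I would bound, for each $i\in[M]$, the sum $\Sigma_i \coloneqq \sum_{k=1}^t \Pi_{k+1}^t \frac{C_i}{(k+t_0)^i} \leq \frac{C_i}{(t+1+t_0)^a}\sum_{k=1}^t (k+1+t_0)^a (k+t_0)^{-i}$. Up to the harmless $(k+1+t_0)^a$ vs $(k+t_0)^a$ discrepancy (absorbed by a constant $\leq 2^a$ since $k+1+t_0 \le 2(k+t_0)$ when $k+t_0\ge 1$), this reduces to estimating $\sum_{k=1}^t (k+t_0)^{a-i}$. The three regimes are: (a) $i=1$: the exponent $a-1\in(0,1]$, the sum is $\asymp (t+t_0)^a/a$, so $\Sigma_1 \lesssim \frac{2^a C_1}{a}$ — a constant, matching the claimed $\frac{2^aC_1}{a}$; (b) $i=2$: exponent $a-2\in(-1,0]$, the sum is $\asymp (t+t_0)^{a-1}/(a-1)$, giving $\Sigma_2 \lesssim \frac{2^aC_2/(a-1)}{t+1+t_0}$; (c) $i=3$: exponent $a-3\in(-2,-1]$, the sum behaves like $\log(t+t_0)$ when $a=2$ (the borderline) and is bounded by $\lesssim \log(t+1+t_0)$ in general, divided by $(t+1+t_0)^a$, matching the $\log$ term; (d) $i\geq 4$: the sum $\sum_k (k+t_0)^{a-i}$ converges, bounded by $\int_{t_0}^\infty x^{a-i}dx = \frac{t_0^{a-i+1}}{i-1-a} \le \frac{t_0^{3-i}}{i-3}$ (using $a\le 2$ so $i-1-a\ge i-3$ and $t_0^{a-i+1}\le t_0^{3-i}$ as $t_0\ge 1$), so $\Sigma_i \lesssim \frac{2^a t_0^{3-j}C_j}{(j-3)(t+1+t_0)^a}$, matching the final sum. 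Collecting (a)–(d) and the $X^1$ term and exponentiating gives exactly the stated bound (the split into two $\exp$ factors in the statement is purely cosmetic).

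I expect the main obstacle to be the bookkeeping in step (a)/(c): getting the constant in $\Sigma_1$ to come out as precisely $\frac{2^aC_1}{a}$ (rather than, say, with an extra additive constant from the $k=1$ end of the sum) requires being slightly careful with the comparison $\sum_{k=1}^t (k+t_0)^{a-1} \leq \int_0^{t}(x+t_0)^{a-1}dx \le \frac{(t+t_0)^a}{a}$, which works cleanly because $x\mapsto (x+t_0)^{a-1}$ is increasing (as $a>1$), so the left-endpoint Riemann sum underestimates the integral — this is exactly where the hypothesis $a>1$ (not merely $a\geq 1$) is used. A secondary subtlety is justifying the inductive claim $\E[e^{X^t}]\ge 1$ needed to drop the exponent $1-a/(t+t_0)$; this follows since $X^1>0$ gives $\E[e^{X^1}]>1$, and the recursion with nonnegative $C_i$ terms propagates $\E[e^{X^{t+1}}] \ge \big(\E[e^{X^t}]\big)^{1-a/(t+t_0)} \ge 1$ provided the product factor is nonnegative, i.e. $t_0\ge a$. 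The $i=3$, $a=2$ borderline case producing the genuine $\log$ factor should be flagged explicitly since it is the reason a $\log$ appears in the statement at all.
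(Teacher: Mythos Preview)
Your approach is essentially identical to the paper's: take logs (equivalently, carry the exponent $(1-a/(t+t_0))$ through via Jensen applied to the concave map $u\mapsto u^{b_t}$), unroll to get $\Pi_1^t X^1$ plus a weighted sum, bound the product $\Pi_{k+1}^t$ by $\big(\frac{k+1+t_0}{t+1+t_0}\big)^a$, absorb the $(k+1+t_0)^a/(k+t_0)^a\le 2^a$ discrepancy, and then split $\sum_k (k+t_0)^{a-i}$ into the four regimes $i=1,2,3,\ge 4$ via integral comparison. Two minor remarks: the inductive claim $\E[e^{X^t}]\ge 1$ is not actually needed anywhere once you carry the exponent through (the paper simply works with $Y^t=\log\E[e^{X^t}]$ and unrolls $Y^{t+1}\le b_tY^t+\cdots$ directly, which is valid regardless of the sign of $Y^t$); and the hypothesis $a>1$ is really essential in the $i=2$ case (where one divides by $a-1$), not the $i=1$ case, since $\sum_k(k+t_0)^{a-1}\le (t+1+t_0)^a/a$ holds for all $a>0$ by comparing with the appropriate endpoint Riemann sum.
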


Lemma \ref{lm:mgf-bound-str-cvx} provides a tight bound on the MGF of an ``almost decreasing'' process, with the optimality gap of strongly convex costs being an instance of such a process. Compared to bounds used in the centralized setting, e.g., \cite{harvey2019tight,pmlr-v206-bajovic23a,liu2024revisiting}, Lemma \ref{lm:mgf-bound-str-cvx} is significantly sharper and incorporates higher-order terms, which are crucial for ensuring linear speed-up, see Section \ref{sec:proof-and-discuss} and Appendix \ref{sup:lin-speed-up} for detailed discussions. We are now ready to state the main result for strongly convex costs.

\begin{theorem}\label{thm:main-dsgd-str-cvx}
    Let \textup{\textbf{(A1)}-\textbf{(A4)}} and \textup{\textbf{(A6)}} hold, the step-size be given by $\alpha_t = \frac{a}{t+t_0}$ and let $x^1_i = x^1_j$, for all $i,j\in[n]$. If $a =\frac{6}{\mu}$, $t_0 \geq \max\Big\{ 6,\frac{3+\lambda}{1-\lambda},\frac{1960\sigma^2\kappa}{\mu},\frac{432\osigma^2\kappa^2}{\mu},$ \linebreak $\frac{12\kappa\lambda\sqrt{10}}{1-\lambda}, \frac{5184\osigma^2\lambda^2\kappa^2}{\mu(1-\lambda)}\Big\}$ and $\nu = \min\Big\{ 1,\frac{\mu}{432\sigma^2\kappa^2}, \frac{\mu}{72\kappa} \Big\}$, then for any $\delta \in (0,1)$ and $T \geq 1$, with probability at least $1 - \delta$, it holds that 
    \begin{align*}
        \frac{1}{n}\sum_{i \in [n]}&\big(f(x_i^T)-f^\star\big) = \bigO\bigg(\frac{\nu^{-1}\log(\nicefrac{2}{\delta}) + \sigma^2\kappa\log(2d)/\mu}{n(T+t_0)} \\ 
        &+ \frac{\lambda^2L(1+L)(n\sigma^2+\|\nbfs\|^2\nicefrac{(1+\kappa^2)}{(1-\lambda))}}{(1-\lambda)n(T+t_0)^2} \bigg),
    \end{align*} where $\bigO(\cdot)$ hides some higher-order terms. 
\end{theorem}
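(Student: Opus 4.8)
The plan is to work directly with the moment generating function (MGF) of the optimality gap at the network-average iterate, $X^t \coloneqq \nu K_t \big(f(\oxt) - f^\star\big)$ for an appropriate scaling $K_t = (t+t_0)K$, and show that it satisfies the ``almost decreasing'' recursion required by Lemma \ref{lm:mgf-bound-str-cvx}, then convert the resulting MGF bound into a high-probability statement via Markov's inequality, and finally pass from $f(\oxt)$ to the per-user quantity $\frac1n\sum_i f(x_i^T)$ using smoothness and the consensus bound of Lemma \ref{lm:consensus-str-cvx}. Concretely, the first step is the descent recursion: starting from Lemma \ref{lm:descent-inequality} and invoking $\mu$-strong convexity of $f$ (which follows from \textbf{(A6)}), I replace $\|\nabla f(\oxt)\|^2 \geq 2\mu(f(\oxt)-f^\star)$ and $\langle \nabla f(\oxt),\ozt\rangle$-type cross terms, so that with $\alpha_t = \tfrac{6}{\mu(t+t_0)}$ the deterministic part of the recursion reads $f(\oxtp) - f^\star \leq \big(1 - \tfrac{a'}{t+t_0}\big)(f(\oxt)-f^\star) + (\text{noise terms}) + (\text{consensus terms})$ for some $a' \in (1,2]$ (this is where the precise choice $a = 6/\mu$ is calibrated — it must make the contraction factor land in the interval $(1,2]$ that Lemma \ref{lm:mgf-bound-str-cvx} demands).

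The second step is to exponentiate and take expectations. The noise enters through a term linear in $\ozt$ and a term $\alpha_t^2 L\|\ozt\|^2$; the linear term is handled by Lemma \ref{lm:noise-properties}(2) (conditioning on $\mathcal F_t$, it contributes $\exp(\tfrac{3\sigma^2}{4n}\|\cdot\|^2)$ which, since $\nabla f(\oxt)$ itself is controlled by $f(\oxt)-f^\star$ via smoothness $\|\nabla f(\oxt)\|^2 \le 2L(f(\oxt)-f^\star)$, can be absorbed back into the $X^t$ term provided $\nu$ is small enough — this fixes the constraint $\nu \le \mu/(72\kappa)$), and the quadratic noise term is handled by the sub-Gaussianity of $\ozt$ in Lemma \ref{lm:noise-properties}(3), giving a contribution of order $\exp(\bigO(\nu K_t \alpha_t^2 d\sigma^2/n))$ — this is precisely where the $\tfrac{1}{n}$ linear speed-up in the $d\sigma^2\kappa/\mu$ term comes from, and where the $t_0 \ge 17280 d\sigma^2\kappa/\mu$ requirement originates. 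The consensus terms $\tfrac{\alpha_t L^2}{2n}\sum_i\|\xit-\oxt\|^2$ are the delicate piece: rather than bounding them pathwise, I feed them into Lemma \ref{lm:consensus-str-cvx}, which gives $\E[\exp(\nu K_t \cdot c \sum_i\|\xit-\oxt\|^2)] \le \exp(\bigO(\nu K_t \alpha_t^2))$ after summing the geometric-in-$\lambda$ series (using Lemma \ref{lm:technical-result}); since the consensus MGF factor and the main MGF factor are not independent, I will need Hölder's (or Cauchy–Schwarz) inequality to split $\E[\exp(A+B)] \le \E[\exp(2A)]^{1/2}\E[\exp(2B)]^{1/2}$, which doubles constants and is the reason the stated $t_0$ thresholds carry the extra factor-of-two slack, and the reason for the $\nu \le \mu/(432\sigma^2\kappa^2)$ constraint coming from the $\osigma^2$ term in $S_k,D_k$.

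The third step assembles the recursion into the hypothesis of Lemma \ref{lm:mgf-bound-str-cvx}: the accumulated constant, $1/(t+t_0)$, and $1/(t+t_0)^2$ contributions give $C_1, C_2$, while the $(1+t_0)^6\|\bx^1-\bxs\|^2/(k+t_0)^6$ term in $D_k$ from Lemma \ref{lm:consensus-str-cvx} produces the higher-order $C_j$ with $j$ up to $6$ (hence needing $M \ge 6$ in Lemma \ref{lm:mgf-bound-str-cvx}). Applying Lemma \ref{lm:mgf-bound-str-cvx} yields $\E[\exp(X^{T})] \le \exp\big(\bigO(\tfrac{X^1}{(T+t_0)^a}) + \bigO(C_1) + \bigO(\tfrac{C_2}{T+t_0}) + (\text{terms decaying at least like }\tfrac{\log(T+t_0)}{(T+t_0)^a})\big)$; since $a \ge 2$ and $X^1 = \nu(1+t_0)K(f(\ox^1)-f^\star)$, the $X^1$ term is a genuine higher-order term absorbed in the $\bigO(\cdot)$ remark in the theorem. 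Markov's inequality then gives, with probability $\ge 1-\delta/2$, that $\nu K_T(f(\oxT)-f^\star) \le \log(2/\delta) + (\text{the deterministic MGF bound})$, i.e. $f(\oxT)-f^\star = \bigO\big(\tfrac{\nu^{-1}\log(2/\delta) + d\sigma^2\kappa/\mu}{n(T+t_0)} + \text{h.o.t.}\big)$ after dividing by $\nu K_T = \nu(T+t_0)K$. Finally, for each user $i$, $f(x_i^T) - f^\star \le f(\oxT)-f^\star + \langle\nabla f(\oxT), x_i^T - \oxT\rangle + \tfrac L2\|x_i^T-\oxT\|^2$; averaging over $i$, using $\sum_i \langle \nabla f(\oxT), x_i^T-\oxT\rangle = \langle\nabla f(\oxT),\sum_i(x_i^T-\oxT)\rangle = 0$, and controlling $\tfrac1n\sum_i\|x_i^T-\oxT\|^2$ in high probability via Markov on Lemma \ref{lm:consensus-str-cvx} (costing the other $\delta/2$), produces the second term $\tfrac{\lambda^2 L(1+L)(n\sigma^2+\|\nbfs\|^2(1+\kappa^2)/(1-\lambda))}{(1-\lambda)n(T+t_0)^2}$ and completes the bound via a union bound.

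The main obstacle I anticipate is the coupled MGF management in step two: the consensus gap and the optimality gap share the same randomness, so one cannot simply multiply their MGFs, and the Hölder split must be done carefully enough that (a) the contraction factor $(1 - a'/(t+t_0))$ on $X^t$ survives the doubling with $a'$ still in $(1,2]$, and (b) the small-$\nu$ conditions remain compatible with the step-size choice $a = 6/\mu$. A secondary difficulty is bookkeeping the higher-order terms: the $(k+t_0)^{-6}$ tail in Lemma \ref{lm:consensus-str-cvx}, once summed against $\lambda^{t-k}$ and fed through Lemma \ref{lm:mgf-bound-str-cvx}, generates several $(T+t_0)^{-a}$-type contributions whose constants must be verified not to dominate the claimed leading behavior — this is routine but error-prone, and is exactly what the ``$\bigO(\cdot)$ hides some higher-order terms'' caveat in the statement is absorbing.
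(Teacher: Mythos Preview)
Your proposal is correct and follows essentially the same route as the paper: descent inequality $+$ strong convexity to get a contraction on $F_t = n(t+t_0)(f(\oxt)-f^\star)$, MGF manipulation with Lemma~\ref{lm:noise-properties} for the noise, a H\"older split to decouple the optimality-gap and consensus-gap MGFs (the latter handled by Lemma~\ref{lm:consensus-str-cvx}), then Lemma~\ref{lm:mgf-bound-str-cvx}, Markov, and a union bound. Two small deviations worth noting: (i) the paper uses a \emph{time-varying} H\"older split with $p = 1 + \tfrac{\alpha_t\mu}{4}$, $q = 1 + \tfrac{4}{\alpha_t\mu}$ rather than Cauchy--Schwarz, precisely so that $pb_t \le 1$ and the contraction survives---your ``doubling'' intuition would fail here since $2b_t > 1$, but you correctly flag this as the delicate point; (ii) your passage from $f(\oxt)$ to per-user via $\sum_i\langle\nabla f(\oxt), x_i^T-\oxt\rangle = 0$ is actually cleaner than the paper's, which instead uses Young's inequality on the cross term to get $f(\xit)-f^\star \le 2(f(\oxt)-f^\star) + L\|\xit-\oxt\|^2$.
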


Theorem \ref{thm:main-dsgd-str-cvx} establishes HP convergence of \dsgd to the global minimum, for smooth strongly convex costs and time-varying step-size. In Appendix \ref{sup:str-cvx} we provide the full bound, containing additional higher-order terms. Next, we discuss multiple aspects of our results.

\emph{Centralized SGD and linear speed-up.} Compared to centralized \sgd, which achieves the HP rate $\bigO\Big(\frac{1}{T} + \frac{1}{T^2}\Big)$, e.g., \cite{liu2024revisiting}, the rate in Theorem \ref{thm:main-dsgd-str-cvx} also consists of two terms of different order, namely $\bigO\Big(\frac{1}{nT} + \frac{1}{T^2} \Big)$. Similarly to non-convex results, the leading term shows a linear speed-up in the number of users, again matching the corresponding MSE guarantees, e.g., \cite{pmlr-v119-koloskova20a}. 

\emph{Network effect and transient time.} We can again see that network connectivity only affects higher-order terms. Moreover, the rate in Theorem \ref{thm:main-dsgd-str-cvx} implies a transient time of \dsgd in the HP sense, of order $\bigO\Big(\max\Big\{\frac{n}{1-\lambda}, \frac{1}{(1-\lambda)^2}\Big\}\Big)$. This is \emph{strictly sharper} than the transient time of \dsgd implied by MSE rates, of order $\bigO\Big(\frac{n}{(1-\lambda)^2}\Big)$, see Table II in \cite{unified-refined}, further highlighting the tightness of our results and improved analysis. 

\emph{Effects of noise and heterogeneity.} While heterogeneity only affects higher-order terms (through $\|\nbfs\|^2$ and $\|\bx^1 - \bxs\|^2$, see Appendix \ref{sup:str-cvx} for the full bound), we can see that the noise affects both the leading and higher-order terms, stemming from the time-varying step-size schedule and the constant $\nu^{-1} = \mathcal{O}\Big(\frac{\sigma^2\kappa^2}{\mu}\Big)$. Similarly to the discussion after Theorem \ref{thm:main-non-cvx}, the effect of noise on the leading term can be mitigated by employing a carefully selected fixed step-size. For ease of exposition, we omit the results using a fixed step-size.    

\emph{Comparison with \citet{dsgd-high-prob}.} The authors in \cite{dsgd-high-prob} establish HP convergence of \dsgd for the more general case of P\L{} costs, providing a rate of $\bigO\big(\frac{1}{T}\big)$, using the same time-varying step-size schedule as in Theorem \ref{thm:main-dsgd-str-cvx}. Crucially, the authors require two very strong assumptions: $(i)$ \emph{path-wise uniformly bounded gradients}, i.e., $\|\nabla f_i(\xit)\| \leq G$, for some $G > 0$ and all $t \geq 1$, almost surely; $(ii)$ \emph{asymptotically vanishing noise}, i.e., for all $i \in [n]$ and any $t \geq 1$, they require the following 
\begin{equation*}
    \E\lbr \exp\bigg(\frac{\|z_i^t \|^2}{\alpha_t^2\sigma_i^2} \bigg) \: \big\vert \: \Ft \rbr \leq \exp(1),
\end{equation*} where $\alpha_t = \frac{a}{t+t_0}$ is the step-size. Both conditions are difficult to satisfy, as $(i)$ is a path-wise relaxation of the uniformly bounded gradient condition (which is not satisfied for strongly convex costs), while $(ii)$ implies that the noise is $\frac{\sigma_i}{t+t_0}$-sub-Gaussian at time $t$, meaning that it vanishes at rate $\bigO\big(\frac{1}{t+t_0}\big)$.\footnote{Markov's inequality implies that a $\frac{\sigma}{t+t_0}$-sub-Gaussian random variable $X$ satisfies $\Prob\Big(X^2 \leq \frac{\sigma^2(\log(\nicefrac{1}{\delta})+1)}{(t+t_0)^2}\Big) \geq 1 - \delta$, for any $\delta \in (0,1)$. Similarly, Jensen's inequality implies $\E[X^2] \leq \frac{\sigma^2}{(t+t_0)^2}$.} Crucially, \citet{dsgd-high-prob} are unable to establish linear speed-up, even though they impose much stronger assumptions on the noise and cost. On the other hand, we show optimal convergence rate with linear speed-up using standard assumptions, with sharper transient times compared to MSE results. These improvements stem not only from a tighter analysis and Lemma \ref{lm:noise-properties}, but also Lemmas \ref{lm:consensus-str-cvx} and \ref{lm:mgf-bound-str-cvx}, which allow us to remove the bounded heterogeneity condition \textbf{(A5)} and ensure linear speed-up is achieved, see Appendix \ref{sup:lin-speed-up} for a further discussion.

\emph{On the step-size.} Throughout this section we used the step-size $\alpha_t \propto t^{-1}$, known to be optimal for strongly convex costs, e.g., \cite{rakhlin2012making}. We extend the analysis in Appendix \ref{sup:gen-step} to a general step-size $\alpha_t \propto t^{-\eta}$, $\eta \in (1/2,1]$, establishing the HP rate $\bigO\Big(\frac{\log(\nicefrac{2d}{\delta})}{nt^{2\eta-1}} + \frac{1}{t^{3\eta-1}}\Big)$, providing an important insight: for strongly convex costs the rate of decay in $n$ is always optimal and independent of the step-size. The reader is referred to Appendix \ref{sup:gen-step} for details.

\section{Proof Outlines and Discussion}\label{sec:proof-and-discuss}

In this section we briefly outline proof sketches of the main results and provide further discussions.

\textit{Proof sketch of Theorem \ref{thm:main-non-cvx}.} Using Lemma \ref{lm:descent-inequality}, rearranging and summing up the first $T$ terms, we get
\begin{align}\label{eq:descent-sum}
    &\sum_{t \in [T]}\frac{\alpha_t}{2}\|\nabla f(\oxt)\|^2 \leq \Delta_f - \sum_{t \in [T]}\alpha_t\langle \nabla f(\oxt),\ozt \rangle \nonumber \\ 
    &+ L\sum_{t \in [T]}\alpha_t^2\|\ozt\|^2 + \frac{L^2}{2}\sum_{t\in[T]}\frac{\alpha_t}{n}\sum_{i \in [n]}\|\xit - \oxt\|^2.
\end{align} We use Lemma \ref{lm:consensus-bound} to control the last term on the RHS, while to deal with $\sum_{t \in [T]}\alpha_t\langle \nabla f(\oxt),\ozt \rangle$ and remove the need for bounded gradients, we use Lemma \ref{lm:noise-properties} and the ``offset trick'', e.g., \cite{li2020high,liu2023high,armacki2026sharp}, i.e., we subtract $\sum_{t \in [T]}\frac{9\sigma^2\alpha_t^2\|\nabla f(\oxt)\|^2}{4n}$ from both sides of \eqref{eq:descent-sum}, ensuring that the effects of the inner product is absorbed by the left-hand side. The rest of the proof relies on Lemma \ref{lm:noise-properties}, some technical results and a careful selection of the step-size, see Appendix \ref{sup:non-conv} for details. 

\textit{Proof sketch of Theorem \ref{thm:main-dsgd-str-cvx}.} Using Lemma \ref{lm:descent-inequality}, properties of strongly convex functions, and defining $F^{t} \coloneqq n(t+t_0)\big(f(\oxt)-f^\star\big)$ and $A_t \coloneqq \alpha_t(t+t_0+1)$, we get
\begin{align}\label{eq:proof-sketch-str-cvx}
    F^{t+1} &\leq (1-\alpha_t\mu)\frac{t+t_0+1}{t+t_0}F^t - A_t\langle \nabla f(\oxt),\ozt\rangle \nonumber \\
    &+ \alpha_tA_tnL\|\ozt\|^2 + \frac{A_tL^2}{2}\sum_{i \in [n]}\|\xit - \oxt\|^2.
\end{align} We use \eqref{eq:proof-sketch-str-cvx}, Lemma \ref{lm:consensus-str-cvx}, and a careful analysis to show that the MGF of $F^{t}$ satisfies the conditions of Lemma \ref{lm:mgf-bound-str-cvx}, which we can then apply. The proof is completed by using these results to show that both optimality and consensus gaps are small with high probability, see Appendix \ref{sup:str-cvx} for details.

\textit{Comparison with centralized works.} Compared to works studying HP convergence of centralized \sgd, e.g., \cite{harvey2019tight,pmlr-v206-bajovic23a,liu2023high,liu2024revisiting}, we make several contributions. First, we face the challenge of controlling the MGF of the consensus gap, stemming from the decentralized nature of the algorithm. To that end, we provide Lemma \ref{lm:consensus-bound} for non-convex, as well as the improved Lemma \ref{lm:consensus-str-cvx} for strongly convex costs, which removes the bounded heterogeneity condition \textbf{(A5)}. Next, we establish Lemma \ref{lm:noise-properties}, which shows the variance reduction benefits of decentralized methods in the HP sense and is of independent interest when studying HP guarantees in decentralized settings. Finally, we provide a novel result on the MGF of the optimality gap of strongly convex costs (more broadly, ``almost decreasing'' processes), in the form of Lemma \ref{lm:mgf-bound-str-cvx}. This result provides a tighter control on the MGF compared to state-of-the-art bounds in \cite{harvey2019tight,pmlr-v206-bajovic23a,liu2024revisiting} and is, in addition to Lemma \ref{lm:noise-properties}, paramount to ensuring linear speed-up is achieved for strongly convex costs. In particular, following the same approach as in \cite{harvey2019tight,pmlr-v206-bajovic23a}, one can show that the MGF of the quantity of interest is uniformly bounded, i.e., that for some $B > 0$ and all $t \geq 1$ and $\nu \in (0,B]$
\begin{equation}\label{eq:harvey-mgf}
    \E[\exp(\nu F^t)] \leq \exp\Big(\frac{\nu}{B}\Big),    
\end{equation} where $F^t = n(t+t_0)\big(f(\oxt) - f^\star\big)$. Using \eqref{eq:harvey-mgf} and Markov's inequality, setting $\nu = B$, one can show that
\begin{equation}
    \Prob\bigg(f(\oxt) - f^\star \leq \frac{\log(\nicefrac{1}{\delta}) + 1}{nB(t+t_0)}\bigg) \geq 1 - \delta,
\end{equation} losing the linear speed-up, as $B = \bigO(1/n)$, which we show in Appendix \ref{sup:lin-speed-up}. On the other hand, Lemma \ref{lm:mgf-bound-str-cvx} provides a much sharper bound on the MGF, ensuring that linear speed-up is achieved. Similar observations hold with respect to \cite{liu2024revisiting}, see Appendix \ref{sup:lin-speed-up} for details.

\textit{On the dimension dependence.} Our bounds contain a factor of $\log(d)$, introducing a mild dependence on the problem dimension not present in either MSE results in \cite{pmlr-v119-koloskova20a}, or HP bounds in \cite{dsgd-high-prob}. This dependence stems from Lemma \ref{lm:noise-properties}, where we show that variance-reduction is achieved in the HP sense, with $\log(d)$ appearing in the exponent bounding the MGF of $\ozt$. While mild, it is not clear if the dimension dependence for the average of sub-Gaussian vectors can be fully removed, noted also by \citet{jin2019short}. For a further discussion on sub-Gaussian vectors and dimension dependence, see Appendix \ref{sup:dimension}.

\section{Numerical Experiments}\label{sec:num}

\begin{figure*}[!ht]
\centering
\begin{tabular}{cc}
\includegraphics[scale=0.45]{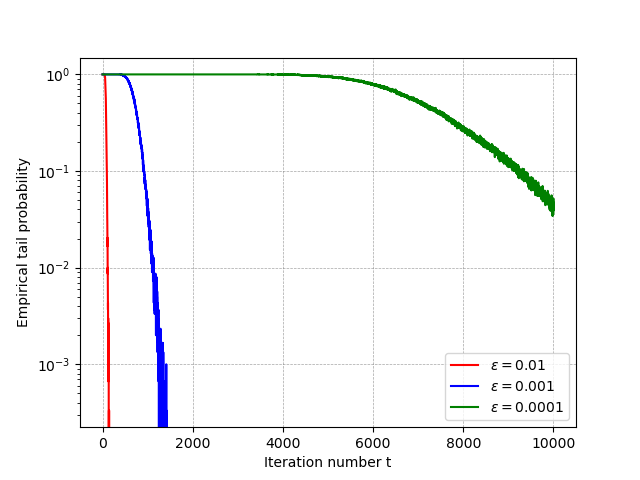}
&
\includegraphics[scale=0.45]{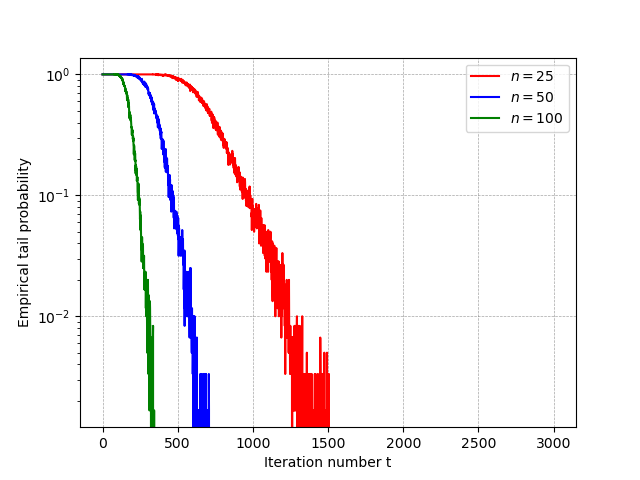}
\end{tabular}
\caption{Performance of \dsgd in the HP sense. The left figure presents performance over a fixed network of $n = 30$ users, for varying values of threshold $\varepsilon = \{10^{-2},10^{-3},10^{-4}\}$. The right figure presents performance for fixed value of threshold $\varepsilon = 10^{-3}$ and varying network size, with $n = \{25,50,100\}$ users. We can see that the empirical tail probability consistently decays exponentially fast, with faster decay over larger networks, indicating that linear speed-up is achieved.}
\label{fig:synthetic}
\end{figure*}

In this section we provide numerical experiments to evaluate our theoretical results. We consider two sets of experiments, strongly convex costs on synthetic data and non-convex costs on real data. For full details on the experimental setup and additional results, the reader is referred to Appendix \ref{sup:num}.

\textit{Synthetic data.} We consider a strongly convex quadratic problem, where the local cost of each user is given by $f_i(x) = \frac{1}{2}x^\top A_ix + b_i^\top x$, where $A_i \in \R^{d \times d}$ is positive definite, making each $f_i$ strongly convex with unbounded gradients. The users communicate over an undirected network $G$ corresponding to a random Erd\H{o}s--R\'enyi graph with connectivity parameter $p = 0.8$, while the weight matrix $W \in \R^{n \times n}$ is computed using the Metropolis-Hastings weight scheme, e.g., \cite{XIAO200465}. When queried by user $i \in [n]$, the \sfo returns $\git = A_i\xit + b_i + \zit$, where $\zit \in \R^d$ is a zero-mean Gaussian random vector, making the noise consistent with assumption \textbf{(A4)}. We use the \dsgd method outlined in Algorithm \ref{alg:dsgd}, with step-size $\alpha_t = \frac{1}{t+1}$ and shared initialization $x^1_i = 0$, for all $i \in [n]$. We aim to test two facets of our theory: (i) \emph{exponentially decaying tails} - whether the tail probability decays at an exponential scale and (ii) \emph{linear speed-up} - whether the tail probability decays faster as the number of users increases. To measure the performance, we use the empirical tail probability 
\begin{equation*}
    \Prob^t_{n,\varepsilon} = \frac{1}{R}\sum_{r \in [R]}\mathbb{I}\bigg(\frac{1}{n}\sum_{i \in [n]}\|x_i^{t,r} - x^\star\|^2 > \varepsilon\bigg),
\end{equation*} where $x_i^{t,r} \in \R^d$ is the model of user $i$ in iteration $t$ and run $r$, with $x^\star = \argmin_{x \in \R^d}f(x)$ being the solution of the global problem and $\mathbb{I}(A)$ being the indicator of event $A$. The results are presented in Figure \ref{fig:synthetic}. The left plot presents the tail decay rate for a fixed network of $n = 30$ users and varying values of accuracy threshold, given by $\varepsilon = \{10^{-2},10^{-3},10^{-4}\}$, while the right plot presents the tail decay for a fixed value of accuracy threshold $\varepsilon = 10^{-3}$ and varying number of users $n = \{25,50,100\}$. We can see from the two plots that: (i) the tail probability induced by \dsgd consistently decays at an exponential rate, for all values of $\varepsilon$ and (ii) the decay is faster for larger networks and linear speed-up is achieved, as predicted by our theory.

\textit{Real data.} We consider a non-convex logistic regression problem, where the local cost of each user is given by $f_i(x) = \frac{1}{m_i}\sum_{r \in [m_i]} \log\big(1+\exp(-y_{i,r}\langle h_{i,r}, x\rangle)\big) + \eta \sum_{k \in [d]} \frac{[x]_k^2}{1+[x]_k^2}$, where $h_{i,r} \in \mathbb{R}^{d}$ and $y_{i,r} \in \{+1, - 1\}$ are the feature vector and associated label, $\eta > 0$ is a user-specified penalty parameter, while $[x]_k$ is the $k$-th component of vector $x$. We use the ``\texttt{mushroom}'', ``\texttt{a9a}'' and ``\texttt{ijcnn1}'' datasets from the LIBSVM library \cite{chang2011libsvm}. To measure the performance, we evaluate the empirical tail probability with respect to the averaged gradient norm-squared, i.e., we compute $\Prob^t_{n,\epsilon} = \frac{1}{R}\sum_{r \in [R]}\mathbb{I}\big(G^{t,r}_n > \epsilon\big)$, where $G^{t,r}_n = \frac{1}{nt}\sum_{\tau \in [t]}\sum_{i \in [n]}\|\nabla f(x_i^{\tau,r})\|^2$. Due to space constraints, the reader is kindly referred to Appendix \ref{sup:num} for full results and further details.  

\section{Conclusion}\label{sec:conc}

We studied convergence in HP of a variant of \dsgd under light-tailed noise, showing that it is guaranteed to converge in the HP sense under the same conditions on the cost as in the MSE sense, achieving order-optimal rates and linear speed-up for both non-convex and strongly convex costs. Compared to existing works, we relax strong assumptions like uniformly bounded gradients and asymptotically vanishing noise, while simultaneously showing improved rates and sharper transient times than the ones obtained from MSE rates. Future work includes extending our results to costs satisfying the P\L{} condition, incorporating bias-correction mechanisms like GT and exact diffusion to remove bounded heterogeneity, and considering heavy-tailed noise.

\section*{Acknowledgements}

The authors would like to thank Haoyuan Cai (EPFL) for help with some of numerical experiments.

\bibliography{bibliography}
\bibliographystyle{icml2026}

\newpage
\appendix
\onecolumn

\section{Introduction}

Appendix contains results omitted from the main body. Section \ref{sup:mse-vs-hp} discusses the differences between MSE and HP guarantees, Section \ref{sup:useful-results} collects some important facts used in our proofs, Section \ref{sup:technical} provides some technical results, Section \ref{sup:analysis-setup} defines some notions used in the analysis, Sections \ref{sup:non-conv} and \ref{sup:str-cvx} provide proofs for non-convex and strongly convex costs, respectively, Section \ref{sup:num} provides additional numerical results, Section \ref{sup:tran-time} derives the transient times, Section \ref{sup:lin-speed-up} provides a discussion on the novel results needed for achieving linear speed-up, Section \ref{sup:time-var-net} discusses the extension to time-varying networks, Section \ref{sup:dimension} contains further discussion on dimension dependence in our bounds and the notion of sub-Gaussianity used in our work, while Section \ref{sup:gen-step} provides additional results for strongly convex costs under a more general step-size schedule.

\section{On MSE and HP Results}\label{sup:mse-vs-hp}

Although both HP and MSE results establish convergence, the nature of these guarantees is very different. In particular, MSE results quantify the \emph{average behaviour across many runs} of an algorithm, while HP results quantify the behaviour of \emph{an individual run}. As mentioned briefly in the introduction, this distinction is very important in huge-scale applications like LLM training, where it is often impossible to perform more than a single training run, both resource- and time-wise. For example, \citet{deepak2021llm-training} estimate that the GPT-3 model with 175 billion parameters was trained for 34 days, while the variant with 1 trillion parameters can be trained for approximately 84 days. This is further exacerbated by phenomena like heavy-tailed noise, frequently observed during training of deep learning models and transformers, e.g., \cite{zhang2020adaptive,simsekli2019tail}, which can cause the performance of an individual run to significantly deviate from the average performance. Hence, strong guarantees with respect to an individual run are very important in such applications. While MSE results can be used to provide guarantees with respect to an individual run, they can be quite loose compared to HP bounds. In particular, if we have a MSE bound of the form $\E[X_t^2] = 1/t$, Chebyshev's inequality implies $\Prob\big(X_t^2 > \nicefrac{1}{\delta t}\big) \leq \delta$, which shows an inversely proportional dependence on the confidence level $\delta \in (0,1)$. On the other hand, HP results establish guarantees of the type $\mathbb{P}(X_t^2 > \log(\nicefrac{1}{\delta})/t) \leq \delta$, having a much milder, logarithmic dependence on $1/\delta$. Equivalently, for any threshold $\epsilon > 0$, MSE + Chebyshev implies $\mathbb{P}(X_t^2 > \epsilon) \leq \nicefrac{1}{t\epsilon}$, while HP gives $\mathbb{P}(X_t^2 > \epsilon) \leq \exp(-t\epsilon)$, providing a much sharper bound on the tail probability. Since our goal is to show that $X_t^2 \leq \epsilon$ (e.g., if $X_t = \|x^t-x^\star\|$, or if $X_t = \min_{k \in [t]}\|\nabla f(x^k)\|$), then $\Prob(X_t^2 > \epsilon)$ represents the \emph{probability of failure}, with MSE + Chebyshev implying that the probability of failure decays polynomially in $t$, while HP bound implies that the probability of failure decays exponentially fast in $t$, giving much stronger guarantees on the convergence of an individual run.

\section{Useful Inequalities}\label{sup:useful-results}

In this section we outline some well-known inequalities and results used in our proofs. We start with Jensen's inequality for convex/concave functions.

\begin{proposition}[Jensen's inequality]\label{prop:Jensen}
    Let $X \in \R$ be an integrable random variable. Then, for any convex function $h: \R \mapsto \R$, we have
    \begin{equation*}
        h(\E[X]) \leq \E[h(X)]. 
    \end{equation*} Moreover, if $h$ is concave, the reverse inequality holds, i.e., we then have
    \begin{equation*}
        \E[h(X)] \leq h(\E[X]).
    \end{equation*}
\end{proposition}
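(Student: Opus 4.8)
The statement to establish is the classical Jensen's inequality, so the plan is to reduce everything to the existence of a supporting line for a convex function. First I would recall the standard fact that a convex function $h:\R\mapsto\R$ admits a subgradient at every point: for any fixed $x_0\in\R$ the difference quotients $\frac{h(x)-h(x_0)}{x-x_0}$ are monotone in $x$, so the one-sided derivatives $h'_-(x_0)$ and $h'_+(x_0)$ exist and are finite, and any $c\in[h'_-(x_0),h'_+(x_0)]$ satisfies the supporting-line inequality $h(x)\geq h(x_0)+c(x-x_0)$ for all $x\in\R$. (Convexity also guarantees $h$ is continuous, hence Borel measurable, so $h(X)$ is a genuine random variable.)

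Next I would specialize $x_0=\E[X]$, which is a well-defined real number since $X$ is integrable, and pick a corresponding subgradient $c$. Evaluating the supporting-line inequality at $x=X(\omega)$ gives the pointwise (in fact almost sure, and here even sure) bound
\begin{equation*}
    h(X)\geq h(\E[X])+c\big(X-\E[X]\big).
\end{equation*}
The right-hand side is an integrable random variable with expectation $h(\E[X])+c\big(\E[X]-\E[X]\big)=h(\E[X])$ by linearity of expectation. Since $h(X)$ is bounded below by an integrable random variable, its expectation is well-defined in $(-\infty,+\infty]$, and monotonicity of expectation yields $\E[h(X)]\geq h(\E[X])$, which is the convex case.

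For the concave case I would simply apply the convex case to $-h$, which is convex, obtaining $\E[-h(X)]\geq -h(\E[X])$, and negate. I do not expect a genuine obstacle here; the only points requiring a little care are the justification of the supporting line (handled by monotonicity of difference quotients of convex functions) and the observation that $\E[h(X)]$ makes sense even without assuming $h(X)$ integrable, which follows from the lower bound by an integrable random variable; if one additionally assumes $h(X)\in L^1$, this last point is immediate.
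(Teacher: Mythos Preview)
Your argument is correct and is the standard supporting-line proof of Jensen's inequality. However, the paper does not actually supply a proof of this proposition: it is stated in the ``Useful Inequalities'' section as a well-known fact and left unproved, so there is no paper proof to compare against.
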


\begin{proposition}[Cauchy-Schwartz inequality]\label{prop:Cauchy-Schwartz}
    For any $a,b \in \R^d$, we have
    \begin{equation*}
        |\langle a,b \rangle| \leq \|a\|\|b\|. 
    \end{equation*}
\end{proposition}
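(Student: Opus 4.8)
\textit{Proof sketch of Proposition \ref{prop:Cauchy-Schwartz}.} The plan is to exploit the nonnegativity of a squared norm together with the bilinearity of the inner product. First I would dispose of the degenerate case: if $b = 0$, then $\langle a,b\rangle = 0$ and $\|b\| = 0$, so both sides of the claimed inequality vanish and equality holds trivially. Henceforth assume $b \neq 0$, so that $\|b\|^2 > 0$.

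For the main case, consider the function $q:\R \mapsto \R$ defined by $q(t) \coloneqq \|a - tb\|^2$. Since $q(t)$ is a squared norm, we have $q(t) \geq 0$ for every $t \in \R$. Expanding via bilinearity and symmetry of $\langle \cdot,\cdot\rangle$ gives
\begin{equation*}
    q(t) = \|a\|^2 - 2t\langle a,b\rangle + t^2\|b\|^2.
\end{equation*}
I would then substitute the minimizing value $t^\star = \dfrac{\langle a,b\rangle}{\|b\|^2}$ (well-defined since $\|b\|^2 > 0$), which yields
\begin{equation*}
    0 \leq q(t^\star) = \|a\|^2 - \frac{\langle a,b\rangle^2}{\|b\|^2}.
\end{equation*}
Rearranging gives $\langle a,b\rangle^2 \leq \|a\|^2\|b\|^2$, and taking nonnegative square roots of both sides produces $|\langle a,b\rangle| \leq \|a\|\|b\|$, as desired. (An equivalent route is to note that the quadratic $q(t) \geq 0$ for all $t$ forces its discriminant $4\langle a,b\rangle^2 - 4\|a\|^2\|b\|^2$ to be nonpositive.)

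There is essentially no serious obstacle here; the only points requiring mild care are separating out the $b=0$ case so the division by $\|b\|^2$ is legitimate, and remembering that passing to square roots introduces the absolute value on the left-hand side, which is exactly the form stated.
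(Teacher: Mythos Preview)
Your proof is correct and is the standard discriminant/minimizing-quadratic argument for Cauchy--Schwarz. The paper itself does not prove this proposition: it is listed in the appendix among ``well-known inequalities and results used in our proofs'' and is simply stated without justification, so there is no approach to compare against beyond noting that your argument supplies what the paper omits.
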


As a consequence of the Cauchy-Schwartz inequality, we have the following result.

\begin{proposition}[Young's inequality]\label{prop:Young}
    For any $a,b \in \R$ and any $\epsilon > 0$, we have
    \begin{equation*}
        ab \leq \frac{\epsilon a^2}{2} + \frac{b^2}{2\epsilon}.
    \end{equation*} Furthermore, for any $\theta > 0$, we have
    \begin{equation*}
        (a+b)^2 \leq (1+\theta)a^2 + (1+\theta^{-1})b^2.
    \end{equation*}
\end{proposition}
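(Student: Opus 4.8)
The plan is to establish the first inequality by completing the square and then obtain the stated consequence by expanding the square and invoking the first inequality with a suitable choice of the free parameter. Both steps are purely algebraic and involve no probabilistic reasoning, so the proof is short; the only ``obstacle'' is bookkeeping the parameter substitution so that the coefficients come out exactly as claimed.

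For the first inequality, I would start from the elementary observation that for any $a,b \in \R$ and any $\epsilon > 0$, the quantity $\big(\sqrt{\epsilon}\,a - \tfrac{b}{\sqrt{\epsilon}}\big)^2$ is non-negative. Expanding this square gives
\begin{equation*}
    0 \leq \Big(\sqrt{\epsilon}\,a - \tfrac{b}{\sqrt{\epsilon}}\Big)^2 = \epsilon a^2 - 2ab + \frac{b^2}{\epsilon},
\end{equation*}
and rearranging yields $2ab \leq \epsilon a^2 + \tfrac{b^2}{\epsilon}$. Dividing by $2$ produces exactly $ab \leq \tfrac{\epsilon a^2}{2} + \tfrac{b^2}{2\epsilon}$, which is the first claim.

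For the consequence, I would expand $(a+b)^2 = a^2 + 2ab + b^2$ and then control the cross term $2ab$ using the inequality just established. Applying $2ab \leq \epsilon a^2 + \tfrac{b^2}{\epsilon}$ with the specific choice $\epsilon = \theta$ gives $2ab \leq \theta a^2 + \theta^{-1} b^2$. Substituting this bound back yields
\begin{equation*}
    (a+b)^2 = a^2 + 2ab + b^2 \leq a^2 + \theta a^2 + b^2 + \theta^{-1}b^2 = (1+\theta)a^2 + (1+\theta^{-1})b^2,
\end{equation*}
which is precisely the stated consequence, completing the argument. The main thing to watch is simply ensuring the parameter in the cross-term bound is identified with $\theta$ (rather than a generic $\epsilon$) so that the factors $(1+\theta)$ and $(1+\theta^{-1})$ emerge directly; there is no deeper difficulty, since the result is a standard completing-the-square (equivalently, weighted AM--GM) estimate valid for all real scalars.
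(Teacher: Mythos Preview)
Your proof is correct; both steps are the standard completing-the-square and substitution arguments, carried out cleanly. Note that the paper itself states this proposition without proof (it is listed among well-known inequalities, with only the remark that it is also called the Peter--Paul inequality), so there is no paper proof to compare against.
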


Young's inequality is also known as the Peter-Paul inequality.

\begin{proposition}[H\"{o}lder's inequality]\label{prop:Holder}
    For any random variables $X,Y \in \R$ and any $p,q \in [1,\infty]$, such that $\frac{1}{p} + \frac{1}{q} = 1$, we have
    \begin{equation*}
        \E|XY| \leq \sqrt[p]{\E|X|^p}\sqrt[q]{\E|Y|^q}.
    \end{equation*}
\end{proposition}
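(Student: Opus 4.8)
The plan is to reduce the claim to the generalized Young's inequality via a normalization argument. First I would dispose of the degenerate and boundary cases. If either $\E|X|^p = 0$ or $\E|Y|^q = 0$ (for finite $p,q$), then $X = 0$ or $Y = 0$ almost surely, so $\E|XY| = 0$ and the bound holds trivially; likewise, if either factor on the right-hand side is infinite the assertion is vacuous. For the boundary case $p = 1$, $q = \infty$ (and symmetrically $p = \infty$, $q = 1$), the quantity $\sqrt[q]{\E|Y|^q}$ is interpreted as the essential supremum $\|Y\|_\infty$, and the bound follows from $|XY| \le |X|\,\|Y\|_\infty$ almost surely together with monotonicity of the expectation. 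It thus remains to treat the main case $1 < p,q < \infty$ with both right-hand factors finite and strictly positive.

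For the main case I would first establish the generalized Young's inequality: for any $a,b \ge 0$ and conjugate exponents $p,q$,
\[
    ab \le \frac{a^p}{p} + \frac{b^q}{q}.
\]
When $a,b > 0$ this follows from concavity of the logarithm, equivalently from Proposition~\ref{prop:Jensen} applied to the concave map $h = \log$: writing $\log(ab) = \tfrac{1}{p}\log(a^p) + \tfrac{1}{q}\log(b^q)$ and invoking concavity with weights summing to $\tfrac{1}{p} + \tfrac{1}{q} = 1$ gives $\log(ab) \le \log\!\big(\tfrac{a^p}{p} + \tfrac{b^q}{q}\big)$; exponentiating yields the claim, and the case $a = 0$ or $b = 0$ is immediate. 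I note that the excerpt records only the $L_2$ specialization of Young's inequality (Proposition~\ref{prop:Young}), so supplying this general form is the one auxiliary step that must be carried out explicitly.

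With this in hand, I would normalize. Set $\|X\|_p := (\E|X|^p)^{1/p}$ and $\|Y\|_q := (\E|Y|^q)^{1/q}$, both finite and positive by the reductions above, and define $U := |X|/\|X\|_p$, $V := |Y|/\|Y\|_q$, so that $\E[U^p] = \E[V^q] = 1$. Applying the generalized Young inequality pointwise gives $UV \le \tfrac{U^p}{p} + \tfrac{V^q}{q}$ almost surely, and taking expectations yields
\[
    \E[UV] \le \frac{\E[U^p]}{p} + \frac{\E[V^q]}{q} = \frac{1}{p} + \frac{1}{q} = 1.
\]
Since $\E[UV] = \E|XY| / (\|X\|_p \|Y\|_q)$, multiplying through by $\|X\|_p \|Y\|_q$ recovers $\E|XY| \le \|X\|_p \|Y\|_q = \sqrt[p]{\E|X|^p}\,\sqrt[q]{\E|Y|^q}$, which is the assertion.

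The only genuine obstacle is establishing the generalized Young's inequality, for the reason noted above; once it is available through concavity of the logarithm, the remainder is the standard normalization-then-expectation argument and presents no difficulty. Care is also needed in the boundary cases to interpret the exponent $\infty$ correctly as the essential supremum, but this is routine.
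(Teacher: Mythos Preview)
Your argument is correct and is in fact the standard textbook proof of H\"{o}lder's inequality: handle degenerate and boundary cases, establish the generalized Young inequality via concavity of the logarithm, normalize so that both $p$-th and $q$-th moments equal one, apply Young pointwise, and take expectations. There are no gaps.

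As for comparison with the paper: the paper does not prove this proposition at all. It is listed in the section of ``Useful Inequalities'' as a well-known fact and is simply stated, followed by a remark that the choice $p=q=2$ recovers Cauchy--Schwarz. So your proposal supplies a proof where the paper offers none; there is nothing to compare on the level of approach.
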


The coefficients $p,q \in [1,\infty]$ are known as H\"{o}lder coefficients. Note that H\"{o}lder's inequality recovers Cauchy-Schwartz inequality for $p = q = 2$. We have the following useful consequence of H\"{o}lder's inequality.

\begin{proposition}\label{prop:gen-Holder}
    For any $n \in \N$ and random variables $X_i \in \R$, $i \in [n]$, we have
    \begin{equation*}
        \E\Big[\prod_{i \in [n]}|X_i|\Big] \leq \prod_{i \in [n]}\sqrt[n]{\E|X_i|^n}.
    \end{equation*}
\end{proposition}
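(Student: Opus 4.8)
The plan is to argue by induction on $n$, with the two-variable Hölder inequality (Proposition~\ref{prop:Holder}) as the only real tool. The case $n=1$ is a tautology, since $\sqrt[1]{\E|X_1|} = \E|X_1|$, and the case $n=2$ is precisely Proposition~\ref{prop:Holder} applied with the conjugate exponents $p=q=2$. Hence it suffices to establish the inductive step.

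For the inductive step, I would assume the claim for some $n \geq 1$ and take $n+1$ random variables $X_1,\dots,X_{n+1}$. First, peel off the last factor and apply Proposition~\ref{prop:Holder} to the pair of random variables $|X_{n+1}|$ and $\prod_{i\in[n]}|X_i|$ with the conjugate exponents $p = n+1$ and $q = \frac{n+1}{n}$ (note $\frac1p + \frac1q = \frac{1}{n+1} + \frac{n}{n+1} = 1$), obtaining
$$\E\Big[\prod_{i\in[n+1]}|X_i|\Big] \;\leq\; \big(\E|X_{n+1}|^{n+1}\big)^{\frac{1}{n+1}}\,\Big(\E\Big[\prod_{i\in[n]}|X_i|^{\frac{n+1}{n}}\Big]\Big)^{\frac{n}{n+1}}.$$
Next, apply the induction hypothesis to the $n$ random variables $Y_i \coloneqq |X_i|^{\frac{n+1}{n}}$, which gives $\E\big[\prod_{i\in[n]}|X_i|^{\frac{n+1}{n}}\big] \leq \prod_{i\in[n]}\big(\E|X_i|^{n+1}\big)^{\frac{1}{n}}$. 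Substituting this into the previous display and using $\frac{1}{n}\cdot\frac{n}{n+1} = \frac{1}{n+1}$ collapses the right-hand side to $\prod_{i\in[n+1]}\big(\E|X_i|^{n+1}\big)^{\frac{1}{n+1}}$, which is exactly the desired bound for $n+1$ variables.

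I do not expect any serious obstacle; the subtleties are purely of a bookkeeping nature. One should either restrict to the case where all $n$-th absolute moments $\E|X_i|^n$ are finite, or simply note that otherwise the right-hand side is $+\infty$ and the inequality is vacuous; similarly, if some $\E|X_i|^n = 0$ then $X_i = 0$ almost surely and both sides vanish. The only genuine choice in the argument is the pair of Hölder exponents $\big(n+1,\tfrac{n+1}{n}\big)$ in the inductive step — this is precisely the choice that, after raising to the power $\frac{n}{n+1}$, makes the powered variables $|X_i|^{(n+1)/n}$ match the form required by the induction hypothesis.
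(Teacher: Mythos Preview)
Your proof is correct; the induction with H\"older exponents $p=n+1$, $q=\frac{n+1}{n}$ and the substitution $Y_i=|X_i|^{(n+1)/n}$ is exactly the standard way to derive the $n$-fold H\"older inequality from the two-variable version. The paper does not actually supply a proof of this proposition---it merely records it as ``a useful consequence of H\"older's inequality''---so your argument fills in precisely what the paper leaves implicit, and in the expected manner.
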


Next, we state a useful result from \cite{ran-improved}.

\begin{proposition}[\cite{ran-improved}, Lemma 21]\label{prop:Ran}
    For any $c,t_0 > 0$ and $0 \leq a \leq b$, we have
    \begin{equation*}
        \prod_{k = a}^b\Big(1 - \frac{c}{k+t_0} \Big) \leq \frac{(a+t_0)^c}{(b+1+t_0)^c}.
    \end{equation*}
\end{proposition}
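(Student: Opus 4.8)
The plan is to establish the bound $\prod_{k=a}^{b}\bigl(1-\tfrac{c}{k+t_0}\bigr)\le \tfrac{(a+t_0)^c}{(b+1+t_0)^c}$ by passing to logarithms and comparing a sum to an integral. First I would dispose of the degenerate cases: if any factor $1-\tfrac{c}{k+t_0}$ is nonpositive (which can happen when $c$ is large relative to $k+t_0$), the left-hand side is $\le 0 <$ the right-hand side, so the inequality holds trivially; hence I may assume $c < k+t_0$ for all $k\in\{a,\dots,b\}$, so every factor lies in $(0,1)$ (in particular the empty-product case $a>b$ is also immediate). With all factors positive, taking logarithms reduces the claim to
\begin{equation*}
    \sum_{k=a}^{b}\log\Bigl(1-\frac{c}{k+t_0}\Bigr) \le c\log(a+t_0) - c\log(b+1+t_0).
\end{equation*}

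The key step is the elementary inequality $\log(1-x)\le -x$ for $x\in[0,1)$, applied with $x=\tfrac{c}{k+t_0}$, which gives $\log\bigl(1-\tfrac{c}{k+t_0}\bigr)\le -\tfrac{c}{k+t_0}$. Summing over $k$ from $a$ to $b$, it then suffices to show $-c\sum_{k=a}^{b}\tfrac{1}{k+t_0}\le -c\log\tfrac{b+1+t_0}{a+t_0}$, i.e.
\begin{equation*}
    \sum_{k=a}^{b}\frac{1}{k+t_0} \ge \log(b+1+t_0) - \log(a+t_0) = \int_{a+t_0}^{b+1+t_0}\frac{dx}{x}.
\end{equation*}
This last inequality is the standard lower bound of an integral by a left Riemann sum: since $x\mapsto 1/x$ is decreasing, $\tfrac{1}{k+t_0}\ge \int_{k+t_0}^{k+1+t_0}\tfrac{dx}{x}$ for each $k$, and summing over $k=a,\dots,b$ telescopes the integral to $\int_{a+t_0}^{b+1+t_0}\tfrac{dx}{x}$. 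Chaining the two inequalities and exponentiating yields the claim.

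I do not expect a genuine obstacle here — the only point requiring care is the sign/positivity bookkeeping in the first step, namely making sure the reduction to logarithms is legitimate and that the cases where some factor is nonpositive or the product is empty are handled separately before one ever writes $\log$. Everything after that is the textbook $\log(1-x)\le -x$ estimate combined with the monotone comparison of a sum and an integral for the harmonic-type series.
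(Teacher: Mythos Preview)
The paper does not supply a proof; the proposition is quoted from \cite{ran-improved} without argument. Your main approach --- take logarithms, apply $\log(1-x)\le -x$, and bound the resulting harmonic sum below by $\int_{a+t_0}^{b+1+t_0}\tfrac{dx}{x}$ via monotonicity of $x\mapsto 1/x$ --- is the standard one and is correct once every factor lies in $(0,1)$.

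Your treatment of the degenerate case, however, has a gap. You assert that if some factor $1-\tfrac{c}{k+t_0}$ is nonpositive then the product is $\le 0$; this is false when an even number of factors are negative. Concretely, take $a=0$, $b=1$, $t_0=1$, $c=3$: the two factors are $-2$ and $-\tfrac12$, the product equals $1$, and the right-hand side is $1/27$, so the stated inequality actually \emph{fails}. Thus the proposition as written (``for any $c,t_0>0$'') is not literally true; one needs the implicit hypothesis $c<a+t_0$, which makes every factor positive. This is how the result is invoked throughout the paper (e.g.\ in the proof of Lemma~\ref{lm:mgf-bound-str-cvx}, where $t_0\ge a$ ensures $1-\tfrac{a}{k+t_0}\in(0,1)$), so the defect lies in the statement rather than in your core argument --- but your ``disposal'' of the degenerate case is not a valid step and should simply be replaced by the standing assumption that all factors are positive.
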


The following result states some important consequences of conditions \textbf{(A2)} and \textbf{(A3)}, see, e.g., \cite{bertsekas-gradient,nesterov-lectures_on_cvxopt}.

\begin{proposition}\label{prop:descent-ineq}
    Let \textup{\textbf{(A3)}} hold. Then, for any $i \in [n]$ and $x,y \in \R^d$, the following statements are true.
    \begin{enumerate}
         \item $f_i(x) \leq f_i(y) + \langle \nabla f_i(y), x-y\rangle + \frac{L}{2}\|x-y\|^2.$

         \item $f$ has $L$-Lipschitz continuous gradients.

         \item $f(x) \leq f(y) + \langle \nabla f(y), x-y\rangle + \frac{L}{2}\|x-y\|^2.$

         \item If in addition \textup{\textbf{(A2)}} holds, then for any $x \in \R^d$, we have $\|\nabla f(x)\|^2 \leq 2L(f(x)-f^\star)$.
    \end{enumerate}
\end{proposition}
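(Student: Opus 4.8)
The statement to prove is Proposition~\ref{prop:descent-ineq}, which collects standard consequences of $L$-smoothness (\textbf{A3}) and lower-boundedness (\textbf{A2}). Let me sketch how I would prove each of the four claims.

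\textit{Proof plan for Proposition~\ref{prop:descent-ineq}.} The plan is to establish each item in turn, using only the definition of $L$-Lipschitz gradients and the fundamental theorem of calculus. For item~1, I would write, for fixed $i$ and $x,y$, the identity $f_i(x) - f_i(y) = \int_0^1 \langle \nabla f_i(y + s(x-y)), x-y\rangle\, ds$, then add and subtract $\langle \nabla f_i(y), x-y\rangle$ to get $f_i(x) - f_i(y) - \langle \nabla f_i(y), x-y\rangle = \int_0^1 \langle \nabla f_i(y+s(x-y)) - \nabla f_i(y), x-y\rangle\, ds$. Applying Cauchy--Schwartz (Proposition~\ref{prop:Cauchy-Schwartz}) inside the integral and then \textbf{(A3)} gives the bound $\int_0^1 Ls\|x-y\|^2\, ds = \frac{L}{2}\|x-y\|^2$, which is exactly item~1. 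For item~2, I would note that $\nabla f = \frac{1}{n}\sum_{i\in[n]}\nabla f_i$ by definition~\eqref{eq:decentr-opt}, so $\|\nabla f(x) - \nabla f(y)\| \leq \frac{1}{n}\sum_{i\in[n]}\|\nabla f_i(x) - \nabla f_i(y)\| \leq \frac{1}{n}\sum_{i\in[n]}L\|x-y\| = L\|x-y\|$ by the triangle inequality and \textbf{(A3)}. Item~3 then follows by repeating the argument of item~1 verbatim with $f_i$ replaced by $f$, now that we know (from item~2) that $f$ itself has $L$-Lipschitz gradients.

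For item~4, I would use items~2 and~3 together with \textbf{(A2)}. Fix $x \in \R^d$ and apply item~3 with $y = x$ and the point $x - \frac{1}{L}\nabla f(x)$ in place of ``$x$'': this yields $f\big(x - \frac{1}{L}\nabla f(x)\big) \leq f(x) - \frac{1}{L}\|\nabla f(x)\|^2 + \frac{L}{2}\cdot\frac{1}{L^2}\|\nabla f(x)\|^2 = f(x) - \frac{1}{2L}\|\nabla f(x)\|^2$. Since $f^\star = \inf_z f(z) \leq f\big(x - \frac{1}{L}\nabla f(x)\big)$ by \textbf{(A2)}, rearranging gives $\frac{1}{2L}\|\nabla f(x)\|^2 \leq f(x) - f^\star$, i.e., $\|\nabla f(x)\|^2 \leq 2L(f(x)-f^\star)$, as claimed.

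None of these steps is genuinely hard; the only mild subtlety is making sure the integral representation in items~1 and~3 is justified (which requires continuous differentiability of $f_i$, a consequence of \textbf{(A3)}, or alternatively one can invoke the well-known descent lemma directly as in \cite{bertsekas-gradient,nesterov-lectures_on_cvxopt}) and, in item~4, correctly choosing the test point $x - \frac{1}{L}\nabla f(x)$ so that the quadratic upper bound is minimized. Since these are textbook facts, I would keep the proof brief and cite \cite{bertsekas-gradient,nesterov-lectures_on_cvxopt} for the standard derivations, presenting only the short argument for item~4 in detail since it is the one combining the smoothness bound with the lower-boundedness assumption.
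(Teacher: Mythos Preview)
Your proposal is correct and follows the standard textbook route; in fact you go further than the paper itself, which simply states Proposition~\ref{prop:descent-ineq} without proof and cites \cite{bertsekas-gradient,nesterov-lectures_on_cvxopt} for the derivations. Your plan to keep the argument brief and cite those same references is exactly in line with the paper's treatment.
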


Next, we state some important consequences of \textbf{(A6)}, see, e.g., \cite{nesterov-lectures_on_cvxopt}. 

\begin{proposition}\label{prop:str-cvx}
    Let \textup{(A6)} hold. Then, the following are true, for all $i \in [n]$ and $x \in \R^d$.
    \begin{enumerate}
        \item $\nabla^2f_i(x) \succeq \mu I_d$.
        \item $f$ is $\mu$-strongly convex.
        \item $\|\nabla f(x)\|^2 \geq 2\mu\big(f(x)-f^\star\big)$.
        \item $f(x) - f^\star \geq \frac{\mu}{2}\|x-x^\star\|^2$.
    \end{enumerate}
\end{proposition}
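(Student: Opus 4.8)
\textit{Proof proposal for Proposition \ref{prop:str-cvx}.} The plan is to derive all four items from the defining inequality in \textbf{(A6)}, namely $f_i(x) \geq f_i(y) + \langle \nabla f_i(y), x-y\rangle + \frac{\mu}{2}\|x-y\|^2$ for all $x,y \in \R^d$, together with twice continuous differentiability.

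For item~1, I would fix $x \in \R^d$ and a direction $y \in \R^d$, and apply the strong convexity inequality at the pair $(x + t y, x)$ for small $t > 0$, which gives $f_i(x+ty) - f_i(x) - t\langle \nabla f_i(x), y\rangle \geq \frac{\mu t^2}{2}\|y\|^2$. On the other hand, a second-order Taylor expansion (valid since $f_i$ is $C^2$) yields $f_i(x+ty) - f_i(x) - t\langle \nabla f_i(x), y\rangle = \frac{t^2}{2}\langle y, \nabla^2 f_i(x) y\rangle + o(t^2)$. Dividing both sides by $t^2/2$ and letting $t \to 0$ gives $\langle y, \nabla^2 f_i(x) y\rangle \geq \mu\|y\|^2$. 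Item~2 is immediate: averaging the strong convexity inequality for $f_i$ over $i \in [n]$ and using $f = \frac1n\sum_{i}f_i$, $\nabla f = \frac1n\sum_i \nabla f_i$ shows $f$ satisfies the same inequality with the same $\mu$, so $f$ is $\mu$-strongly convex.

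For item~3, I would use the lower bound $f(y) \geq f(x) + \langle \nabla f(x), y-x\rangle + \frac{\mu}{2}\|y-x\|^2$ (valid by item~2), and minimize the right-hand side over $y$; the unconstrained minimizer is $y = x - \frac{1}{\mu}\nabla f(x)$, and plugging it in gives $f(y) \geq f(x) - \frac{1}{2\mu}\|\nabla f(x)\|^2$ for that particular $y$, hence $f^\star \leq f(x) - \frac{1}{2\mu}\|\nabla f(x)\|^2$ is false in direction — rather, since $f(y) \geq f(x) - \frac{1}{2\mu}\|\nabla f(x)\|^2$ holds and $f^\star \le f(y)$ is the wrong direction, I instead note that the quadratic lower bound holds for \emph{all} $y$, so taking the infimum over $y$ on the left gives $f^\star = \inf_y f(y) \geq \inf_y\big[f(x) + \langle \nabla f(x), y-x\rangle + \frac{\mu}{2}\|y-x\|^2\big] = f(x) - \frac{1}{2\mu}\|\nabla f(x)\|^2$, and rearranging yields $\|\nabla f(x)\|^2 \geq 2\mu(f(x)-f^\star)$. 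For item~4, I would apply the strong convexity inequality of item~2 at the pair $(x, x^\star)$, using $\nabla f(x^\star) = 0$ (first-order optimality at the unique minimizer), to obtain $f(x) \geq f(x^\star) + \frac{\mu}{2}\|x - x^\star\|^2 = f^\star + \frac{\mu}{2}\|x-x^\star\|^2$.

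The only step with any subtlety is item~1, where one must pass from the first-order (secant-type) definition of strong convexity to the Hessian lower bound; the limiting argument above handles this cleanly using the $C^2$ assumption, and in fact one could alternatively invoke the standard equivalence of $\mu$-strong convexity with $\nabla^2 f_i \succeq \mu \eye$ for $C^2$ functions. The remaining items are routine consequences and present no real obstacle.
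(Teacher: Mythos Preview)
Your proposal is correct; these are the standard textbook derivations. The paper does not actually prove this proposition at all---it simply cites Nesterov's \emph{Lectures on Convex Optimization} for these well-known consequences of strong convexity---so your explicit arguments already go beyond what the paper provides. The only cosmetic issue is the mid-course correction in item~3; you can drop the false start and go straight to taking the infimum over $y$ of the quadratic lower bound.
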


Finally, we provide a useful martingale concentration result from \cite{jin2019short}.

\begin{proposition}\label{prop:concentration}
    Let $X_1,\ldots,X_n \in \R^d$ be random vectors and denote by $\F_n = \sigma\big(\{X_1,\ldots,X_n\}\big)$ the corresponding filtration. If for all $i \in [n]$, the vector $X_i$ satisfies $\E[X_i\:\vert\:\F_{i-1}] = 0$ and 
    \begin{equation*}
        \Prob\big(\|X_i\| > \epsilon\:\vert\:\F_{i-1}\big) \leq 2\exp\bigg(-\frac{t^2}{2\sigma_i^2}\bigg),
    \end{equation*} for some $\F_{i-1}$-measurable $\sigma_i > 0$ and all $\epsilon > 0$, then for any $\delta \in (0,1)$, we have
    \begin{equation*}
        \Prob\Big(\|\sum_{i \in [n]}X_i\|^2 > 2\sqrt{2}\sum_{i \in [n]}\sigma_i^2\log(\nicefrac{2d}{\delta})\Big) \leq \delta.    
    \end{equation*}
\end{proposition}

\section{Technical Results}\label{sup:technical}

In this section we prove Lemmas \ref{lm:noise-properties}, \ref{lm:mgf-bound-str-cvx} and provide another technical result used in the proofs. For the reader's convenience, we restate Lemmas \ref{lm:noise-properties} and \ref{lm:mgf-bound-str-cvx} below.

{
\renewcommand{\thetheorem}{3.2}
\begin{lemma}
    If \textup{\textbf{(A3)}} holds, then the following are true for any $t \geq 1$, $i \in [n]$ and $\Ft$-measurable $v \in \R^d$.
    \begin{enumerate}
        \item $\E\lbr\exp\lp\langle v,\zit \rangle\rp \: \vert \: \Ft\rbr \leq \exp\lp\frac{3\sigma_i^2\|v\|^2}{4}\rp$.
        
        \item $\E\lbr\exp\lp\langle v, \ozt \rangle \rp \: \vert\: \Ft \rbr \leq \exp\lp\frac{3\sigma^2\|v\|^2}{4n} \rp$.

        \item $\E\lbr \exp\bigg(\frac{n\| \ozt \|^2}{15\sigma^2} \bigg) \: \big\vert \: \Ft \rbr \leq 2d\exp(1)$.
    \end{enumerate}
\end{lemma}
}
\begin{proof}
\begin{enumerate}[leftmargin=*]
    \item To prove the first property, we follow steps similar to those from Lemma 1 in \cite{li2020high}. Let $y_i = \frac{\zit}{\sigma_i}$ and note that $\E\lbr\exp\lp \|y_i\|^2\rp \: \vert \: \Ft \rbr \leq \exp(1)$, from \textbf{(A3)}. Assume first that $v \in \R^d$ is such that $\|v\| \leq \frac{4}{3}$. Using the inequality $\exp(a) \leq a + \exp(\nicefrac{9a^2}{16})$, which holds for any $a \in \R$, we then have
    \begin{align*}
        \E\lbr\exp\lp\langle v,y_i \rangle\rp \: \vert \: \Ft\rbr &\leq \E\lbr \langle v,y_i \rangle + \exp\lp\frac{9\langle v,y_i \rangle^2}{16}\rp \: \big\vert \: \Ft\rbr \stackrel{(a)}{=} \E\lbr\exp\lp\frac{9\langle v,y_i \rangle^2}{16}\rp \: \big\vert \: \Ft\rbr \\
        &\stackrel{(b)}{\leq} \E\lbr\exp\lp\frac{9\|v\|^2\|y_i\|^2 }{16}\rp \: \big\vert \: \Ft\rbr \stackrel{(c)}{\leq} \lp\E\lbr\exp\lp\|y_i\|^2\rp \: \vert \: \Ft\rbr\rp^{\nicefrac{9\|v\|^2}{16}} \\
        &\stackrel{(d)}{\leq} \exp\lp \frac{9\|v\|^2}{16} \rp \leq \exp\lp \frac{3\|v\|^2}{4} \rp,
    \end{align*} where $(a)$ follows from \textbf{(A3)} and since $v$ is $\Ft$-measurable, $(b)$ follows from Proposition \ref{prop:Cauchy-Schwartz}, $(c)$ follows from the fact that $\frac{9\|v\|^2}{16} \leq 1$ and Proposition \ref{prop:Jensen}, while $(d)$ follows from $\E[\exp(\|y_i\|^2)\:\vert\:\Ft] \leq \exp(1)$. Next, if $\|v\| > \frac{4}{3}$, we have 
    \begin{align*}
        \E[\exp(\langle v,y_i\rangle) \: \vert \: \Ft] &\leq \exp\lp \frac{3\|v\|^2}{8} \rp\E\lbr\exp\lp \frac{2\|y_i\|^2}{3} \rp \: \big\vert \: \Ft \rbr \\
        &\leq \exp\lp \frac{2}{3} + \frac{3\|v\|^2}{8} \rp \leq \exp\lp \frac{3\|v\|^2}{4} \rp,
    \end{align*} where the first inequality follows by applying Proposition \ref{prop:Young} with $\epsilon = \frac{4}{3}$ and the fact that $v$ is $\Ft$-measurable, the second follows from Proposition \ref{prop:Jensen} and $\E[\exp(\|y_i\|^2)\:\vert\:\Ft] \leq \exp(1)$, while the third inequality follows from the fact that $\frac{2}{3} < \frac{3\|v\|^2}{8}$, since $\|v\| > \frac{4}{3}$. Combining both cases, we get $\E[\exp\lp\langle v,y_i \rangle \rp \:\vert\:\Ft] \leq \exp\lp\frac{3\|v\|^2}{4}\rp$, for any $\Ft$-measurable vector $v \in \R^d$. The proof is completed by applying this inequality to $\langle v,\zit \rangle = \langle \sigma_iv,y_i\rangle$.

    \item Recall that $\ozt = \frac{1}{n}\sum_{i \in [n]}\zit$ and $\sigma^2 = \frac{1}{n}\sum_{i \in [n]}\sigma_i^2$. We then have, for any $\Ft$-measurable $v \in \R^d$
    \begin{align*}
        \E[\exp(\langle v,\ozt \rangle] &= \E\bigg[\exp\bigg(\frac{1}{n}\sum_{i \in [n]}\langle v,\zit \rangle\bigg)\bigg] \stackrel{(a)}{=} \prod_{i \in [n]}\E\bigg[\exp\bigg(\Big\langle \frac{v}{n},\zit \Big\rangle\bigg)\bigg] \\
        &\stackrel{(b)}{\leq} \prod_{i \in [n]}\exp\bigg(\frac{3\sigma_i^2\|v\|^2}{4n^2} \bigg) \stackrel{(c)}{=} \exp\bigg(\frac{3\sigma^2\|v\|^2}{4n} \bigg),
    \end{align*} where $(a)$ follows from the fact that, conditioned on $\Ft$, the noise across users is independent according to \textbf{(A3)}, $(b)$ follows from the first part of the proof, while $(c)$ follows from the definition of $\sigma^2$.  
    
    \item From \textbf{(A4)} we know that conditioned on $\Ft$, the vectors $z_1^t,\ldots,z_n^t$ are independent and zero-mean. Moreover, for each $i \in [n]$ and any $\epsilon > 0$, we have
    \begin{align*}
        \Prob\big(\|\zit\| > &\epsilon \: \vert \: \Ft\big) = \Prob\bigg(\exp\bigg(\frac{\|\zit\|^2}{2\sigma_i^2}\bigg) > \exp\bigg(\frac{\epsilon^2}{2\sigma_i^2}\bigg) \: \Big\vert \: \Ft\bigg) \stackrel{(a)}{\leq} \exp\bigg(-\frac{\epsilon^2}{2\sigma_i^2}\bigg)\E\bigg[\exp\bigg(\frac{\|\zit\|^2}{2\sigma_i^2}\bigg) \: \Big\vert \: \Ft\bigg] \\
        &\stackrel{(b)}{\leq} \exp\bigg(-\frac{\epsilon^2}{2\sigma_i^2}\bigg)\Bigg(\E\bigg[\exp\bigg(\frac{\|\zit\|^2}{\sigma_i^2}\bigg)\:\Big\vert\:\Ft\bigg]\Bigg)^{1/2} \stackrel{(c)}\leq \exp\bigg(\frac{1}{2} - \frac{\epsilon^2}{2\sigma_i^2}\bigg) \stackrel{(d)}{\leq} 2\exp\bigg(-\frac{\epsilon^2}{2\sigma_i^2}\bigg), 
    \end{align*} where $(a)$ follows from Markov's inequality, $(b)$ follows from Proposition \ref{prop:Jensen}, in $(c)$ we use \textbf{(A4)}, while $(d)$ follows form the fact that $\exp(1/2) \leq 2$. Therefore, vectors $z_1^t,\ldots,z_n^t$ satisfy the conditions of Proposition \ref{prop:concentration} with respect to $\Ft$, and it follows that, for any $\delta \in (0,1)$
    \begin{equation*}
        \Prob\bigg(\|\ozt\|^2 > \frac{2\sqrt{2}\sigma^2\log(\nicefrac{2d}{\delta})}{n} \bigg) \leq \delta,
    \end{equation*} or equivalently, for any $\epsilon > 0$, we have
    \begin{equation}\label{eq:avg-noise-tail}
        \Prob\big(\|\ozt\|^2 > \epsilon\big) \leq 2d\exp\bigg(-\frac{n\epsilon}{2\sigma^2\sqrt{2}} \bigg) \leq 2d\exp\bigg(-\frac{n\epsilon}{3\sigma^2}\bigg).
    \end{equation} Next, define $Y = \frac{\sqrt{n}\|\ozt\|}{\sigma\sqrt{3}}$. Using the layer-cake formula for the expectation of a non-negative random variable, we then have, for any integer $p \geq 1$
    \begin{align}\label{eq:moment-bdd}
        \E\big[\|Y\|^{2p} \big] &= \int_{0}^\infty \Prob\big(\|Y\|^{2p} > y\big) dy \stackrel{(a)}{=} p\int_0^\infty s^{p-1}\Prob\big(\|Y\|^2 > s \big)ds \nonumber \\
        &= p\int_0^\infty s^{p-1}\Prob\bigg(\|\ozt\|^2 > \frac{3\sigma^2 s}{n} \bigg)ds \stackrel{(b)}{\leq} 2dp\int_0^\infty s^{p-1} \exp(-s)ds \stackrel{(c)}{\leq} 2dp^{p+1},
    \end{align} where $(a)$ follows by introducing the substitution $y = s^p$, $(b)$ follows from \eqref{eq:avg-noise-tail}, while $(c)$ follows from the definition of $\Gamma$ function and the fact that $\Gamma(p) \leq p^p$. For any $c > 0$, we then have
    \begin{align*}
        \E\bigg[\exp\bigg(\frac{cn\|\ozt\|^2}{3\sigma^2}\bigg)\bigg] &= 1 + \sum_{k = 1}^\infty\frac{c^k\E\big[\|Y\|^{2k}\big]}{k!} \stackrel{(i)}{\leq} 1 + \sum_{k = 1}^{\infty}\frac{2dk(ck)^k}{k!} \stackrel{(ii)}{\leq} 2d\bigg(1 + \frac{1}{e}\sum_{k = 1}^{\infty}k(ec)^k \bigg),
    \end{align*} where $(i)$ follows from \eqref{eq:moment-bdd}, while $(ii)$ follows from $k! \geq e(k/e)^k$. Next, note that the sequence $\sum_{k = 1}^\infty kp^k$ converges if $p < 1$, with $\sum_{k = 1}^{\infty}kp^k = \frac{p}{(1-p)^2}$, therefore, choosing $c < \frac{1}{e}$ in the relation above, we get
    \begin{align}\label{eq:semi-final}
        \E\bigg[\exp\bigg(\frac{cn\|\ozt\|^2}{3\sigma^2}\bigg)\bigg] \leq 2d\bigg(1 + \frac{c}{(1-ec)^2}\bigg).
    \end{align} It can be verified that $\frac{c}{(1-ec)^2} \leq e-1$ for any $c \leq \frac{e-1}{e^2}$. Choosing $c = \frac{1}{5}$ and plugging into \eqref{eq:semi-final} completes the proof.
\end{enumerate}
\end{proof}

We next restate Lemma \ref{lm:mgf-bound-str-cvx}.

{
\renewcommand{\thetheorem}{3.6}
\begin{lemma}
    Let $\{X^t\}_{t \geq 2}$ be a sequence of random variables initialized by a deterministic $X^1 > 0$, such that, for some $M \in \N$, $a,t_0,C_i > 0$, $i \in [M]$ and every $t \geq 1$ 
    \begin{equation}\label{eq:almost-decrease}
        \E[\exp(X^{t+1})] \leq \E\bigg[\exp\bigg(\Big(1-\frac{a}{t+t_0}\Big)X^t + \sum_{i \in [M]}\frac{C_i}{(t+t_0)^i} \bigg)\bigg].
    \end{equation} If $a \in (1,2]$ and $t_0 \geq a$, we then have
    \begin{align*}
        \E[\exp(X^{t+1})] &\leq \exp\bigg(\frac{(t_0+1)^aX_1}{(t+1+t_0)^a} + \frac{2^aC_1}{a} + \frac{2^aC_3\log(t+1+t_0)}{(t+1+t_0)^a} \bigg) \\
        &\times \exp\bigg( \frac{2^aC_2/(a-1)}{t+1+t_0} + \sum_{j = 4}^M\frac{2^at_0^{3-j}C_j}{(j-3)(t+1+t_0)^a} \bigg).
    \end{align*} 
\end{lemma}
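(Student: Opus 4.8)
The plan is to collapse the stochastic recursion into a deterministic scalar one, unroll it, and then carry out the integral bookkeeping. First I would pull the deterministic factor out of \eqref{eq:almost-decrease}, writing $\E[\exp(X^{t+1})]\le\exp\big(\sum_{i\in[M]}C_i/(t+t_0)^i\big)\,\E\big[\exp(X^t)^{\,1-a/(t+t_0)}\big]$. Because $a\in(1,2]$ and $t_0\ge a$, for every $t\ge1$ we have $t+t_0\ge1+a>a$, so the exponent $p_t:=1-\frac{a}{t+t_0}$ lies in $(0,1)$. Applying Jensen's inequality (Proposition \ref{prop:Jensen}) to the concave map $z\mapsto z^{p_t}$ and the nonnegative variable $\exp(X^t)$ gives $\E[\exp(X^t)^{p_t}]\le(\E[\exp(X^t)])^{p_t}$. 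Setting $Y_t:=\E[\exp(X^t)]$, $y_t:=\log Y_t$ (so $y_1=X^1$ since $X^1$ is deterministic) and taking logarithms yields the deterministic recursion $y_{t+1}\le(1-\frac{a}{t+t_0})y_t+b_t$ with $b_t:=\sum_{i\in[M]}C_i/(t+t_0)^i>0$.

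Next I would unroll this linear recursion. Since the multipliers $1-\frac{a}{t+t_0}$ are nonnegative, a routine induction gives
\[
 y_{t+1}\le\Big(\prod_{k=1}^t\big(1-\tfrac{a}{k+t_0}\big)\Big)y_1+\sum_{k=1}^t\Big(\prod_{j=k+1}^t\big(1-\tfrac{a}{j+t_0}\big)\Big)b_k,
\]
and, because $y_1=X^1>0$ and $b_k>0$, bounding every product via Proposition \ref{prop:Ran} (the estimate from \cite{ran-improved}) turns the right-hand side into $\frac{(1+t_0)^aX_1}{(t+1+t_0)^a}+\frac{1}{(t+1+t_0)^a}\sum_{k=1}^t(k+1+t_0)^a b_k$.

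The remaining task is to estimate $\sum_{k=1}^t(k+1+t_0)^ab_k$. Using $k+1+t_0\le2(k+t_0)$ (valid as $k+t_0\ge1$) I would bound $(k+1+t_0)^a\le2^a(k+t_0)^a$, so the $i$-th term of $b_k$ contributes at most $2^aC_i\sum_{k=1}^t(k+t_0)^{a-i}$, and then split on $i$ via integral comparison. For $i=1$ the integrand $(x+t_0)^{a-1}$ is increasing, so $\sum_{k=1}^t(k+t_0)^{a-1}\le\int_1^{t+1}(x+t_0)^{a-1}dx\le(t+1+t_0)^a/a$. For $i=2$ the integrand is nonincreasing, so $\sum_{k=1}^t(k+t_0)^{a-2}\le\int_0^t(x+t_0)^{a-2}dx\le(t+1+t_0)^{a-1}/(a-1)$ (using $a-1\in(0,1]$). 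For $i\ge3$, since $a\le2$ I first reduce the exponent, $(k+t_0)^{a-i}\le(k+t_0)^{2-i}$, and integral comparison then gives $\sum_{k=1}^t(k+t_0)^{-1}\le\log\frac{t+t_0}{t_0}\le\log(t+1+t_0)$ when $i=3$ (here $t_0>1$ is used), and $\sum_{k=1}^t(k+t_0)^{2-i}\le\int_0^t(x+t_0)^{2-i}dx\le t_0^{3-i}/(i-3)$ when $i\ge4$. Plugging these four estimates back, dividing through by $(t+1+t_0)^a$, and exponentiating gives exactly the stated bound once the single exponential is split into a product.

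The proof is mostly bookkeeping after the reduction; the one thing that genuinely needs care is keeping $p_t=1-\frac{a}{t+t_0}$ strictly inside $(0,1)$, which both validates the Jensen step and keeps the unrolled inequality pointing the right way, and matching the correct integral estimate (increasing versus nonincreasing integrand, polynomial versus logarithmic primitive) to each exponent $i$. The split $i=1,2,3,\ge4$ is precisely what manufactures the four term types in the conclusion, and the hypotheses $a\in(1,2]$ and $t_0\ge a$ are used exactly where each of these estimates is invoked.
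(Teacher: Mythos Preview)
Your proposal is correct and follows essentially the same route as the paper's proof: Jensen's inequality to linearize the MGF recursion into a scalar recursion for $\log\E[\exp(X^t)]$, unrolling via Proposition~\ref{prop:Ran}, the factor $(k+1+t_0)^a\le 2^a(k+t_0)^a$, and then the four-way split $i=1,2,3,\ge 4$ handled by integral (Darboux) comparison. The only cosmetic difference is that for $i\ge 3$ you explicitly reduce the exponent via $(k+t_0)^{a-i}\le(k+t_0)^{2-i}$ before integrating, whereas the paper leaves this implicit.
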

}

\begin{proof}
    Starting from \eqref{eq:almost-decrease}, taking the logarithm, defining $Y_{t} \coloneqq \log\E[\exp(X^t)]$ and $b_t = 1-\frac{a}{t+t_0}$, we then have
    \begin{align}\label{eq:useful-recursion}
        Y^{t+1} \leq \sum_{i \in [M]}\frac{C_i}{(t+t_0)^i} + \log\E[\exp(b_tX^t)] &\leq \sum_{i \in [M]}\frac{C_i}{(t+t_0)^i} + \log\bigg[\Big(\E[\exp(X^t)]\Big)^{b_t}\bigg] \nonumber \\ 
        &= b_tY^t + \sum_{i \in [M]}\frac{C_i}{(t+t_0)^i},
    \end{align} where the second inequality follows from the fact that $b_t \in (0,1)$ and Proposition \ref{prop:Jensen}. Unrolling the recursion \eqref{eq:useful-recursion} and noting that $Y_1 = X_1$, since $X_1 > 0$ is deterministic, we get
    \begin{align}\label{eq:technical-intermed}
        Y^{t+1} &\leq X^1\prod_{k \in [t]}b_k + \sum_{i \in [M]}C_i\sum_{k \in [t]}\frac{1}{(k+t_0)^i}\prod_{s = k + 1}^tb_s \nonumber \\ 
        &\leq \frac{(t_0+1)^aX^1}{(t+1+t_0)^a} + \sum_{i \in [M]}C_i\sum_{k \in [t]}\frac{1}{(k+t_0)^i}\times\frac{(k+1+t_0)^a}{(t+1+t_0)^a} \nonumber \\
        &\leq \frac{(t_0+1)^aX^1}{(t+1+t_0)^a} + \sum_{i \in [M]}\frac{2^aC_i}{(t+1+t_0)^a}\sum_{k \in [t]}(k+t_0)^{a-i},
    \end{align} where the second inequality follows from Proposition \ref{prop:Ran}, while the third inequality follows from the fact that $\Big(\frac{k+1+t_0}{k+t_0}\Big)^a \leq 2^a$. We now proceed to analyze $\sum_{k \in [t]}(k+t_0)^{a-i}$, for different values of $i \in [M]$. Using Darboux sums and the fact that $a \in (1,2]$, it can be readily verified that
    \begin{equation}\label{eq:technical-cases}
        \sum_{k \in [t]}(k+t_0)^{a-i} \leq \begin{cases}
            \frac{(t+1+t_0)^a}{a}, & i = 1 \\
            \frac{(t+1+t_0)^{a-1}}{a-1}, & i = 2 \\
            \ln(t+t_0+1), & i = 3 \\
            \frac{t_0^{3-i}}{i-3}, & i \geq 4
        \end{cases}.
    \end{equation} Plugging \eqref{eq:technical-cases} into \eqref{eq:technical-intermed}, we get
    \begin{align*}
        Y^{t+1} &\leq \frac{(t_0+1)^aX^1}{(t+1+t_0)^a} + \frac{2^aC_1}{a} + \frac{2^aC_2}{(a-1)(t+t_0+1)}  \frac{2^aC_3\log(t+t_0+1)}{(t+1+t_0)^a} + \sum_{j = 4}^M\frac{2^at_0^{3-j}C_j}{(j-3)(t+1+t_0)^a}.
    \end{align*} Taking the exponent on both sides completes the proof.
\end{proof}

To complete this section, we provide a useful technical result.

\begin{lemma}\label{lm:technical-result}
    Let $\lambda \in [0,1)$ and $\alpha_t = \frac{a}{(t+t_0)^c}$, where $a,t_0 > 0$ and $c \geq \nicefrac{1}{2}$. If $t_0 \geq \frac{2c-1+\lambda}{1-\lambda}$, then for any $t \geq 1$, we have
    \begin{equation*}
        \sum_{k \in [t]}\alpha_k\lambda^{t-k} \leq \frac{3\alpha_t}{1-\lambda}.
    \end{equation*}
\end{lemma}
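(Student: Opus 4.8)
I will prove the bound by induction on $t$, exploiting the one-step recursion satisfied by the partial sums. Writing $S_t \coloneqq \sum_{k \in [t]}\alpha_k\lambda^{t-k}$, one immediately has $S_1 = \alpha_1$ and $S_t = \alpha_t + \lambda S_{t-1}$ for all $t \geq 2$. The base case $S_1 = \alpha_1 \leq \frac{3\alpha_1}{1-\lambda}$ holds trivially since $\frac{3}{1-\lambda} > 1$. For the inductive step, assuming $S_{t-1} \leq \frac{3\alpha_{t-1}}{1-\lambda}$, the recursion gives $S_t \leq \alpha_t + \frac{3\lambda\alpha_{t-1}}{1-\lambda}$, and a one-line rearrangement shows this is at most $\frac{3\alpha_t}{1-\lambda}$ exactly when $\frac{\alpha_{t-1}}{\alpha_t} \leq \frac{2+\lambda}{3\lambda}$. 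Thus the whole proof reduces to verifying this single ratio inequality under the hypothesis on $t_0$ (the degenerate case $\lambda = 0$ being handled separately and trivially, since then $S_t = \alpha_t \leq 3\alpha_t$).

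To control the ratio, note that since $\alpha_t = \frac{a}{(t+t_0)^c}$ we have, for $t \geq 2$, $\frac{\alpha_{t-1}}{\alpha_t} = \big(1 + \frac{1}{t-1+t_0}\big)^c \leq \big(1 + \frac{1}{t_0}\big)^c$. Moreover, the hypothesis $t_0 \geq \frac{2c-1+\lambda}{1-\lambda}$ is equivalent to $1 + \frac{1}{t_0} \leq \frac{2c}{2c-1+\lambda}$ (both sides positive, using $c \geq \frac12$). Hence it suffices to establish the clean inequality $\big(\frac{2c}{2c-1+\lambda}\big)^{c} \leq \lambda^{-1/2} \leq \frac{2+\lambda}{3\lambda}$, and then chain everything together to close the induction.

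The key step — the part that needs a small trick — is $\big(\frac{2c}{2c-1+\lambda}\big)^{c} \leq \lambda^{-1/2}$. I would write $2c-1+\lambda = 2c\big(\tfrac{1}{2c}\lambda + \tfrac{2c-1}{2c}\big)$, recognize the bracket as a convex combination of $\lambda$ and $1$ with nonnegative weights $\tfrac{1}{2c}$ and $1-\tfrac{1}{2c}$ (nonnegativity of the second weight is precisely where $c \geq \frac12$ is used), and apply weighted AM--GM to get $\tfrac{1}{2c}\lambda + \tfrac{2c-1}{2c} \geq \lambda^{1/(2c)}$. This yields $\frac{2c}{2c-1+\lambda} \leq \lambda^{-1/(2c)}$, and raising to the power $c$ gives $\big(\frac{2c}{2c-1+\lambda}\big)^{c} \leq \lambda^{-1/2}$. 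Finally, $\lambda^{-1/2} \leq \frac{2+\lambda}{3\lambda}$ is, after multiplying through by $\lambda$, the same as $3\sqrt{\lambda} \leq 2+\lambda$, i.e. $(\sqrt{\lambda}-1)(\sqrt{\lambda}-2) \geq 0$, which holds for every $\lambda \in [0,1)$.

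\textbf{Expected main obstacle.} The only nontrivial point is obtaining the $c$-independent bound $\big(\frac{2c}{2c-1+\lambda}\big)^{c} \leq \lambda^{-1/2}$: a crude estimate such as $\log(1+x)\leq x$ is too lossy for $\lambda$ near $0$ and would force a stronger condition on $t_0$ than the one assumed. The concavity-of-$\log$ / weighted AM--GM argument above is what makes the stated constant work, and it is also the place where the assumption $c \geq \frac12$ is genuinely needed. Everything else — the recursion for $S_t$, the base case, and the rearrangement in the inductive step — is routine.
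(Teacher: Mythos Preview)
Your proof is correct and takes a genuinely different route from the paper. The paper factors out $\alpha_t$, substitutes $s=t-k$, and bounds each term via $1\pm x\leq e^{\pm x}$ to obtain $\sum_{s=0}^{t-1}e^{-\tilde\lambda s/2}$ (this is where the condition $t_0\geq\frac{2c-1+\lambda}{1-\lambda}$ enters, as it ensures $\tilde\lambda-\frac{c}{1+t_0}\geq\frac{\tilde\lambda}{2}$), and then controls the resulting geometric-type sum by an integral to get $\frac{3}{1-\lambda}$. By contrast, you exploit the one-step recursion $S_t=\alpha_t+\lambda S_{t-1}$ and reduce the entire lemma to the single ratio inequality $\alpha_{t-1}/\alpha_t\leq(2+\lambda)/(3\lambda)$, which you then establish via the weighted AM--GM estimate $\frac{1}{2c}\lambda+\frac{2c-1}{2c}\geq\lambda^{1/(2c)}$. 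Your approach is more elementary (no exponential bounds, no integral comparison) and makes transparent where the constant $3$ and the precise threshold on $t_0$ come from; the paper's approach is perhaps more routine to discover and would adapt more mechanically to other step-size schedules. Both arguments use $c\geq\tfrac12$ in an essential way: the paper needs it implicitly so that the threshold on $t_0$ is nonnegative, while in your proof it is exactly what makes the AM--GM weight $\tfrac{2c-1}{2c}$ nonnegative.
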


\begin{proof}
    Using the definition of $\alpha_t$, we note that
    \begin{equation*}
        \sum_{k \in [t]}\alpha_k\lambda^{t-k} = \alpha_t\sum_{k \in [t]}\frac{\alpha_k}{\alpha_t}\lambda^{t-k} = \alpha_t\sum_{k \in [t]}\lambda^{t-k}\bigg(\frac{t+t_0}{k+t_0}\bigg)^c = \alpha_t\sum_{k \in [t]}\lambda^{t-k}\bigg(1 + \frac{t-k}{k+t_0}\bigg)^c.  
    \end{equation*} Next, denote by $\tlambda \coloneqq 1 - \lambda \in (0,1]$ and use the substitution $s = t-k$, to get
    \begin{align}\label{eq:tech-res-proof1}
        \sum_{k \in [t]}\alpha_k\lambda^{t-k} &= \alpha_t\sum_{s = 0}^{t-1}(1 - \tlambda)^{s}\bigg(1 + \frac{s}{t-s+t_0}\bigg)^c \leq \alpha_t\sum_{s =0}^{t-1}\exp\bigg(- \tlambda s + \frac{cs}{t-s+t_0} \bigg) \nonumber \\
        &\leq \alpha_t\sum_{s = 0}^{t-1}\exp\bigg(-s\bigg(\tlambda - \frac{c}{1+t_0} \bigg)\bigg) \leq \alpha_t\sum_{s = 0}^{t-1}\exp\bigg(-\frac{\tlambda s}{2}\bigg),
    \end{align} where we used $1 \pm x \leq \exp(\pm x)$ in the first, the fact that $t-s \geq 1$ in the second and the choice of $t_0$ in the third inequality. Next, we use Darboux sums, to get
    \begin{equation}\label{eq:tech-res-proof2}
        \sum_{s = 0}^{t-1}\exp\bigg(-\frac{\tlambda s}{2}\bigg) = 1 + \sum_{s = 1}^{t-1}\exp\bigg(-\frac{\tlambda s}{2}\bigg) \leq 1 + \int_0^{t-1}\exp\bigg(-\frac{\tlambda s}{2}\bigg)ds \leq 1 + \frac{2}{\tlambda} \leq \frac{3}{\tlambda}.
    \end{equation} Plugging \eqref{eq:tech-res-proof2} into \eqref{eq:tech-res-proof1} completes the proof.
\end{proof}

\section{Analysis Setup}\label{sup:analysis-setup}

In this section we define some notation useful for the analysis. To begin, let $\git \coloneqq \nabla \ell(\xit;\xi_i^t)$ denote the stochastic gradient of user $i$ at time $t$. We can then represent the update rule \eqref{eq:dsgd-update} as
\begin{equation}\label{eq:dsgd-update-g}
    \xitp = \sum_{j \in \calN_i}w_{ij}\big(\xit - \alpha_t\git\big).
\end{equation} It can immediately be seen that $\zit = \git - \nabla f_i(\xit)$. Next, define the network average stochastic gradient $\ogt \coloneqq \frac{1}{n}\sum_{i \in [n]}\git$ and $\onab f_t \coloneqq \frac{1}{n}\sum_{i \in [n]}\nabla f_i(\xit)$, which represents the network average of user's gradients evaluated at their local models. It can then be seen that $\ogt = \onab f_t + \ozt$. Combining the definition of the network average model, the fact that the weight matrix is doubly stochastic and \eqref{eq:dsgd-update-g}, it follows that $\oxtp = \oxt - \alpha_t\ogt$. 

We now introduce some notation useful for the analysis of decentralized methods, see, e.g., \cite{nedic-subgrad,jakovetic-unification,ran-improved}. Let $\bx^t \coloneq col(x_1^t,\ldots,x_n^t) \in \R^{nd}$ denote the column vector stacking users' local models. Using this notation, we can then represent the update rule \eqref{eq:dsgd-update-g} compactly as 
\begin{equation}\label{eq:dsgd-vector}
    \bx^{t+1} = \bW(\bx^t - \alpha_t \bg^t), 
\end{equation} where $\bW = W \otimes I_d \in \R^{nd \times nd}$, $\otimes$ denotes the Kronecker product and $I_d \in \R^{d \times d}$ denotes the $d$-dimensional identity matrix, while $\bg^t \coloneqq col(g^t_1,\ldots,g^t_n)$. Next, define the matrix $J \coloneqq \frac{1}{n}\mathbf{1}_n\mathbf{1}_n^\top \in \R^{n \times n}$, where $\mathbf{1}_n \in \R^n$ is the vector of all ones. The matrix $J$ represent the ``ideal'' consensus matrix, where all users can communicate with each other.\footnote{This is equivalent to the client-server setup in terms of the update rule, as the server averages models from all clients in each iteration.} The interaction between matrices $W$ and $J$ represents an important part of any decentralized algorithm and we now list some known properties, see, e.g., \cite{jakovetic-unification,ran-improved} and references therein.

\begin{proposition}\label{prop:mixing-mx}
    Let \textup{\textbf{(A1)}} hold. Then, the following are true.
    \begin{enumerate}
        \item $W\mathbf{1}_n = J\mathbf{1}_n = \mathbf{1}_n$.
        \item $\lambda \coloneqq \|W - J\| \in [0,1)$.
        \item $WJ = JW = J$.
    \end{enumerate}
\end{proposition}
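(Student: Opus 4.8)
The plan is to handle the three claims in order, the first and third being immediate algebra and the second carrying the real content. For claim 1, I would note that $W$ doubly stochastic means its rows sum to one, i.e.\ $W\mathbf{1}_n = \mathbf{1}_n$, while a direct computation gives $J\mathbf{1}_n = \tfrac{1}{n}\mathbf{1}_n(\mathbf{1}_n^\top\mathbf{1}_n) = \mathbf{1}_n$; I would also record the transposed identities $\mathbf{1}_n^\top W = \mathbf{1}_n^\top$ (columns sum to one) and $\mathbf{1}_n^\top J = \mathbf{1}_n^\top$, which are used below. For claim 3, I would write $WJ = \tfrac{1}{n}(W\mathbf{1}_n)\mathbf{1}_n^\top = \tfrac{1}{n}\mathbf{1}_n\mathbf{1}_n^\top = J$ and symmetrically $JW = \tfrac{1}{n}\mathbf{1}_n(\mathbf{1}_n^\top W) = J$, and also $J^2 = J$, so that $J$ is the orthogonal projector onto $\mathrm{span}\{\mathbf{1}_n\}$.

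For claim 2 the key idea is to analyze $W$ through the decomposition $\R^n = E_0 \oplus E_1$, where $E_0 \coloneqq \mathrm{span}\{\mathbf{1}_n\}$ and $E_1 \coloneqq \mathbf{1}_n^\perp$. Using the identities from claim 1, I would first check that $E_0$ and $E_1$ are invariant under both $W$ and $W^\top$: invariance of $E_0$ is $W\mathbf{1}_n = \mathbf{1}_n$, and for $v \in E_1$ one has $\mathbf{1}_n^\top(Wv) = (W^\top\mathbf{1}_n)^\top v = \mathbf{1}_n^\top v = 0$, so $Wv \in E_1$ (likewise for $W^\top$). Hence both subspaces are invariant under the symmetric PSD matrix $W^\top W$, whose spectrum therefore splits as a multiset union $\{1\} \uplus \mathrm{spec}(W^\top W|_{E_1})$, the isolated $1$ coming from $W^\top W\mathbf{1}_n = \mathbf{1}_n$. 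Since $W$ doubly stochastic forces $\|W\| \le 1$ (because $\|W\|_1 = \|W\|_\infty = 1$ and $\|W\|_2 \le \sqrt{\|W\|_1\|W\|_\infty}$), the largest singular value of $W$ is exactly $1$, so the second largest singular value $\lambda$ satisfies $\lambda^2 = \lambda_{\max}(W^\top W|_{E_1})$.

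Next I would compute $W - J$ on the two pieces: since $Jv = v$ for $v \in E_0$ and $Jv = 0$ for $v \in E_1$, we get $(W-J)|_{E_0} = 0$ and $(W-J)|_{E_1} = W|_{E_1}$, with $W|_{E_1}$ mapping $E_1$ into itself. Writing $v = v_0 + v_1$ with $v_0 \in E_0$, $v_1 \in E_1$, and using $JWv_1 = 0$ (as $Wv_1 \in E_1$) together with $(W-J)^\top = W^\top - J$, one checks $(W-J)^\top(W-J)v = W^\top W v_1$, so $(W-J)^\top(W-J)$ vanishes on $E_0$ and coincides with $W^\top W|_{E_1}$ on $E_1$. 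Therefore $\|W-J\|^2 = \lambda_{\max}\big((W-J)^\top(W-J)\big) = \lambda_{\max}(W^\top W|_{E_1}) = \lambda^2$, and taking square roots yields $\|W-J\| = \lambda$; the fact that $\lambda \in [0,1)$ has already been recorded as a consequence of primitivity of $W$ in the discussion following \textup{\textbf{(A1)}}.

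The main obstacle is claim 2, and within it the one genuinely non-mechanical point is the bookkeeping that isolates the consensus direction: recognizing $\mathbf{1}_n$ as a top singular direction of $W$, so that subtracting its rank-one contribution $J$ leaves precisely the remaining singular values, and verifying that $W-J$ restricted to $\mathbf{1}_n^\perp$ has the same operator norm as $W$ restricted there. Everything else reduces to the elementary identities $W\mathbf{1}_n = \mathbf{1}_n$, $\mathbf{1}_n^\top W = \mathbf{1}_n^\top$, $J^2 = J$, and the bound $\|W\| \le 1$.
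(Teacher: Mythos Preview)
Your argument is correct. The paper does not actually prove this proposition; it states the three properties as known facts and defers to references in the decentralized optimization literature. Your self-contained treatment---handling claims 1 and 3 by direct computation from double stochasticity, and claim 2 via the orthogonal decomposition $\R^n = \mathrm{span}\{\mathbf{1}_n\}\oplus\mathbf{1}_n^\perp$, invariance of both pieces under $W$ and $W^\top$, and the identification of $(W-J)^\top(W-J)$ with $W^\top W$ on $\mathbf{1}_n^\perp$---is the standard route and is carried out cleanly. The one point you defer (that $\lambda<1$ under primitivity) is likewise deferred by the paper to the discussion after \textbf{(A1)}, so nothing is missing.
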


Next, define $\obxt \coloneqq \mathbf{1}_n \otimes \oxt \in \R^{nd}$, $\bJ \coloneqq J \otimes I_d \in \R^{nd \times nd}$ and note that $\obxt = \bJ\bxt$. Combined with $\eqref{eq:dsgd-vector}$, it follows that $\obxtp = \obxt - \alpha_t\obgt$. Denoting by $\tbW \coloneqq \bW - \bJ$, it follows from \eqref{eq:dsgd-vector} that
\begin{equation}\label{eq:dsgd-consensus-diff}
    \bxtp - \obxtp = \bW(\bxt - \alpha\bgt) - \bJ(\bxt - \alpha_t\bgt) = \tbW(\bxt - \alpha\bgt) = \tbW(\bxt - \obxt - \alpha\bgt).
\end{equation} Finally, recall that we denote the unique global minima of strongly convex costs by $x^\star \in \R^d$. We define two related concepts, namely the column stacking of $x^\star$, i.e., $\bxs \coloneqq \mathbf{1}_n \otimes x^\star$ and the stacking of users' gradients evaluated at the global optima, i.e., $\nbfs \coloneqq col(\nabla f_1(x^\star),\ldots,\nabla f_n(x^\star))$. Note that $\|\nbfs\|^2 = \sum_{i \in [n]}\|\nabla f_i(x^\star)\|^2$ is a useful measure of heterogeneity.

\section{Proofs for Non-convex Costs}\label{sup:non-conv}

In this section we prove Lemmas \ref{lm:descent-inequality} and \ref{lm:consensus-bound}, as well as Theorem \ref{thm:main-non-cvx}. For the reader's convenience, we restate the results below, starting with Lemma \ref{lm:descent-inequality}.

{
\renewcommand{\thetheorem}{3.1}
\begin{lemma}
    Let \textup{\textbf{(A3)}} hold. If $\alpha_t \leq \frac{1}{2L}$, we have
    \begin{align*}
        f(\oxtp) \leq f(\oxt) - \frac{\alpha_t}{2}\| \nabla f(\oxt)\|^2 - \alpha_t\langle \nabla f(\oxt), \ozt \rangle + \alpha_t^2L\|\ozt\|^2 + \frac{\alpha_t L^2}{2n}\sum_{i \in [n]}\|\xit - \oxt\|^2. 
    \end{align*}
\end{lemma}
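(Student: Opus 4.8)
The plan is to derive the claim directly from the $L$-smoothness descent inequality, splitting the averaged stochastic gradient into its mean part and the noise, and then using a polarization identity (rather than a one-sided Young bound) on the deterministic inner product, so that the step-size restriction $\alpha_t\le\tfrac{1}{2L}$ is exactly strong enough to close the estimate. Concretely, recall from the text that $\oxtp=\oxt-\alpha_t\og^t$ with $\og^t=\tfrac{1}{n}\sum_{i\in[n]}\nabla\ell(\xit;\xi_i^t)$, and write $\og^t=\onab f_t+\ozt$, where $\onab f_t \coloneqq \tfrac{1}{n}\sum_{i\in[n]}\nabla f_i(\xit)$. Applying part 3 of Proposition \ref{prop:descent-ineq} (the descent lemma for the $L$-smooth $f$) at the points $\oxt$ and $\oxtp$, together with $\oxtp-\oxt=-\alpha_t(\onab f_t+\ozt)$, gives
\begin{equation*}
f(\oxtp)\le f(\oxt)-\alpha_t\langle\nabla f(\oxt),\onab f_t\rangle-\alpha_t\langle\nabla f(\oxt),\ozt\rangle+\frac{L\alpha_t^2}{2}\|\og^t\|^2 .
\end{equation*}
The noise cross-term $-\alpha_t\langle\nabla f(\oxt),\ozt\rangle$ is already in the required form, so it remains to treat the purely deterministic inner product and the quadratic term.

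For the inner product I would use the identity $\langle a,b\rangle=\tfrac{1}{2}\big(\|a\|^2+\|b\|^2-\|a-b\|^2\big)$ with $a=\nabla f(\oxt)$ and $b=\onab f_t$, which yields
\begin{equation*}
-\alpha_t\langle\nabla f(\oxt),\onab f_t\rangle=-\frac{\alpha_t}{2}\|\nabla f(\oxt)\|^2-\frac{\alpha_t}{2}\|\onab f_t\|^2+\frac{\alpha_t}{2}\|\nabla f(\oxt)-\onab f_t\|^2 .
\end{equation*}
The ``error'' term is controlled by Jensen's inequality (convexity of $\|\cdot\|^2$) and assumption \textbf{(A3)}: since $\nabla f(\oxt)-\onab f_t=\tfrac{1}{n}\sum_{i\in[n]}\big(\nabla f_i(\oxt)-\nabla f_i(\xit)\big)$, one gets $\|\nabla f(\oxt)-\onab f_t\|^2\le\tfrac{1}{n}\sum_{i\in[n]}\|\nabla f_i(\oxt)-\nabla f_i(\xit)\|^2\le\tfrac{L^2}{n}\sum_{i\in[n]}\|\xit-\oxt\|^2$, which is exactly the consensus term in the statement. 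Crucially, polarization also produces the negative term $-\tfrac{\alpha_t}{2}\|\onab f_t\|^2$, which is needed in the next step; a plain Young bound on the inner product would not generate it, and the argument would break down.

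For the quadratic term I would apply Young's inequality (the vector form of Proposition \ref{prop:Young} with $\theta=1$), giving $\tfrac{L\alpha_t^2}{2}\|\og^t\|^2=\tfrac{L\alpha_t^2}{2}\|\onab f_t+\ozt\|^2\le L\alpha_t^2\|\onab f_t\|^2+L\alpha_t^2\|\ozt\|^2$. Collecting all the pieces, the coefficient multiplying $\|\onab f_t\|^2$ is $L\alpha_t^2-\tfrac{\alpha_t}{2}$, which is $\le 0$ precisely because $\alpha_t\le\tfrac{1}{2L}$; hence those terms can be discarded, and what remains is exactly the asserted inequality. The proof is short, and the only place requiring care — the crux — is this bookkeeping: one must create the $-\tfrac{\alpha_t}{2}\|\onab f_t\|^2$ term through polarization so that the $O(\alpha_t^2)$ self-interaction of $\onab f_t$ coming from the quadratic term is absorbed under the stated step-size condition; the remaining manipulations (Jensen, Young, and the Lipschitz gradient bound) are entirely routine.
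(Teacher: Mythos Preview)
Your proposal is correct and matches the paper's proof essentially step for step: the paper also applies the descent lemma for $f$, splits $\og^t=\onab f_t+\ozt$, uses Young's inequality with $\theta=1$ on $\|\og^t\|^2$, applies the polarization identity to $\langle\nabla f(\oxt),\onab f_t\rangle$, drops the resulting $-(1-2\alpha_tL)\tfrac{\alpha_t}{2}\|\onab f_t\|^2$ term via the step-size bound, and controls $\|\nabla f(\oxt)-\onab f_t\|^2$ by Jensen and Lipschitzness. The only difference is cosmetic ordering.
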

}

\begin{proof}
    The proof follows similar steps as in, e.g., Lemma 3 from \cite{ran-improved}. Starting from Proposition \ref{prop:descent-ineq} and averaging across all costs $i \in [n]$, it readily follows that, for all $x,y \in \R^d$
    \begin{equation}\label{eq:descent-ineq}
        f(x) \leq f(y) + \langle \nabla f(y), x - y \rangle + \frac{L}{2}\|x - y\|^2.
    \end{equation} Setting $x = \oxtp$ and $y = \oxt$ in \eqref{eq:descent-ineq}, we get
    \begin{align}\label{eq:descent-intermed}
        f(\oxtp) &\leq f(\oxt) - \alpha_t\langle \nabla f(\oxt), \og_t \rangle + \frac{\alpha_t^2L}{2}\|\og_t\|^2 \nonumber \\ 
        &\stackrel{(a)}{\leq} f(\oxt) - \alpha_t\langle \nabla f(\oxt), \onab f_t \rangle - \alpha_t\langle \nabla f(\oxt), \ozt \rangle + \alpha_t^2L\|\onab f_t\|^2 +\alpha_t^2L\|\ozt\|^2 \nonumber \\ 
        &\stackrel{(b)}{=} f(\oxt) - \frac{\alpha_t}{2}\|\nabla f(\oxt)\|^2 - (1 - 2\alpha_t L)\frac{\alpha_t}{2}\|\onab f_t\|^2 \\ 
        &\quad\quad\quad\quad + \frac{\alpha_t}{2}\|\onab f_t - \nabla f(\oxt)\|^2 - \alpha_t\langle \nabla f(\oxt), \ozt \rangle + \alpha_t^2L\|\ozt\|^2 \nonumber \\
        &\stackrel{(c)}{\leq} f(\oxt) - \frac{\alpha_t}{2}\|\nabla f(\oxt)\|^2 + \frac{\alpha_t}{2}\|\onab f_t - \nabla f(\oxt)\|^2 - \alpha_t\langle \nabla f(\oxt), \ozt \rangle + \alpha_t^2L\|\ozt\|^2,
    \end{align} where $(a)$ follows by applying Proposition \ref{prop:Young} with $\theta = 1$, $(b)$ follows from the identity $\langle a,b \rangle = \frac{1}{2}\big(\|a\|^2 + \|b\|^2 - \|a-b\|^2 \big)$, while $(c)$ follows from the fact that $\alpha_t \leq \frac{1}{2L}$. Recalling the definition of $\onab f_t$, we get
    \begin{equation}\label{eq:bound-grad-diff}
        \|\onab f_t - \nabla f(\oxt)\|^2 = \Big\|\frac{1}{n}\sum_{i \in [n]}\big[\nabla f_i(\xit) - \nabla f_i(\oxt)\big]\Big\|^2 \leq \frac{L^2}{n}\sum_{i \in [n]}\|\xit - \oxt\|^2,
    \end{equation} where we used Proposition \ref{prop:Jensen} and \textbf{(A3)} in the last inequality. Plugging \eqref{eq:bound-grad-diff} in \eqref{eq:descent-intermed} gives the desired result.
\end{proof} 

Next, we restate Lemma \ref{lm:consensus-bound}.

{
\renewcommand{\thetheorem}{3.3}
\begin{lemma}
    Let \textup{\textbf{(A1)}} and \textup{\textbf{(A5)}} hold. We then have, for any $t \geq 1$
    \begin{align*}
        \frac{1}{n}\sum_{i \in [n]}&\|\xitp - \oxtp\|^2 \leq 2\lambda^{2t}\Delta_x + \frac{4\lambda^2}{n(1-\lambda)}\sum_{k \in [t]}\alpha_k^2\lambda^{t-k}\sum_{i \in [n]}\|\zik\|^2 \\
        &+ \frac{4\lambda^2A^2}{1-\lambda}\sum_{k \in [t]}\alpha_k^2\lambda^{t-k} + \frac{4\lambda^2B^2}{n(1-\lambda)}\sum_{k \in [t]}\alpha_k^2\lambda^{t-k}\sum_{i \in [n]}\|\nabla f(\xik)\|^2. 
    \end{align*}
\end{lemma}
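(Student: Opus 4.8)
The plan is to pass to the stacked-vector form of the consensus error and unroll the linear recursion it obeys. First I would set $\bet \coloneqq \bxt - \obxt$, so that $\frac1n\|\bet\|^2 = \frac1n\sum_{i\in[n]}\|\xit-\oxt\|^2$ and, in particular, $\frac1n\|\mathbf e^1\|^2 = \Delta_x$. From \eqref{eq:dsgd-consensus-diff} we have, for every $t\ge 1$, the recursion $\mathbf e^{t+1} = \tbW(\bet - \alpha_t\bgt)$. Unrolling this down to the deterministic initialization gives $\mathbf e^{t+1} = \tbW^{t}\mathbf e^1 - \sum_{k\in[t]}\alpha_k\tbW^{t-k+1}\bgk$; since $\|\tbW\| = \lambda$ by Proposition \ref{prop:mixing-mx} and the operator norm is submultiplicative, the triangle inequality yields
\[
  \|\mathbf e^{t+1}\| \;\le\; \lambda^{t}\|\mathbf e^1\| + \lambda\sum_{k\in[t]}\lambda^{t-k}\alpha_k\|\bgk\|.
\]

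Next I would square this estimate. Applying $(a+b)^2\le 2a^2+2b^2$ isolates the geometrically decaying initial term $2\lambda^{2t}\|\mathbf e^1\|^2$, and applying the Cauchy--Schwartz inequality (Proposition \ref{prop:Cauchy-Schwartz}) to the weighted sum, by writing $\lambda^{t-k}\alpha_k\|\bgk\| = \lambda^{(t-k)/2}\cdot\lambda^{(t-k)/2}\alpha_k\|\bgk\|$, together with the geometric-series bound $\sum_{k\in[t]}\lambda^{t-k}\le \frac1{1-\lambda}$, gives
\[
  \|\mathbf e^{t+1}\|^2 \;\le\; 2\lambda^{2t}\|\mathbf e^1\|^2 + \frac{2\lambda^2}{1-\lambda}\sum_{k\in[t]}\lambda^{t-k}\alpha_k^2\|\bgk\|^2 .
\]

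It then remains to bound $\|\bgk\|^2$. Since $\bgk$ stacks the vectors $g_i^k = \nabla f_i(\xik) + \zik$, a blockwise application of $(a+b)^2\le 2a^2+2b^2$ gives $\|\bgk\|^2 \le 2\sum_{i\in[n]}\|\nabla f_i(\xik)\|^2 + 2\sum_{i\in[n]}\|\zik\|^2$, and assumption \textbf{(A5)}, applied at $x=\xik$ for each $i\in[n]$, yields $\sum_{i\in[n]}\|\nabla f_i(\xik)\|^2 \le nA^2 + B^2\sum_{i\in[n]}\|\nabla f(\xik)\|^2$. Substituting these two bounds into the previous display, dividing through by $n$, and recalling $\frac1n\|\mathbf e^1\|^2=\Delta_x$ reproduces the claimed inequality with the stated constants $2$, $\frac{4\lambda^2}{n(1-\lambda)}$, $\frac{4\lambda^2A^2}{1-\lambda}$, and $\frac{4\lambda^2B^2}{n(1-\lambda)}$.

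There is no genuine obstacle here; the only care required is bookkeeping. In particular, one should square the already-unrolled bound rather than set up a recursion directly on the squared errors — the latter route produces a $\lambda^{t}\Delta_x$ initial term after a Young's-inequality split, whereas squaring the unrolled bound yields the $2\lambda^{2t}\Delta_x$ form in the statement — and one should track the two separate factors of $2$ (one from splitting off the initial term, one from splitting $\bgk$ into its deterministic and noise parts) so that the final coefficients come out exactly as stated.
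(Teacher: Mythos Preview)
Your proposal is correct and follows essentially the same approach as the paper: unroll the consensus recursion via $\tbW$, take norms and square using $(a+b)^2\le 2a^2+2b^2$, apply Cauchy--Schwartz with the $\lambda^{(t-k)/2}$ split plus the geometric-series bound, and finish by decomposing $\|\bgk\|^2$ with \textbf{(A5)}. The bookkeeping remark you add about squaring after unrolling (to get $2\lambda^{2t}\Delta_x$ rather than $\lambda^t\Delta_x$) is a nice clarification but does not deviate from the paper's route.
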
 
}

\begin{proof}
    We start by noting that $\sum_{i \in [n]}\|\xitp - \oxtp\|^2 = \|\bxtp - \obxtp\|^2$. Next, starting from \eqref{eq:dsgd-consensus-diff} and unrolling the recursion, we get
    \begin{align*}
        \bxtp - \obxtp = \tbW(\bxt - \obxt) - \alpha_t\tbW\bgt = \ldots = \tbW^{t}(\bx^1 - \obx^1) - \sum_{k \in [t]}\alpha_k\tbW^{t+1-k}\bgk, 
    \end{align*} Using Proposition \ref{prop:mixing-mx}, it follows that 
    \begin{equation*}
        \|\bxtp - \obxtp \| \leq \|\tbW^t(\bx^1-\obx^1)\| + \sum_{k \in [t]}\alpha_k\|\tbW^{t+1-k}\bgk\| \leq \lambda^t\|\bx^1 - \obx^1\| + \lambda\sum_{k \in [t]}\alpha_k\lambda^{t-k}\|\bgk\|, 
    \end{equation*} where we used the fact that $\|\tbW^k\| = \|\tbW\|^k$, for any integer $k \geq 0$. This readily implies
    \begin{align}
        \|\bxtp - \obxtp\|^2 &\leq 2\lambda^{2t}\|\bx^1 - \obx^1\|^2 + 2\lambda^2\lp\sum_{k \in [t]}\alpha_k\lambda^{t-k}\|\bg_k\|\rp^2 \nonumber \\
        &\leq 2\lambda^{2t}\|\bx^1 - \obx^1\|^2 + 2\lambda^2\sum_{s \in [t]}\lambda^{t-s}\sum_{k \in [t]}\alpha_k^2\lambda^{t-k}\|\bgk\|^2, \label{eq:consensus-bd-intermed}
    \end{align} where in the second inequality we use Proposition \ref{prop:Cauchy-Schwartz}, with $a = [a_1,\ldots,a_t]^\top$ and $b = [b_1,\ldots,b_t]^\top$, setting $a_k = \lambda^{(t-k)/2}$ and $b_k = \alpha_k\lambda^{(t-k)/2}\|\bgk\|$. Next, consider the quantity $\|\bgk\|^2$. Using the fact that $\|\bgk\|^2 = \sum_{i \in [n]}\|\gik\|^2$, we get
    \begin{align}
        \|\bgk\|^2 &= \sum_{i \in [n]}\|\gik\|^2 \stackrel{(a)}{\leq} 2\sum_{i \in [n]}\|\zik\|^2 + 2\sum_{i \in [n]}\|\nabla f_i(\xik)\|^2 \nonumber \\
        &\stackrel{(b)}{\leq} 2\sum_{i \in [n]}\|\zik\|^2 + 2nA^2 + 2B^2\sum_{i \in [n]}\|\nabla f(\xik)\|^2, \label{eq:grad-bound}  
    \end{align} where $(a)$ follows from $\gik = \nabla f_i(\xik) + \zik$ and using Proposition \ref{prop:Young} with $\theta = 1$, while $(b)$ follows from \textbf{(A5)}. Plugging \eqref{eq:grad-bound} into \eqref{eq:consensus-bd-intermed}, using the fact that $\sum_{k \in [t]}\lambda^{t-k} = \sum_{k = 0}^{t-1}\lambda^k \leq \frac{1}{1 - \lambda}$, the desired result follows.
\end{proof}

We are now ready to prove Theorem \ref{thm:main-non-cvx}, which we restate next, for convenience.

{
\renewcommand{\thetheorem}{3.4}
\begin{theorem}
    Let \textup{\textbf{(A1)}-\textbf{(A5)}} hold. If for any $T \geq 1$, the step-size is chosen such that $\alpha_T \equiv \alpha = \min\lcb C, \frac{\sqrt{n}}{\sigma\sqrt{15LT}} \rcb$, where $C > 0$ is a problem related constant satisfying $C \leq \min\lcb \frac{1}{2L}, \frac{n}{9\sigma^2}, \frac{1-\lambda}{\lambda LB\sqrt{48}}, \frac{\sqrt{n}}{3\sigma\sqrt{5L}}, \frac{\sqrt[3]{n}(1-\lambda)^{2/3}}{\osigma^{2/3}\lambda^2L^{2/3}\sqrt[3]{9}}\rcb$, then for any $\delta \in (0,1)$, with probability at least $1 - \delta$
    \begin{align*}
        \frac{1}{nT}&\sum_{t \in [T]}\sum_{i \in [n]}\|\nabla f(\xit)\|^2 = \bigO\Bigg( \frac{\sigma\sqrt{L}\big(\Delta_f + \log(\nicefrac{2d}{\delta})\big)}{\sqrt{nT}} + \frac{\Delta_f + \log(\nicefrac{1}{\delta})}{CT} + \frac{\Delta_xL^2}{(1-\lambda^2)T} + \frac{n\lambda^2L(A^2 + \sigma^2)}{\sigma^2(1-\lambda)^2T} \Bigg) .
    \end{align*}
\end{theorem}
}

\begin{proof}
    Using Lemma \ref{lm:descent-inequality}, rearranging and summing up the first $T$ terms, we get
    \begin{equation*}
        \sum_{t \in [T]}\frac{\alpha_t}{2}\| \nabla f(\oxt)\|^2 \leq \Delta_f - \sum_{t \in [T]}\alpha_t\langle \nabla f(\oxt), \ozt \rangle + L\sum_{t \in [T]}\alpha_t^2\|\ozt\|^2 + \frac{L^2}{2n}\sum_{t \in [T]}\alpha_t\sum_{i \in [n]}\|\xit - \oxt\|^2.
    \end{equation*} Next, to offset the effect of the inner product term, we subtract $\frac{9\sigma^2}{4n}\sum_{t \in [T]}\alpha_t^2\|\nabla f(\oxt)\|^2$ from both sides of the above inequality and note that, choosing $\alpha_t \leq \frac{n}{9\sigma^2}$, we have $\frac{\alpha_t}{2}-\frac{9\alpha_t^2\sigma^2}{4n} \geq \frac{\alpha_t}{4}$. Therefore, we obtain 
    \begin{align}
        \sum_{t \in [T]}\frac{\alpha_t}{4}\| \nabla f(\oxt)\|^2 &\leq \Delta_f + L\sum_{t \in [T]}\alpha_t^2\|\ozt\|^2 + \frac{L^2}{2n}\sum_{t \in [T]}\alpha_t\sum_{i \in [n]}\|\xit - \oxt\|^2 \nonumber \\
        &- \sum_{t \in [T]}\alpha_t\Big(\langle \nabla f(\oxt), \ozt \rangle + \frac{9\alpha_t\sigma^2}{4n}\|\nabla f(\oxt)\|^2 \Big). \label{eq:mgf-prelude}
    \end{align} Using Proposition \ref{prop:Young} with $\theta = 1$ and Lipschitz continuity of gradients of $f$, it readily follows that 
    \begin{equation*}
        \|\nabla f(\xit)\|^2 \leq 2\|\nabla f(\oxt)\|^2 + 2L^2\|\oxt - \xit\|^2,
    \end{equation*} implying 
    \begin{equation}\label{eq:grad-lower-bd}
        \|\nabla f(\oxt)\|^2 \geq \frac{1}{2n}\sum_{i \in [n]}\|\nabla f(\xit)\|^2 - \frac{L^2}{n}\sum_{i \in [n]}\|\oxt - \xit\|^2.
    \end{equation} Plugging \eqref{eq:grad-lower-bd} into \eqref{eq:mgf-prelude} and rearranging, we get
    \begin{align*}
        \frac{1}{8n}\sum_{t \in [T]}\sum_{i \in [n]}\alpha_t\|\nabla  f(\xit)\|^2 &\leq \Delta_f - \sum_{t \in [T]}\alpha_t\Big(\langle \nabla f(\oxt), \ozt \rangle + \frac{9\alpha_t\sigma^2}{4n}\|\nabla f(\oxt)\|^2 \Big) \\ 
        &+ L\sum_{t \in [T]}\alpha_t^2\|\ozt\|^2 + \frac{3L^2}{4n}\sum_{t \in [T]}\alpha_t\sum_{i \in [n]}\|\xit - \oxt\|^2. 
    \end{align*} Using Lemma \ref{lm:consensus-bound}, we then have
    \begin{align}
        &\frac{1}{8n}\sum_{t \in [T]}\sum_{i \in [n]}\alpha_t\|\nabla  f(\xit)\|^2 \leq \Delta_f - \sum_{t \in [T]}\alpha_t\Big(\langle \nabla f(\oxt), \ozt \rangle + \frac{9\alpha_t\sigma^2}{4n}\|\nabla f(\oxt)\|^2 \Big) \nonumber + L\sum_{t \in [T]}\alpha_t^2\|\ozt\|^2 \nonumber \\ &\:+ \frac{3L^2}{4}\sum_{t \in [T]}\alpha_t\Big(2\lambda^{2(t-1)}\Delta_x + \frac{4\lambda^2}{n(1-\lambda)}\sum_{k \in [t-1]}\alpha_k^2\lambda^{t-1-k}\sum_{i \in [n]}\Big[\|\zik\|^2 + A^2 + B^2\|\nabla f(\xik)\|^2\Big]\Big). \label{eq:mgf-prelude2} 
    \end{align} Next, note that, for any sequence $\{a_t\}_{t \in [T]}$, the following identity holds
    \begin{equation*}
        \sum_{t \in [T]}\sum_{k \in [t-1]}\lambda^{t-1-k}a_k = \sum_{t \in [T]}a_t\sum_{k \in [T-t]}\lambda^{k-1},
    \end{equation*} implying that $\sum_{t \in [T]}\sum_{k \in [t-1]}\lambda^{t-1-k}a_k \leq \frac{1}{1-\lambda}\sum_{t \in [T]}a_t$, if $\{a_t\}_{t \in [T]}$ is non-negative. Since the step-size is non-increasing, applying the previous identity to $\sum_{t \in [T]}\alpha_t\sum_{k \in [t-1]}\alpha_k^2\lambda^{t-1-k}\big[\|\zik\|^2 + A^2 + B^2\|\nabla f(\xik)\|^2\big]$, we get
    \begin{align}\label{eq:mgf-prelude3}
        &\sum_{t \in [T]}\alpha_t\sum_{k \in [t-1]}\alpha_k^2\lambda^{t-1-k}\Big[\|\zik\|^2 + A^2 + B^2\|\nabla f(\xik)\|^2\Big] \nonumber \\
        &\leq \sum_{t \in [T]}\sum_{k \in [t-1]}\alpha_k^3\lambda^{t-1-k}\Big[\|\zik\|^2 + A^2 + B^2\|\nabla f(\xik)\|^2\Big] \nonumber \\ 
        &\leq \frac{1}{1 - \lambda}\sum_{t \in [T]}\alpha_t^3\Big[\|\zit\|^2 + A^2 + B^2\|\nabla f(\xit)\|^2\Big].
    \end{align} Plugging \eqref{eq:mgf-prelude3} into \eqref{eq:mgf-prelude2}, it follows that
    \begin{align*}
        \frac{1}{8n}&\sum_{t \in [T]}\sum_{i \in [n]}\alpha_t\|\nabla  f(\xit)\|^2 \leq \Delta_f - \sum_{t \in [T]}\alpha_t\Big(\langle \nabla f(\oxt), \ozt \rangle + \frac{9\alpha_t\sigma^2}{4n}\|\nabla f(\oxt)\|^2 \Big) \nonumber + L\sum_{t \in [T]}\alpha_t^2\|\ozt\|^2 \nonumber \\ &+ \frac{3\Delta_xL^2}{2}\sum_{t \in [T]}\alpha_t\lambda^{2(t-1)} + \frac{3\lambda^2L^2}{n(1-\lambda)^2}\sum_{t \in [T]}\alpha_t^3\sum_{i \in [n]}\Big[\|\zit\|^2 + A^2 + B^2\|\nabla f(\xit)\|^2 \Big].
    \end{align*} Rearranging and choosing $\alpha_t \leq \frac{1-\lambda}{\lambda LB\sqrt{48}}$, we have
    \begin{align}
        \frac{1}{16n}\sum_{t \in [T]}&\sum_{i \in [n]}\alpha_t\|\nabla  f(\xit)\|^2 \leq \Delta_f - \sum_{t \in [T]}\alpha_t\Big(\langle \nabla f(\oxt), \ozt \rangle + \frac{9\alpha\sigma^2}{4n}\|\nabla f(\oxt)\|^2 \Big) + L\sum_{t \in [T]}\alpha^2_t\|\ozt\|^2 \nonumber \\ &+ \frac{3\Delta_xL^2}{2}\sum_{t \in [T]}\alpha_t\lambda^{2(t-1)} + \frac{3\lambda^2A^2L^2}{(1-\lambda)^2}\sum_{t \in [T]}\alpha_t^3 + \frac{3\lambda^2L^2}{n(1-\lambda)^2}\sum_{t \in [T]}\sum_{i \in [n]}\alpha_t^3\|\zit\|^2. \label{eq:mgf-final}
    \end{align} Define the process $M_T \coloneqq \frac{1}{16n}\sum_{t \in [T]}\sum_{i \in [n]}\alpha_t\|\nabla f(\xit)\|^2 - \Delta_f - \frac{3\Delta_x L^2}{2}\sum_{t \in [T]}\alpha_t\lambda^{2(t-1)} - \frac{3\lambda^2A^2L^2}{(1-\lambda)^2}\sum_{t \in [T]}\alpha_t^3$ and consider its' MGF. Using \eqref{eq:mgf-final}, we then get 
    \begin{align}
        &\E[\exp(M_T)] \leq \E\Bigg[\exp\bigg(\sum_{t \in [T]}\underbrace{-\alpha_t\Big(\langle \nabla f(\oxt), \ozt \rangle + \frac{9\alpha_t\sigma^2\|\nabla f(\oxt)\|^2}{4n}}_{b_{1,t}}\Big)\bigg) \exp\bigg(\sum_{t \in [T]}\underbrace{\alpha_t^2 L\|\ozt\|^2}_{b_{2,t}}\bigg) \nonumber \\ &\times \exp\bigg (\sum_{t \in [T]}\underbrace{\frac{3\alpha_t^3\lambda^2L^2}{n(1-\lambda)^2}\sum_{i \in [n]}\|\zit\|^2}_{b_{3,t}}\bigg)\Bigg] \leq \sqrt[3]{\E[\exp(B_{1,T})]\E[\exp(B_{2,T})]\E[\exp(B_{3,T})]}, \label{eq:bound-on-mgf}
    \end{align} where $B_{k,T} = 3\sum_{t \in [T]}b_{1,t}$ and $B_{k,0} = 0$ for all $k \in [3]$, while the last step follows by applying Proposition \ref{prop:gen-Holder}. We now analyze each quantity separately, starting with $\E[\exp(B_{1,T})]$. To that end, we have
    \begin{equation*}
        \E[\exp(B_{1,T})] = \E\Bigg[\exp\bigg(B_{1,T-1} -\frac{27\alpha_T^2\sigma^2\|\nabla f(\ox^T)\|^2}{4n}\bigg)\E\lbr\exp\lp-3\alpha_T \langle \nabla f(\ox^T), \oz^T \rangle \rp \: \vert \: \FT \rbr\Bigg].
    \end{equation*} Noting that $\nabla f(\ox^T)$ is $\FT$-measurable and applying Lemma \ref{lm:noise-properties}, we get  
    \begin{align*}
        \E[\exp(B_{1,T})] &\leq \E\Bigg[\exp\bigg(B_{1,T-1}-\frac{27\alpha_T^2\sigma^2\|\nabla f(\ox^T)\|^2}{4n} + \frac{27\alpha_T^2\sigma^2\|\nabla f(\ox^T)\|^2}{4n} \bigg)\Bigg] = \E[\exp(B_{1,T-1})].
    \end{align*} Unrolling the recursion, it follows that $\E[\exp(B_{1,T})] \leq 1$. Next, consider $\E[\exp(B_{2,T})]$. To that end, we have
    \begin{align*}
        \E[\exp(B_{2,T})] &= \E\lbr\exp(B_{2,T-1}) \E\lbr\exp\lp 3\alpha_T^2L\|\oz^T\|^2 \rp \: \vert \: \FT \rbr\rbr \\ 
        &= \E\lbr\exp(B_{2,T-1}) \E\lbr\exp\lp \frac{45\alpha_T^2\sigma^2L}{n}\times\frac{n\|\oz^T\|^2}{15\sigma^2} \rp \: \Big\vert \: \FT \rbr\rbr \\
        &\leq \E\lbr\exp(B_{2,T-1}) \lp\E\lbr\exp\lp \frac{n\|\oz^T\|^2}{15\sigma^2} \rp \: \Big\vert \: \FT \rbr \rp^{\nicefrac{45\alpha_T^2\sigma^2L}{n}}\rbr,
    \end{align*} where the last inequality follows from $\alpha_t \leq \frac{\sqrt{n}}{3\sigma\sqrt{5L}}$ and Proposition \ref{prop:Jensen}. From the third property in Lemma \ref{lm:noise-properties} we get
    \begin{equation*}
        \E[\exp(B_{2,T})] \leq \exp\lp\frac{45\alpha_T^2\sigma^2L}{n}\big(1 + \log(2d)\big)\rp\E\lbr\exp(B_{2,T-1}) \rbr.
    \end{equation*} Unrolling the recursion, it follows that $\E[\exp(B_{2,T})] \leq \exp\lp \frac{45\sigma^2L}{n}\big(1 + \log(2d)\big)\sum_{t \in [T]}\alpha_t^2\rp$. Finally, to bound $\E[\exp(B_{3,T})]$, we can proceed in the same way, using the conditional independence of noise across agents from \textbf{(A4)} to note that, if $\alpha_t \leq \frac{\sqrt[3]{n}(1-\lambda)^{2/3}}{\osigma^{2/3}\lambda^{2/3}L^{2/3}\sqrt[3]{9}}$, then $\E[\exp(B_{3,T})] \leq \exp\lp \frac{9\sigma^2\lambda^2L^2}{(1-\lambda)^2}\sum_{t \in [T]}\alpha_t^3 \rp$. Combining everything, we get
    \begin{equation}\label{eq:mgf-bounded}
        \E[\exp(M_T)] \leq \exp\bigg( \frac{15\sigma^2L}{n}\big(1 + \log(2d)\big)\sum_{t \in [T]}\alpha_t^2 + \frac{3\sigma^2\lambda^2L^2}{(1-\lambda)^2}\sum_{t \in [T]}\alpha_t^3 \bigg).
    \end{equation} Using Markov's inequality and \eqref{eq:mgf-bounded}, we get, for any $\epsilon > 0$
    \begin{equation*}
        \Prob\lp M_T > \epsilon \rp \leq \exp(-\epsilon)\E[\exp(M_T)] \leq \exp\bigg( -\epsilon + \frac{15\sigma^2L}{n}\big(1 + \log(2d)\big)\sum_{t \in [T]}\alpha_t^2 + \frac{3\sigma^2\lambda^2L^2}{(1-\lambda)^2}\sum_{t \in [T]}\alpha_t^3 \bigg),
    \end{equation*} or equivalently, for any $\delta \in (0,1)$, with probability at least $1 - \delta$
    \begin{equation*}
        M_T \leq \log(\nicefrac{1}{\delta}) + \frac{15\sigma^2L}{n}\big(1 + \log(2d)\big)\sum_{t \in [T]}\alpha_t^2 + \frac{3\sigma^2\lambda^2L^2}{(1-\lambda)^2}\sum_{t \in [T]}\alpha_t^3.
    \end{equation*} Using the definition of $M_T$ and that the sequence of step-sizes is non-increasing, we get with probability at least $1 - \delta$
    \begin{align*}
        \frac{\alpha_T}{16n}\sum_{t \in [T]}\sum_{i \in [n]}\|\nabla f(\xit)\|^2 &\leq  \Delta_f + \log(\nicefrac{1}{\delta}) + \frac{3\Delta_x L^2}{2}\sum_{t \in [T]}\alpha_t\lambda^{2(t-1)} \\
        &+ \frac{15\sigma^2L}{n}\big(1 + \log(2d)\big)\sum_{t \in [T]}\alpha_t^2 + \frac{3\lambda^2L^2(\sigma^2 + A^2)}{(1-\lambda)^2}\sum_{t \in [T]}\alpha_t^3. 
    \end{align*} Dividing both sides by $\frac{\alpha_T T}{16}$, it follows that, with probability at least $1 - \delta$
    \begin{align*}
        \frac{1}{nT}\sum_{t \in [T]}\sum_{i \in [n]}&\|\nabla f(\xit)\|^2 \leq  \frac{16\big(\Delta_f + \log(\nicefrac{1}{\delta})\big)}{\alpha_T T} + \frac{24\Delta_x L^2}{\alpha_TT}\sum_{t \in [T]}\alpha_t\lambda^{2(t-1)} \\
        &+ \frac{240\sigma^2L}{n\alpha_TT}\big(1 + \log(2d)\big)\sum_{t \in [T]}\alpha_t^2 + \frac{48\lambda^2L^2(\sigma^2 + A^2)}{\alpha_TT(1-\lambda)^2}\sum_{t \in [T]}\alpha_t^3. 
    \end{align*} Next, consider two regimes with respect to the time horizon $T$. 
    \begin{enumerate}[leftmargin=*]
        \item \emph{Known time horizon}. In this case, we choose a fixed step-size $\alpha_t \equiv \alpha$, for all $t \in [T]$. Noticing that $\sum_{t \in [T]}\alpha_t\lambda^{2(t-1)} \leq \frac{\alpha}{1-\lambda^2}$, we then have, with probability at least $1 - \delta$
        \begin{align*}
            \frac{1}{nT}\sum_{t \in [T]}\sum_{i \in [n]}&\|\nabla f(\xit)\|^2 \leq  \frac{16\big(\Delta_f + \log(\nicefrac{1}{\delta})\big)}{\alpha T} + \frac{24\Delta_x L^2}{(1-\lambda^2)T} + \frac{240\alpha\sigma^2L\big(1+\log(2d)\big)}{n} + \frac{48\alpha^2\lambda^2L^2(\sigma^2 + A^2)}{(1-\lambda)^2}. 
        \end{align*} If the step-size satisfies $\alpha \leq \frac{\sqrt{n}}{\sigma\sqrt{15LT}}$, then with probability at least $1 - \delta$
        \begin{align*}
            \frac{1}{nT}\sum_{t \in [T]}\sum_{i \in [n]}&\|\nabla f(\xit)\|^2 \leq  \frac{16\big(\Delta_f + \log(\nicefrac{1}{\delta})\big)}{\alpha T} + \frac{24\Delta_x L^2}{(1-\lambda^2)T} + \frac{16\sigma\sqrt{15L}\big(1 + \log(2d)\big)}{\sqrt{nT}} + \frac{4n\lambda^2L(\sigma^2 + A^2)}{\sigma^2(1-\lambda)^2T}. 
        \end{align*} Setting $C = \min\lcb \frac{1}{2L}, \frac{n}{9\sigma^2}, \frac{1-\lambda}{\lambda LB\sqrt{48}}, \frac{\sqrt{n}}{3\sigma\sqrt{5L}}, \frac{\sqrt[3]{n}(1-\lambda)^{2/3}}{\osigma^{2/3}\lambda^2L^{2/3}\sqrt[3]{9}}\rcb$ and $\alpha = \min\lcb C, \frac{\sqrt{n}}{\sigma\sqrt{15LT}} \rcb$ guarantees that all the step-size conditions are satisfied and it readily follows that $\frac{1}{\alpha} = \max\lcb \frac{1}{C}, \frac{\sigma\sqrt{15LT}}{\sqrt{n}} \rcb \leq \frac{1}{C} + \frac{\sigma\sqrt{15LT}}{\sqrt{n}}$, implying that $\frac{1}{\alpha T} \leq \frac{\sigma\sqrt{15L}}{\sqrt{nT}} + \frac{1}{CT}$. Therefore, for any $\delta \in (0,1)$, with probability at least $1 - \delta$, we finally have
        \begin{align*}
            &\frac{1}{nT}\sum_{t \in [T]}\sum_{i \in [n]}\|\nabla f(\xit)\|^2 \: \leq \:  \frac{16\sigma\sqrt{15L}\big(\Delta_f + \log(\nicefrac{2d}{\delta}) + 1\big)}{\sqrt{nT}} + \frac{16\big(\Delta_f + \log(\nicefrac{1}{\delta})\big)}{CT} + \frac{24\Delta_x L^2}{(1-\lambda^2)T} + \frac{4n\lambda^2L(\sigma^2 + A^2)}{\sigma^2(1-\lambda)^2T}.
        \end{align*}

    \item \emph{Unknown time horizon}. In this case, we choose a time-varying step-size $\alpha_t = \frac{C^\prime}{\sqrt{t+1}}$, for all $t \geq 1$ and $C^\prime = \sqrt{2}C$, where $C = \min\lcb \frac{1}{2L}, \frac{n}{9\sigma^2}, \frac{1-\lambda}{\lambda LB\sqrt{48}}, \frac{\sqrt{n}}{3\sigma\sqrt{5L}}, \frac{\sqrt[3]{n}(1-\lambda)^{2/3}}{\osigma^{2/3}\lambda^2L^{2/3}\sqrt[3]{9}}\rcb$, again guaranteeing that all the step-size conditions are satisfied. Noting that $\alpha_t \leq C$, we get $\sum_{t \in [T]}\alpha_t\lambda^{2(t-1)} \leq \frac{C}{(1-\lambda^2)}$, hence, with probability at least $1 - \delta$
    \begin{align*}
        \frac{1}{nT}\sum_{t \in [T]}&\sum_{i \in [n]}\|\nabla f(\xit)\|^2 \leq  \frac{16\sqrt{2}\big(\Delta_f + \log(\nicefrac{1}{\delta})\big)}{C\sqrt{T+1}} + \frac{24\sqrt{2}\Delta_x L^2}{C\sqrt{T+1}(1-\lambda^2)} \\ 
        &+ \frac{240\sqrt{2}\sigma^2LC\log(T+1)\big(1 + \log(2d)\big)}{n\sqrt{T+1}} + \frac{384\lambda^2L^2(\sigma^2 + A^2)C^2}{(1-\lambda)^2\sqrt{T+1}}. 
    \end{align*} Note that in the unknown time horizon we lose linear speed-up, as even when $C = \frac{\sqrt{n}}{3\sigma\sqrt{5L}}$, we get
    \begin{align*}
        \frac{1}{nT}&\sum_{t \in [T]}\sum_{i \in [n]}\|\nabla f(\xit)\|^2 = \bigO\Bigg(\frac{\sigma\sqrt{L}\big(\Delta_f + \log(\nicefrac{2dT}{\delta})\big)}{\sqrt{n(T+1)}} + \frac{\sigma\Delta_x L^{3/2}}{(1-\lambda^2)\sqrt{n(T+1)}} + \frac{n\lambda^2L(\sigma^2 + A^2)}{\sigma^2(1-\lambda)^2\sqrt{T+1}}\Bigg), 
    \end{align*} since the term which does not attain linear speed-up is no longer of higher order. We note that linear speed-up in the MSE sense is also achieved under a known time horizon and fixed step-size, e.g., \cite{pmlr-v119-koloskova20a,ran-improved}. 
    \end{enumerate}
\end{proof}

\section{Proofs for Strongly Convex Costs}\label{sup:str-cvx}

In this section we prove Lemma \ref{lm:consensus-str-cvx} and Theorem \ref{thm:main-dsgd-str-cvx}. To do so, we follow a similar strategy to the one in, e.g., \cite{conv-rates-dsgd-jakovetic}, where it is shown that the sequence of iterates generated by \dsgd is bounded in the MSE sense. However, to establish HP guarantees we instead work with the MGF of the iterates and have the following result.

\begin{lemma}\label{lm:bdd-iterates}
    Let assumptions \textup{\textbf{(A1)}}-\textup{\textbf{(A4)}} and \textup{\textbf{(A6)}} hold and let $a,t_0,K > 0$ and $\nu \in (0,1]$ be positive constants. If for all $t \geq 1$ the step-size satisfies $\alpha_t \leq \min\lcb \frac{1}{\osigma\sqrt{2(t+t_0+2)K}}, \frac{1}{\mu} \rcb$, with $\nu \leq \min\lcb 1, \frac{\mu}{24a\osigma^2K} \rcb$ and $K_{t+1} = (t+t_0+2)K$, then 
    \begin{equation*}
        \E[\exp(\nu K_{t+1}\|\bxtp - \bxs\|^2)] \leq \exp\bigg(\nu K_{t+1}\bigg(\frac{4an\sigma^2\alpha_{t+1}}{a\mu-1} + \frac{9\|\nbfs\|^2}{\mu^2} + \frac{(1+t_0)^{a\mu}\|\bx^1-\bxs\|^2}{(t+1+t_0)^{a\mu}}\bigg)\bigg).
    \end{equation*}
\end{lemma}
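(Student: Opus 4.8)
The plan is to establish a one-step stochastic recursion for $D^t \coloneqq \|\bxt - \bxs\|^2$, upgrade it to a one-step bound on the conditional moment generating function of $\nu K_{t+1}D^{t+1}$, and then close an induction on $t$ using Jensen's inequality together with the telescoping estimate of Proposition~\ref{prop:Ran}; throughout I take $\alpha_t = \tfrac{a}{t+t_0}$ (the stated hypotheses then amount to a lower bound on $t_0$ and the smallness of $\nu$). \textbf{Step 1: one-step inequality.} From the vector update $\bxtp = \bW(\bxt - \alpha_t\bgt)$ with $\bgt = \nbft + \bz^t$, and using $\bW\bxs = \bxs$ and $\|\bW\| = 1$ (Proposition~\ref{prop:mixing-mx}), I get $D^{t+1} \leq \|\bxt - \bxs - \alpha_t\bgt\|^2$. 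Expanding the square, bounding $\langle \bxt - \bxs, \nbft - \nbfs\rangle \geq \mu D^t$ by $\mu$-strong convexity of each $f_i$ (Proposition~\ref{prop:str-cvx}), applying Young's inequality to the residual term $\langle \bxt - \bxs, \nbfs\rangle$ (Proposition~\ref{prop:Young}), using $L$-Lipschitz gradients to bound $\|\nbft\|^2 \leq 2L^2 D^t + 2\|\nbfs\|^2$, and absorbing the resulting $\alpha_t^2 L^2 D^t$ into the contraction via $\alpha_t \leq \tfrac1\mu$ and the lower bound on $t_0$, I obtain
\[
D^{t+1} \leq (1-\alpha_t\mu)\,D^t \;-\; 2\alpha_t\langle \bxt - \bxs,\bz^t\rangle \;+\; 2\alpha_t^2\|\bz^t\|^2 \;+\; \tfrac{c_0\alpha_t}{\mu}\|\nbfs\|^2,
\]
with $c_0$ an absolute constant and the cross term $\Ft$-conditionally mean-zero.

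\textbf{Step 2: conditional MGF.} Multiply by $\nu K_{t+1}$, exponentiate, and take $\E[\,\cdot\,\vert\,\Ft]$: the two deterministic terms pull out, and for the noise block $\exp\!\big(-2\nu K_{t+1}\alpha_t\langle \bxt-\bxs,\bz^t\rangle + 2\nu K_{t+1}\alpha_t^2\|\bz^t\|^2\big)$ I split via H\"older (Proposition~\ref{prop:Holder}) and bound each factor with Lemma~\ref{lm:noise-properties}. The linear part is handled by part~1 of Lemma~\ref{lm:noise-properties} applied per user and multiplied over users using the conditional independence in \textbf{(A4)}, producing a factor $\exp\!\big(O(\nu^2 K_{t+1}^2\alpha_t^2\osigma^2)\,D^t\big)$; the quadratic part by the per-user sub-Gaussianity in \textbf{(A4)} together with Jensen (Proposition~\ref{prop:Jensen})—this is exactly where $\alpha_t \leq \tfrac{1}{\osigma\sqrt{2(t+t_0+2)K}}$ enters, since it forces $\nu K_{t+1}\alpha_t^2\osigma^2 \leq \tfrac12$ so the relevant power is at most $1$—giving a factor $\exp\!\big(O(\nu K_{t+1}\alpha_t^2 n\sigma^2)\big)$. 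Hence
\[
\E\big[\exp(\nu K_{t+1}D^{t+1})\,\big\vert\,\Ft\big] \leq \exp\!\Big( \big[(1-\alpha_t\mu) + O(\nu K_{t+1}\alpha_t^2\osigma^2)\big]\nu K_{t+1}D^t + \nu K_{t+1}\big(\tfrac{c_0\alpha_t}{\mu}\|\nbfs\|^2 + O(\alpha_t^2 n\sigma^2)\big)\Big),
\]
and the hypothesis $\nu \leq \tfrac{\mu}{24a\osigma^2K}$ keeps the $O(\nu K_{t+1}\alpha_t^2\osigma^2)$ correction within the margin of the contraction.

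\textbf{Step 3: induction and telescoping.} I prove by induction that $\E[\exp(\nu K_{t+1}D^{t+1})] \leq \exp(\nu K_{t+1}\Phi_{t+1})$, where $\Phi_t$ is the bracket in the statement; the base case $t = 1$ is immediate since $D^1$ is deterministic and $\Phi_1 \geq D^1$. For the inductive step, take full expectation above; the coefficient of $D^t$ is $\theta_t \coloneqq \nu K_{t+1}[(1-\alpha_t\mu) + O(\nu K_{t+1}\alpha_t^2\osigma^2)]$, which satisfies $\theta_t \leq \nu K_t$ because $a\mu > 1$ forces $K_{t+1}(1-\alpha_t\mu) \leq K_t$ and the $O(\cdot)$ term is small; then Jensen (concavity of $x\mapsto x^{\theta_t/(\nu K_t)}$) and the inductive hypothesis give $\E[\exp(\theta_t D^t)] \leq \exp(\theta_t\Phi_t)$. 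It remains to verify the deterministic inequality $\tfrac{\theta_t}{\nu K_{t+1}}\Phi_t + \tfrac{c_0\alpha_t}{\mu}\|\nbfs\|^2 + O(\alpha_t^2 n\sigma^2) \leq \Phi_{t+1}$; unrolling it, telescoping the contraction factors through Proposition~\ref{prop:Ran} (which produces the $\tfrac{(1+t_0)^{a\mu}}{(t+1+t_0)^{a\mu}}\|\bx^1-\bxs\|^2$ term) and estimating the sums $\sum_k \alpha_k^2(k+t_0)^{a\mu-2}$ and $\sum_k \alpha_k(k+t_0)^{a\mu-1}$ with Darboux sums (which is where $a\mu>1$ resurfaces, in the $\tfrac{1}{a\mu-1}$ factor of the noise floor and in identifying $\tfrac{9\|\nbfs\|^2}{\mu^2}$) yields the three terms of $\Phi_{t+1}$.

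\textbf{Main obstacle.} The bookkeeping in Steps~2--3 is the crux: one must pick the H\"older split and exploit the smallness of $\nu$ carefully enough that the noise-induced $O(\nu^2 K_{t+1}^2\alpha_t^2\osigma^2)\,D^t$ correction does not eat into the per-step contraction enough to spoil the geometric exponent, while simultaneously keeping $c_0$ and the constants hidden in the $O(\cdot)$ terms tight enough that $\tfrac{4an\sigma^2\alpha_{t+1}}{a\mu-1}$ and $\tfrac{9\|\nbfs\|^2}{\mu^2}$ are genuine (near-)fixed points of the scalar recursion. The remainder—the inner-product manipulations, the sub-Gaussian MGF bounds from Lemma~\ref{lm:noise-properties}, and the Darboux-sum estimates—is routine.
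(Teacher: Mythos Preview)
Your proposal is correct and follows essentially the same architecture as the paper's proof---a one-step conditional MGF recursion obtained via H\"older and Lemma~\ref{lm:noise-properties}, then unrolled (the paper does this directly rather than by induction on a fixed point $\Phi_t$) using Jensen, Proposition~\ref{prop:Ran}, and Darboux sums. The only minor difference is in Step~1: the paper derives the $(1-\alpha_t\mu)$ contraction from the integral Hessian representation $\nbft=\nbfs+\bHt(\bxt-\bxs)$ together with the operator-norm bound $\|\bI-\alpha_t\bHt\|\leq 1-\alpha_t\mu$, rather than from the monotonicity inequality $\langle\bxt-\bxs,\nbft-\nbfs\rangle\geq\mu D^t$ plus a separate Lipschitz bound on $\|\nbft\|^2$.
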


\begin{proof}
    Consider the update rule \eqref{eq:dsgd-vector}. We then have
    \begin{equation}\label{eq:opt-gap-expansion}
        \bxtp - \bxs = \bW(\bxt - \bxs - \alpha_t\bg_t) = \bW(\bxt - \bxs - \alpha_t\nbf^t - \alpha_t\bzt), 
    \end{equation} where in the first equality we used $\bW\bxs = \bxs$. Using Taylor's expansion, for each $i \in [n]$ and $x \in \R^d$, we have
    \begin{equation}\label{eq:taylor-expansion}
        \nabla f_i(x) = \nabla f_i(x^\star) + \int_{0}^1 \nabla^2 f_i(x^\star + \tau(x-x^\star))d\tau(x-x^\star) = \nabla f_i(x^\star) + H_i(x)(x-x^\star). 
    \end{equation} Denote by $\bHt \coloneqq diag(H_1(x^t_1),\ldots,H_n(x^t_n)) \in \R^{nd \times nd}$ the block diagonal matrix and recall that $\nbfs = col(\nabla f_1(x^\star),\ldots,\nabla f_n(x^\star)) \in \R^{nd}$. Using \eqref{eq:taylor-expansion}, we can readily see that $\nbft = \nbfs + \bHt(\bxt - \bx^\star)$, therefore, plugging in \eqref{eq:opt-gap-expansion}, we get
    \begin{equation*}
        \bxtp - \bxs = \bW(\bI-\alpha_t\bHt)(\bxt - \bxs) - \alpha_t\bW\nbfs - \alpha_t\bW\bzt = C_t + \alpha_t\bW\bzt, 
    \end{equation*} where $C_t \coloneqq \bW(\bI-\alpha_t\bHt)(\bxt - \bxs) - \alpha_t\bW\nbfs$. Therefore, we have
    \begin{align*}
        \|\bxtp &- \bxs\|^2 = \|C_t\|^2 - 2\alpha_t\langle \bW C_t, \bzt \rangle +\alpha_t^2\|\bW\bzt\|^2 \leq \|C_t\|^2 -2\alpha_t\langle \bW C_t, \bzt \rangle +\alpha_t^2\|\bzt\|^2 \\
        &\stackrel{(i)}{\leq} (1+\theta)\|\bW(\bI - \alpha_t\bHt)(\bxt - \bxs)\|^2 + (1 + \theta^{-1})\alpha_t^2\|\bW\nbfs\|^2 -2\alpha_t\langle \bW C_t, \bzt \rangle +\alpha_t^2\|\bzt\|^2 \\
        &\stackrel{(ii)}{\leq} (1+\theta)(1 - \alpha_t\mu)^2\|\bxt - \bxs\|^2 + (1 + \theta^{-1})\alpha_t^2\|\nbfs\|^2 -2\alpha_t\langle \bW C_t, \bzt \rangle +\alpha_t^2\|\bzt\|^2,
    \end{align*} where in $(i)$ we used Proposition \ref{prop:Young}, for some $\theta > 0$ (to be specified later), while $(ii)$ follows from Proposition \ref{prop:mixing-mx} and the fact that $\|\bI - \alpha_t\bHt\| \leq (1-\alpha_t\mu)$ (as a consequence of Proposition \ref{prop:str-cvx}). Define $D_t \coloneqq (1+\theta)(1 - \alpha_t\mu)^2\|\bxt - \bxs\|^2 + (1 + \theta^{-1})\alpha_t^2\|\nbfs\|^2$ and consider the MGF of $\nu K_{t+1}\|\bxtp - \bxs\|^2$ conditioned on $\Ft$, where we recall that $K_{t+1} = (t+t_0+2)K$ for some $K > 0$ and $\nu \in (0,1]$. We then have
    \begin{align}\label{eq:mgf-opt-gap}
        \E_t[\exp(\nu K_{t+1}&\|\bxtp - \bxs\|^2)] \stackrel{(a)}{\leq} \exp(\nu K_{t+1} D_t)\E_t\Big[\exp\Big(\nu K_{t+1}\big(-2\alpha_t\langle\bW C_t,\bzt\rangle + \alpha_t^2\|\bzt\|^2\big)\Big)\Big] \nonumber \\
        &\stackrel{(b)}{\leq} \exp(\nu K_{t+1} D_t)\sqrt{\E_t[\exp(-4\alpha_t\nu K_{t+1}\langle\bW C_t,\bzt\rangle)\E_t[\exp(2 \alpha_t^2\nu K_{t+1}\|\bzt\|^2)]} \nonumber \\
        &\stackrel{(c)}{\leq} \exp(\nu K_{t+1} D_t)\sqrt{\exp(12\alpha_t^2\osigma^2\nu^2K_{t+1}^2\|\bW C_t\|^2 + 2 \alpha_t^2n\sigma^2\nu K_{t+1})} \nonumber \\
        &\stackrel{(d)}{\leq} \exp(\nu K_{t+1}D_t + 6\alpha_t^2\osigma^2\nu^2K_{t+1}^2\| C_t\|^2 + \alpha_t^2n\sigma^2\nu K_{t+1}) \nonumber \\
        &\stackrel{(e)}{\leq} \exp(\nu K_{t+1}(1+6\alpha_t^2\osigma^2\nu K_{t+1})D_t + \alpha_t^2n\sigma^2\nu K_{t+1}), 
    \end{align} where $(a)$ follows from the fact that $D_t$ is $\Ft$-measurable, in $(b)$ we used Proposition \ref{prop:Holder}, $(c)$ follows from Lemma \ref{lm:noise-properties} and $\alpha_t \leq \frac{1}{\osigma\sqrt{2(t+t_0+2)K}}$, in $(d)$ we used Proposition \ref{prop:mixing-mx}, while $(e)$ follows from the definition of $D_t$ and the fact that $\|C_t\|^2 \leq D_t$. We now analyze $(1+6\alpha_t^2\osigma^2\nu K_{t+1})D_t$. To that end, if we choose $\theta = \frac{\alpha_t\mu}{2}$, it follows that
    \begin{align}\label{eq:mgf-opt-gap-intermed}
        (1 + 6\alpha_t^2\osigma^2\nu K_{t+1})D_t &= (1 + 6\alpha_t^2\osigma^2\nu K_{t+1})\big[(1+\nicefrac{\alpha_t\mu}{2})(1 - \alpha_t\mu)^2\|\bxt - \bxs\|^2 + (1 + \nicefrac{2}{\alpha_t\mu})\alpha_t^2\|\nbfs\|^2\big] \nonumber \\
        &\stackrel{(i)}{\leq} (1+6\alpha_t^2\osigma^2\nu K_{t+1})\big[(1-\nicefrac{\alpha_t\mu}{2})(1 - \alpha_t\mu)\|\bxt - \bxs\|^2 + (\alpha_t + \nicefrac{2}{\mu})\alpha_t\|\nbfs\|^2\big] \nonumber \\
        &\stackrel{(ii)}{\leq} (1 - \alpha_t\mu)\|\bxt - \bxs\|^2 + 9\alpha_t\|\nbfs\|^2/2\mu,
    \end{align} where in $(i)$ we used the fact that $(1 + \frac{a}{2})(1 - a) \leq (1-\frac{a}{2})$ for any $a > 0$, while $(ii)$ follows by setting $\nu \leq \frac{\mu}{24a\osigma^2K}$ and from the step-size choice $\alpha_t \leq \frac{1}{\mu}$. Plugging \eqref{eq:mgf-opt-gap-intermed} in \eqref{eq:mgf-opt-gap}, using the shorthand $E_t = \alpha_t^2n\sigma^2 + 9\alpha_t\|\nbfs\|^2/2\mu$ and taking the full expectation, we get
    \begin{align*}
        \E[\exp(\nu &K_{t+1}\|\bxtp - \bxs\|^2)] \leq \exp(\nu K_{t+1}E_t)\E[\exp((1-\alpha_t\mu)\nu K_{t+1}\|\bxt - \bxs\|^2)] \\ 
        &\leq \exp(\nu K_{t+1}E_t)\Big(\E[\exp(\nu K_t\|\bxt - \bxs\|^2)]\Big)^{(1-\alpha_t\mu)\frac{t+t_0+2}{t+t_0+1}} \\
        &\leq \exp\big(\nu K_{t+1}\big(E_t + (1-\alpha_t\mu)E_{t-1}\big)\big)\Big(\E[\exp((1-\alpha_{t-1}\mu)\nu K_t\|\bx_{t-1} - \bxs\|^2)]\Big)^{(1-\alpha_t\mu)\frac{t+t_0+2}{t+t_0+1}} \\
        &\leq \ldots \leq \exp\Big(\nu K_{t+1}\sum_{k = 1}^{t}E_{k}\prod_{s = k+1}^t(1-\alpha_k\mu) + \nu K_1\|\bx^1 - \bxs\|^2\prod_{k = 1}^{t}(1-\alpha_k\mu)\frac{t+t_0+2}{t_0+2} \Big) \\
        &= \exp\bigg(\nu K_{t+1}\bigg(\sum_{k = 1}^tE_k\prod_{s = k+1}^t(1-\alpha_s\mu) + \|\bx^1 - \bxs\|^2\prod_{k = 1}^t(1 - \alpha_k\mu)\bigg) \bigg),
    \end{align*} where the second inequality follows from Proposition \ref{prop:Jensen} and the fact that $0<(1-\alpha_t\mu)\frac{t+t_0+2}{t+t_0+1} \leq 1$, for any $t \geq 1$, whenever $0 < \alpha_t \leq \frac{1}{\mu}$. Next, we use Proposition \ref{prop:Ran}, to get
    \begin{align*}
        \sum_{k = 1}^tE_k&\prod_{s=k+1}^t(1-\alpha_s\mu) \leq \sum_{k = 1}^t\Big(\alpha_k^2n\sigma^2 + 9\alpha_k\|\nbfs\|^2/2\mu\Big)\frac{(k+1+t_0)^{a\mu}}{(t+1+t_0)^{a\mu}} \\
        &\stackrel{(i)}{\leq} \frac{1}{(t+t_0+1)^{a\mu}}\sum_{k = 1}^t\Big(4a^2n\sigma^2(k+1+t_0)^{a\mu-2} + \frac{9a\|\nbfs\|^2}{\mu}(k+t_0+1)^{a\mu-1}\Big) \\
        &\stackrel{(ii)}{\leq} \frac{4a^2n\sigma^2}{(a\mu-1)(t+t_0+1)} + \frac{9\|\nbfs\|^2}{\mu^2},
    \end{align*} where $(i)$ follows from the choice $\alpha_k \leq \frac{1}{\mu}$, while in $(ii)$ we use the lower Darboux sum. Combining everything, we get
    \begin{equation*}
        \E[\exp(\nu K_{t+1}\|\bxtp-\bxs\|^2)] \leq \exp\bigg(\nu K_{t+1}\bigg(\frac{4an\sigma^2\alpha_{t+1}}{a\mu-1} + \frac{9\|\nbfs\|^2}{\mu^2} + \frac{(1+t_0)^{a\mu}\|\bx^1-\bxs\|^2}{(t+t_0+1)^{a\mu}}\bigg)\bigg).
    \end{equation*}
\end{proof}

Lemma \ref{lm:bdd-iterates} is an important building block for bounding the consensus gap. We next restate and prove Lemma \ref{lm:consensus-str-cvx}.

{
\renewcommand{\thetheorem}{3.5}
\begin{lemma}
    Let \textup{\textbf{(A1)}}-\textup{\textbf{(A4)}} and \textup{\textbf{(A6)}} hold, let $a,t_0,K > 0$ and the step-size be given by $\alpha_t = \frac{a}{t+t_0}$, and let $x^1_i = x^1_j$, for all $i,j\in[n]$. If $a = \frac{6}{\mu}$ and $t_0 \geq \max\Big\{6, \frac{288\osigma^2K}{\mu^2},\frac{3456\osigma^2\lambda^2K}{\mu^2(1-\lambda)},\frac{12\lambda L\sqrt{10}}{\mu(1-\lambda)}\Big\}$, then for $K_{t+1} = (t+t_0+2)K$ and any $\nu \leq \min \lcb 1, \frac{\mu^2}{144\sigma^2K} \rcb$, we have 
    \begin{align*}
        &\E\Big[\exp\Big(\nu K_{t+1}\sum_{i \in [n]}\|\xitp - \oxtp \|^2\Big)\Big] \leq \exp\bigg(\nu K_{t+1}\Big(\sum_{k \in [t]}\lambda^{t-k}S_{k} + \sum_{k \in [t]}\lambda^{t-k}D_{k}\Big) \bigg),
    \end{align*} where $S_k \coloneqq \alpha_k^2\lambda^2\Big(n\sigma^2 + \frac{5\|\nbfs\|^2}{1-\lambda}\Big)$ and $D_k \coloneqq \frac{5\alpha_k^2\lambda^2L^2}{1-\lambda}\Big(\frac{4an\sigma^2\alpha_{k}}{5} + \frac{9\|\nbfs\|^2}{\mu^2} + \frac{(1+t_0)^{6}\|\bx^1-\bxs\|^2}{(k+t_0)^{6}}\Big)$.
\end{lemma}
}

\begin{proof}
    Note that $\sum_{i \in [n]}\|\xitp-\oxtp\|^2 = \|\bxtp - \obxtp\|^2$ and recall $\tbW = \bW - \bJ$ and the update rule \eqref{eq:dsgd-vector}. Then
    \begin{equation*}
        \bxtp - \obxtp = \tbW(\bxt - \obxt - \alpha_t\nbft - \alpha_t\bzt). 
    \end{equation*} Denote the consensus difference by $\tbxtp \coloneqq \bxtp - \obxtp$ and let $C_t \coloneqq \tbW(\tbx_t - \alpha_t\nbft)$. Noting that $\|\tbxtp\|^2 = \|C_t\|^2 - 2\alpha_t\langle \tbW C_t,\bzt \rangle + \alpha_t\|\tbW\bzt\|^2$, we then consider the MGF of $\nu K_{t+1}\|\tbxtp\|^2$ conditioned on $\Ft$, where we recall that $K_{t+1} = (t+t_0+2)K$, for some $K > 0$ and $\nu \in (0,1]$. If $\alpha_t \leq \frac{1}{\osigma\lambda\sqrt{2(t+t_0+2)K}}$, we have
    \begin{align}\label{eq:mgf-cons-gap-intermed}
        \E_t[\exp(\nu K_{t+1}\|\tbxtp\|^2)] &\leq \exp(\nu  K_{t+1}\|C_t\|^2)\sqrt{\E_t\big[\exp(-4\alpha_t\nu K_{t+1}\langle \tbW C_t,\bzt \rangle)\big]\E_t\big[\exp(2\alpha_t^2\lambda^2\nu K_{t+1}\|\bzt\|^2)\big]} \nonumber \\
        &\leq \exp\Big(\nu K_{t+1}\Big((1+6\alpha_t^2\osigma^2\lambda^2\nu K_{t+1})\|C_t\|^2 + \alpha_t^2\lambda^2n\sigma^2\Big)\Big),  
    \end{align} where the first inequality follows from the fact that $C_t$ is $\Ft$-measurable and using Proposition \ref{prop:Holder}, while the second follows from Lemma \ref{lm:noise-properties}. Next, using Proposition \ref{prop:mixing-mx} and defining $\tlambda \coloneqq 1-\lambda \in (0,1]$, we get
    \begin{align}\label{eq:cons-gap-intermed}
        \|C_t\|^2 &\leq \lambda^2\|\tbx_t - \alpha_t\nbft\|^2 \stackrel{(i)}{\leq} \lambda^2(1+\theta)\|\tbx_t\|^2 + \alpha_t^2\lambda^2(1 + \theta^{-1})\|\nbft\|^2 \nonumber \\
        &= (1-\tlambda)(1+\theta)\lambda\|\tbx_t\|^2 + \alpha_t^2\lambda^2(1 + \theta^{-1})\|\nbft\|^2 \nonumber \\
        &\stackrel{(ii)}{\leq} (1-\nicefrac{\tlambda}{2})\lambda\|\tbx_t\|^2 + \alpha_t^2\lambda^2(1 + \nicefrac{2}{\tlambda})\|\nbft\|^2 \nonumber \\
        &\stackrel{(iii)}{\leq} (1-\nicefrac{\tlambda}{2})\lambda\|\tbx_t\|^2 + \frac{6\alpha_t^2\lambda^2}{1-\lambda}\|\nbfs\|^2 + \frac{6\alpha_t^2\lambda^2L^2}{1-\lambda}\|\bxt - \bxs\|^2,
    \end{align} where $(i)$ follows from Proposition \ref{prop:Young}, in $(ii)$ we set $\theta = \frac{\tlambda}{2}$, while $(iii)$ follows from the fact that $\|\nbft\|^2 \leq 2L^2\|\bxt-\bxs\|+2\|\nbfs\|^2$ (recall Proposition \ref{prop:descent-ineq}). Choosing $\alpha_t \leq \frac{\sqrt{1-\lambda}}{2\osigma \lambda\sqrt{6(t+t_0+2)K}}$ and plugging \eqref{eq:cons-gap-intermed} in \eqref{eq:mgf-cons-gap-intermed}, we get
    \begin{align*}
        &\E_t[\exp(\nu K_{t+1}\|\tbxtp\|^2)] \leq \exp\bigg(\nu K_{t+1}\Big(1+\frac{\tlambda}{4}\Big)\Big[\lambda\Big(1-\frac{\tlambda}{2}\Big)\|\tbx_t\|^2 \\ &\quad\quad\quad\quad+ \frac{4\alpha_t^2\lambda^2}{1-\lambda}\|\nbfs\|^2 + \frac{4\alpha_t^2\lambda^2L^2}{1-\lambda}\|\bxt - \bxs\|^2\Big] + \alpha_t^2n\sigma^2\lambda^2\nu K_{t+1} \bigg) \\
        &\leq \exp\bigg(\nu K_{t+1}\Big(\lambda\Big(1 - \frac{\tlambda}{4}\Big)\|\tbx_t\|^2 + \frac{5\alpha_t^2\lambda^2}{1-\lambda}\|\nbfs\|^2 + \frac{5\alpha_t^2\lambda^2L^2}{1-\lambda}\|\bxt - \bxs\|^2 + \alpha_t^2n\sigma^2\lambda^2\Big) \bigg).
    \end{align*} Taking the full expectation and introducing the shorthand $S_t \coloneqq \alpha_t^2\lambda^2\Big(n\sigma^2 + \frac{5\|\nbfs\|^2}{1-\lambda}\Big)$, we get
    \begin{align}\label{eq:cons-gap-pt1}
        &\E[\exp(\nu K_{t+1}\|\tbxtp\|^2)] \leq \exp(S_t\nu K_{t+1})\E\bigg[\exp\bigg(\lambda\nu K_{t+1}\Big(1 - \frac{\tlambda}{4} \Big)\|\tbx_t\|^2 + \frac{5\alpha_t^2\lambda^2L^2\nu K_{t+1}}{1-\lambda}\|\bxt - \bxs\|^2 \bigg)\bigg] \nonumber \\
        &\leq \exp(S_t\nu K_{t+1})\bigg(\E\big[\exp(\lambda\nu K_{t+1}\|\tbx_t\|^2)\big]\bigg)^{1-\nicefrac{\tlambda}{4}}\Bigg(\E\bigg[\exp\bigg( \frac{20\alpha_t^2\lambda^2L^2\nu K_{t+1}}{(1-\lambda)^2}\|\bxt - \bxs\|^2 \bigg)\bigg]\Bigg)^{\nicefrac{\tlambda}{4}} \nonumber \\
        &\leq \exp(S_t\nu K_{t+1})\bigg(\E\big[\exp(\nu K_t\|\tbx_t\|^2)\big]\bigg)^{\lambda(1-\nicefrac{\tlambda}{4})\frac{t+t_0+2}{t+t_0+1}}\Bigg(\E\bigg[\exp\bigg( \frac{20\alpha_t^2\lambda^2L^2\nu K_{t+1}}{(1-\lambda)^2}\|\bxt - \bxs\|^2 \bigg)\bigg]\Bigg)^{\nicefrac{\tlambda}{4}},
    \end{align} where the second inequality follows by applying Proposition \ref{prop:Holder} with $p = (1-\nicefrac{\tlambda}{4})^{-1}$ and $q = \frac{4}{\tlambda}$, while the third follows from the fact that $\lambda\frac{t+t_0+2}{t+t_0+1} \leq 1$ for $t_0 \geq \frac{1}{1-\lambda}$ and applying Proposition \ref{prop:Jensen}. If $\alpha_t \leq \frac{1-\lambda}{2\lambda L\sqrt{10}}$, we get
    \begin{align}\label{eq:cons-gap-pt2}
        \Bigg(\E\bigg[\exp\bigg( \frac{20\alpha_t^2\lambda^2L^2\nu K_{t+1}}{(1-\lambda)^2}\|\bxt - \bxs\|^2 \bigg)\bigg]\Bigg)^{\nicefrac{\tlambda}{4}} &\leq \bigg(\E\big[\exp(\nu K_t\|\bxt-\bxs\|^2) \big]\bigg)^{\frac{5\alpha_t^2(t+t_0+2)\lambda^2L^2}{(1-\lambda)(t+t_0+1)}} \nonumber \\
        &\leq \exp(\nu K_{t+1}D_t),
    \end{align} where we used Proposition \ref{prop:Jensen} in the first and Lemma \ref{lm:bdd-iterates} in the second inequality, with $D_t = \frac{5\alpha_t^2\lambda^2L^2}{1-\lambda}\Big(\frac{4an\sigma^2\alpha_{t}}{a\mu-1} + \frac{9\|\nbfs\|^2}{\mu^2} + \frac{(1+t_0)^{a\mu}\|\bx^1-\bxs\|^2}{(t+t_0)^{a\mu}}\Big)$. Similarly, we use \eqref{eq:cons-gap-pt1}, to get
    \begin{equation}\label{eq:cons-gap-pt3}
    \begin{aligned}
        \bigg(\E\big[\exp(\nu K_t\|\tbx_t\|^2)\big]\bigg)^{\lambda(1-\nicefrac{\tlambda}{4})\frac{t+t_0+2}{t+t_0+1}} &\leq \exp(\lambda(1-\nicefrac{\tlambda}{4})S_{t-1}\nu K_{t+1})\bigg(\E\big[\exp(\nu K_{t-1}\|\tbx_{t-1}\|^2)\big]\bigg)^{\lambda^2(1-\nicefrac{\tlambda}{4})^2\frac{t+t_0+2}{t+t_0}} \\
        &\times\Bigg(\E\bigg[\exp\bigg( \frac{20\alpha_t^2\lambda^2L^2\nu K_t}{(1-\lambda)^2}\|\bx_{t-1} - \bxs\|^2 \bigg)\bigg]\Bigg)^{\nicefrac{\lambda(1-\nicefrac{\tlambda}{4})\tlambda}{4}}.
        \end{aligned}
    \end{equation} Plugging \eqref{eq:cons-gap-pt2} and \eqref{eq:cons-gap-pt3} into \eqref{eq:cons-gap-pt1}, we get
    \begin{align*}
        \E[\exp(\nu K_{t+1}\|\tbxtp\|^2)] &\leq \exp\Big(\nu K_{t+1}\big(S_t + S_{t-1}\lambda(1-\nicefrac{\tlambda}{4})\big) + D_t + \lambda(1-\nicefrac{\tlambda}{4}) D_{t-1}\big)\Big) \\ &\times \Big(\E\big[\exp(\nu K_{t-1}\|\tbx_{t-1}\|^2)\big]\Big)^{\lambda^2(1-\nicefrac{\tlambda}{4})^2\frac{t+t_0+2}{t+t_0}}.
    \end{align*} Unrolling the recursion, it follows that
    \begin{align*}
        \E[\exp(\nu K_{t+1}\|\tbxtp\|^2)] &\leq \exp\bigg(\nu K_{t+1}\Big(\sum_{k = 1}^{t}\lambda^{t-k}S_{k} + \sum_{k = 1}^{t}\lambda^{t-k}D_{k} + \lambda^t\|\tbx_1\|^2\Big) \bigg) \\
        &= \exp\bigg(\nu K_{t+1}\Big(\sum_{k = 1}^{t}\lambda^{t-k}S_{k} + \sum_{k = 1}^{t}\lambda^{t-k}D_{k}\Big) \bigg),
    \end{align*} where the last equality follows from the fact that $x^1_i = x^1_j$, for all $i,j \in [n]$. 
\end{proof}

We are now ready to prove Theorem \ref{thm:main-dsgd-str-cvx}. Prior to that, we restate it, for convenience. 

{
\renewcommand{\thetheorem}{3.7}
\begin{theorem}
    Let \textup{\textbf{(A1)}-\textbf{(A4)}} and \textup{\textbf{(A6)}} hold, the step-size be given by $\alpha_t = \frac{a}{t+t_0}$ and let $x^1_i = x^1_j$, for all $i,j\in[n]$. If $a =\frac{6}{\mu}$, $t_0 \geq \max\Big\{ 6,\frac{3+\lambda}{1-\lambda},\frac{1960\sigma^2\kappa}{\mu},\frac{432\osigma^2\kappa^2}{\mu},\frac{12\kappa\lambda\sqrt{10}}{1-\lambda}, \frac{5184\osigma^2\lambda^2\kappa^2}{\mu(1-\lambda)}\Big\}$ and $\nu = \min\Big\{ 1,\frac{\mu}{432\sigma^2\kappa^2}, \frac{\mu}{72\kappa} \Big\}$, then for any $\delta \in (0,1)$ and $T \geq 1$, with probability at least $1 - \delta$, it holds that 
    \begin{align*}
        &\frac{1}{n}\sum_{i \in [n]}\big(f(\xit)-f^\star\big) = \bigO\bigg(\frac{\nu^{-1}\log(\nicefrac{2}{\delta}) + \sigma^2\kappa\big(1+\log(2d)\big)/\mu}{n(t+t_0)} + \frac{\kappa\lambda^2(1+L)(n\sigma^2+\|\nbfs\|^2\nicefrac{(1+9\kappa^2)}{(1-\lambda))}}{\mu(1-\lambda)n(t+t_0)^2} \\
        &+ \frac{(2+t_0)^3\Delta_f}{n(t+t_0)^3} + \frac{\kappa^3\sigma^2\lambda^2(\kappa\log(t+t_0) + 1)}{\mu(1-\lambda)^2(t+t_0)^3} + \frac{\kappa^3\lambda^2L(1+t_0)\|\bx^1-\bxs\|^2}{(1-\lambda)^2n(t+t_0)^3} + \frac{\kappa^2\lambda^2L(1+t_0)^{6}\|\bx^1-\bxs\|^2}{(1-\lambda)^2n(t+t_0)^{8}} \bigg).
    \end{align*} 
\end{theorem}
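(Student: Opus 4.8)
\noindent\textit{Proof proposal.} The plan is to rescale the optimality gap, convert Lemma~\ref{lm:descent-inequality} into a one-step recursion, control its moment generating function (MGF) together with that of the consensus gap, and feed the result into Lemma~\ref{lm:mgf-bound-str-cvx}. Starting from Lemma~\ref{lm:descent-inequality} — whose hypothesis $\alpha_t\le\tfrac1{2L}$ (and $\alpha_t\le\tfrac1\mu$, needed through Lemma~\ref{lm:consensus-str-cvx}) is guaranteed by the lower bound on $t_0$ — subtract $f^\star$ from both sides and use $\|\nabla f(\oxt)\|^2\ge 2\mu(f(\oxt)-f^\star)$ from Proposition~\ref{prop:str-cvx} to turn the $-\tfrac{\alpha_t}2\|\nabla f(\oxt)\|^2$ term into the contraction $(1-\alpha_t\mu)(f(\oxt)-f^\star)$. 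Multiplying by $n(t+t_0+1)$ and setting $F^t:=n(t+t_0)(f(\oxt)-f^\star)$, $A_t:=(t+t_0+1)\alpha_t$ gives precisely \eqref{eq:proof-sketch-str-cvx}.

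Next I would bound the MGF of $F^{t+1}$. Conditioning on $\Ft$, the terms $(1-\alpha_t\mu)\tfrac{t+t_0+1}{t+t_0}F^t$ and $\tfrac{A_tL^2}2\sum_{i\in[n]}\|\xit-\oxt\|^2$ in \eqref{eq:proof-sketch-str-cvx} are $\Ft$-measurable, so only $-A_t\langle\nabla f(\oxt),\ozt\rangle+\alpha_tA_tnL\|\ozt\|^2$ is random; split these by H\"older and apply the two MGF bounds of Lemma~\ref{lm:noise-properties}. The linear term yields $\exp(\bigO(\nu^2A_t^2\|\nabla f(\oxt)\|^2/n))$, and Proposition~\ref{prop:descent-ineq} ($\|\nabla f(\oxt)\|^2\le 2LF^t/(n(t+t_0))$) folds this back into the $\nu F^t$ coefficient; since $\alpha_t=\tfrac6{\mu(t+t_0)}$ and $\nu\le\tfrac\mu{432\sigma^2\kappa^2}$, the coefficient stays $\le 1-\tfrac4{t+t_0}$. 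The quadratic term yields $\exp(\bigO(\nu\alpha_tA_td\sigma^2L))$ via sub-Gaussianity of $\ozt$ (applicable because the lower bound on $t_0$ forces $\bigO(\nu\alpha_tA_td\sigma^2L)\le1$); this is absorbed into a $C_1/(t+t_0)$ term with $C_1=\bigO(\nu d\sigma^2\kappa/\mu)$. Taking full expectations, a second H\"older split with exponents $p=\tfrac{t+t_0}{t+t_0-4}$, $q=\tfrac{t+t_0}4$ decouples $F^t$ from the $\Ft$-measurable consensus gap, and after Jensen one obtains
\begin{equation*}
    \E[\exp(\nu F^{t+1})]\le\exp\big(\bigO(\nu\alpha_tA_td\sigma^2L)\big)\big(\E[\exp(\nu F^t)]\big)^{1-\frac2{t+t_0}}\Big(\E\big[\exp\big(\tfrac{\nu(t+t_0)A_tL^2}{8}\textstyle\sum_{i\in[n]}\|\xit-\oxt\|^2\big)\big]\Big)^{\frac4{t+t_0}}.
\end{equation*}
The last factor is handled by Lemma~\ref{lm:consensus-str-cvx} (index shifted by one), choosing its parameters $K,\nu_{\mathrm{L4}}$ so that $\nu_{\mathrm{L4}}(t+t_0+1)K=\tfrac{\nu(t+t_0)A_tL^2}8$; the admissibility constraint $\nu_{\mathrm{L4}}\le\min\{1,\mu^2/(144\sigma^2K)\}$ is exactly what forces $\nu=\bigO(\mu/(\sigma^2\kappa^2))$, and its hidden step-size conditions hold by the lower bound on $t_0$. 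Evaluating the $\lambda^{t-k}$-weighted sums there via Lemma~\ref{lm:technical-result} ($\sum_k\lambda^{t-k}\alpha_k^2\le\tfrac{3\alpha_{t-1}^2}{1-\lambda}$), raising to the power $\tfrac4{t+t_0}$ and taking logs converts the display into $Y^{t+1}\le(1-\tfrac2{t+t_0})Y^t+\sum_{i\in[M]}C_i/(t+t_0)^i$ for $Y^t:=\log\E[\exp(\nu F^t)]$, with $C_2=\bigO\big(\nu\lambda^2\kappa^2(n\sigma^2+\|\nbfs\|^2(1+\kappa^2)/(1-\lambda))/(1-\lambda)\big)$ and the higher $C_i$ carrying the $\Delta_f$ and $\|\bx^1-\bxs\|^2$ terms. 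This is exactly the recursion driving Lemma~\ref{lm:mgf-bound-str-cvx} with decay parameter $2\in(1,2]$; applying it, dividing the exponent by $\nu$ and using Markov's inequality gives, with probability at least $1-\delta/2$, $f(\ox^T)-f^\star=F^T/(n(T+t_0))$ of the claimed order.

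To finish, I would separately bound the consensus gap: Lemma~\ref{lm:consensus-str-cvx} at index $T$ with $\nu_{\mathrm{L4}}$ at its maximal admissible value, combined with Lemma~\ref{lm:technical-result} and Markov, gives with probability $\ge1-\delta/2$ a matching bound on $\tfrac1n\sum_{i\in[n]}\|\xit-\oxt\|^2$ at $t=T$. A union bound over the two events, together with the elementary smoothness inequality $\tfrac1n\sum_{i\in[n]}(f(\xit)-f^\star)\le 2(f(\oxt)-f^\star)+\tfrac Ln\sum_{i\in[n]}\|\xit-\oxt\|^2$ at $t=T$, yields the stated high-probability bound (the $\bigO(\cdot)$ absorbing the higher-order terms written out in full in the Appendix).

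The main obstacle is the middle step: arranging the two H\"older splits, the value of $\nu$, and the parameters of Lemma~\ref{lm:consensus-str-cvx} so that the coefficient of $\nu F^t$ stays in the narrow band $1-\tfrac{\hat a}{t+t_0}$ with $\hat a\in(1,2]$ required by Lemma~\ref{lm:mgf-bound-str-cvx} — this budget is tight because the $\langle\nabla f(\oxt),\ozt\rangle$ term, after being bounded via $\|\nabla f\|^2\le 2L(f-f^\star)$, feeds back into the $F^t$ coefficient and competes with the native contraction $\alpha_t\mu$ — while at the same time repackaging every residual (the $\|\ozt\|^2$ term and all the $S_k,D_k$ pieces of Lemma~\ref{lm:consensus-str-cvx}) into the $\sum_iC_i/(t+t_0)^i$ form with the right exponents, and verifying that the single choice of $t_0$ and $\nu$ in the statement simultaneously meets every step-size and admissibility constraint inherited from Lemmas~\ref{lm:noise-properties} and~\ref{lm:consensus-str-cvx}.
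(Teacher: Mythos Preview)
Your proposal is correct and follows essentially the same route as the paper's proof: define $F^t=n(t+t_0)(f(\oxt)-f^\star)$, bound the conditional MGF via H\"older and Lemma~\ref{lm:noise-properties}, fold the $\langle\nabla f(\oxt),\ozt\rangle$ contribution back into the $F^t$ coefficient using $\|\nabla f\|^2\le 2L(f-f^\star)$, decouple $F^t$ from the consensus gap by a second H\"older split, invoke Lemma~\ref{lm:consensus-str-cvx} (with Lemma~\ref{lm:technical-result} to evaluate the $\lambda$-weighted sums), feed the resulting recursion into Lemma~\ref{lm:mgf-bound-str-cvx}, apply Markov, then union-bound with a separate consensus-gap tail bound and the smoothness inequality. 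The only discrepancies are cosmetic: the paper uses H\"older exponents $p=1+\tfrac{\alpha_t\mu}{4}$, $q=1+\tfrac{4}{\alpha_t\mu}$ (yielding $qc_t\le 3\kappa L(t+t_0+1)$, hence $K=3\kappa L$ in Lemma~\ref{lm:consensus-str-cvx}) rather than your $p=\tfrac{t+t_0}{t+t_0-4}$, and obtains the contraction factor $b_t\le 1-\tfrac{2}{t+t_0}$ directly (your ``$1-\tfrac4{t+t_0}$'' then ``$1-\tfrac2{t+t_0}$'' is a minor slip), but these lead to the same application of Lemma~\ref{lm:mgf-bound-str-cvx} with decay parameter $2$.
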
 
}

\begin{proof}
    Starting from Lemma \ref{lm:descent-inequality} and using property 3 of Proposition \ref{prop:str-cvx} with $x = \oxt$, we have
    \begin{equation*}
        f(\oxtp) \leq f(\oxt) - \alpha_t\mu\big(f(\oxt) - f^\star \big) - \alpha_t\langle \nabla f(\oxt), \oz_t \rangle + \alpha_t^2L\|\oz_t\|^2 + \frac{\alpha_t L^2}{2n}\sum_{i \in [n]}\|\xit - \oxt\|^2.
    \end{equation*} Subtracting $f^\star$ from both sides and defining $F_t = n(t+t_0)(f(\ox_{t})-f^\star)$, it then follows that
    \begin{align*}
        F_{t+1} &\leq (1 - \alpha_t\mu)\frac{t+t_0+1}{t+t_0}F_t - \alpha_t n(t+t_0+1)\langle \nabla f(\oxt), \oz_t \rangle \\
        &+ \alpha_t^2n(t+t_0+1)L\|\oz_t\|^2 + \frac{\alpha_t(t+t_0+1)L^2}{2}\|\bxt - \obxt\|^2.
    \end{align*} Next, consider the MGF of $F_{t+1}$ conditioned on $\Ft$. Let $\nu \in (0,1]$ be a positive constant, we then have
    \begin{align*}
        \E_t\big[\exp(\nu F_{t+1})\big] &\stackrel{(a)}{\leq} \exp\bigg((1 - \alpha_t\mu)\frac{t+t_0+1}{t+t_0}\nu F_t + \frac{\alpha_t\nu(t+t_0+1) L^2}{2}\|\bxt - \obxt\|^2 \bigg) \\
        &\times \E_t\Big[\exp(- \alpha_t\nu n(t+t_0+1)\langle \nabla f(\oxt), \oz_t \rangle + \alpha_t^2\nu n(t+t_0+1) L\|\oz_t\|^2) \Big] \\
        &\stackrel{(b)}{\leq} \exp\bigg((1 - \alpha_t\mu)\frac{t+t_0+1}{t+t_0}\nu F_t + \frac{\alpha_t\nu(t+t_0+1) L^2}{2}\|\bxt - \obxt\|^2 \bigg) \\
        &\times \sqrt{\E_t\Big[\exp(-2 \alpha_t\nu n(t+t_0+1)\langle \nabla f(\oxt), \oz_t \rangle)\Big]\E_t\Big[\exp(2\alpha_t^2\nu n(t+t_0+1) L\|\oz_t\|^2) \Big]} \\
        &\stackrel{(c)}{\leq} \exp\bigg((1 - \alpha_t\mu)\frac{t+t_0+1}{t+t_0}\nu F_t + \frac{\alpha_t\nu(t+t_0+1) L^2}{2}\|\bxt - \obxt\|^2 \bigg) \\
        &\times \exp\bigg(\frac{3 \alpha_t^2\nu^2n(t+t_0+1)^2\sigma^2\|\nabla f(\oxt)\|^2}{2} + 15\alpha_t^2\nu\sigma^2(t+t_0+1)L\big(1+\log(2d)\big)\bigg) \\
        &\stackrel{(d)}{\leq} \exp\bigg(\nu\Big(b_tF_t + c_t\|\bxt - \obxt\|^2 + d_t\Big) \bigg),
    \end{align*} where in $(a)$ we used the fact that $F_t$ and $\|\bxt-\obxt\|^2$ are $\Ft$-measurable, $(b)$ follows from Proposition \ref{prop:Holder}, in $(c)$ we use Lemma \ref{lm:noise-properties}, Proposition \ref{prop:Jensen} and impose the condition $\alpha_t \leq \frac{1}{\sigma\sqrt{30(t+t_0+1)L}}$, while $(d)$ follows from Proposition \ref{prop:descent-ineq} and the definition of $F_t$, with $b_t = \Big(1 - \alpha_t\mu + 3\alpha_t^2\nu(t+t_0+1) L\Big)\frac{t+t_0+1}{t+t_0}$, $c_t = \frac{\alpha_t(t+t_0+1)L^2}{2}$ and $d_t = 15\alpha_t^2\sigma^2(t+t_0+1)L\big(1+\log(2d)\big)$. Taking the full expectation and applying Proposition \ref{prop:Holder}, we get
    \begin{align}\label{eq:bdd-mgf-str-cvx}
        \E\big[\exp(\nu F_{t+1})\big] &\leq \exp\big(\nu d_t\big) \E\Big[\exp\Big(\nu b_tF_t + \nu c_t\|\bxt-\obxt\|^2 \Big) \Big] \nonumber \\
        &\leq \exp\big(\nu d_t\big) \sqrt[p]{\E\big[\exp\big(\nu pb_tF_t\big)\big]}\sqrt[q]{\E\big[\exp\big(\nu qc_t\|\bxt-\obxt\|^2 \big)\big]}
    \end{align} for some $p,q \in [1,\infty]$. We next analyze the expression $pb_t$. Recalling the definition of $b_t$, we get
    \begin{equation*}
        pb_t = p\bigg(1 - \frac{a\mu}{t+t_0} + \frac{3a^2\nu(t+t_0+1)L}{(t+t_0)^2}\bigg)\frac{t+t_0+1}{t+t_0} \leq p\bigg(1 - \frac{a(\mu - 6a\nu L)}{t+t_0}\bigg)\frac{t+t_0+1}{t+t_0}.
    \end{equation*} Choosing $\nu \leq \frac{\mu}{12aL}$ and $p = 1 + \frac{\alpha_t\mu}{4}$, it follows that
    \begin{equation}\label{eq:pbt-bound}
        pb_t \leq p\bigg(1-\frac{a\mu}{2(t+t_0)} \bigg)\frac{t+t_0+1}{t+t_0} \leq \bigg(1 - \frac{a\mu}{4(t+t_0)}\bigg)\bigg(1 +\frac{1}{t+t_0}\bigg) \leq 1,
    \end{equation} where the last inequality follows since $a\mu > 4$. Next, note that the choice of $p = 1+\frac{\alpha_t\mu}{4}$ implies that $q = 1 + \frac{4}{\alpha_t\mu}$. From the definition of $c_t$, we then have
    \begin{align}\label{eq:qct-bound}
        qc_t = \bigg(1 + \frac{4}{\alpha_t\mu}\bigg)\frac{\alpha_t(t+t_0+1)L^2}{2} = \bigg(\frac{\alpha_t}{2} + \frac{2}{\mu} \bigg)(t+t_0+1)L^2 \leq \frac{3L^2}{\mu}(t+t_0+1), 
    \end{align} where the first inequality follows from $\alpha_t \leq \frac{1}{\mu}$. Using \eqref{eq:pbt-bound} and \eqref{eq:qct-bound} in \eqref{eq:bdd-mgf-str-cvx}, we get
    \begin{align}\label{eq:setup-for-Lemma-5}
        \E\big[\exp(\nu F_{t+1})\big] &\leq \exp\big(d_t\nu\big)\sqrt[p]{\big(\E\big[\exp(\nu F_{t})\big]\big)^{pb_t}}\sqrt[q]{\E\big[\exp\big(\nu(t+t_0+1)3\kappa L\|\bxt-\obxt\|^2 \big)\big]} \nonumber \\
        &= \exp\big(d_t\nu)\big(\E\big[\exp(\nu F_{t})\big]\big)^{b_t}\sqrt[q]{\E\big[\exp\big(\nu (t+t_0+1)3\kappa L\|\bxt-\obxt\|^2 \big)\big]}.
    \end{align} Using Lemma \ref{lm:consensus-str-cvx} with $K = 3\kappa L$, we get
    \begin{equation*}
        \E\big[\exp\big(\nu q c_t\|\bxt - \obxt\|^2\big)\big] \leq \E\big[\exp\big(\nu K_t\|\bxt - \obxt\|^2\big)\big] \leq \exp\bigg(\nu K_{t}\Big(\sum_{k = 1}^{t-1}\lambda^{t-1-k}S_{k} + \sum_{k = 1}^{t-1}\lambda^{t-1-k}D_{k}\Big) \bigg),
    \end{equation*} where we recall that $S_k = \alpha_k^2\lambda^2\Big(n\sigma^2 + \frac{5\|\nbfs\|^2}{1-\lambda}\Big)$ and $D_k = \frac{5\alpha_k^2\lambda^2L^2}{1-\lambda}\Big(\frac{4an\sigma^2\alpha_{k}}{a\mu-1} + \frac{9\|\nbfs\|^2}{\mu^2} + \frac{(1+t_0)^{a\mu}\|\bx^1-\bxs\|^2}{(k+t_0)^{a\mu}}\Big)$. To further bound the above expression, we use Lemma \ref{lm:technical-result}, to get
    \begin{align*}
        \sum_{k = 1}^{t-1}\lambda^{t-1-k}(S_k &+ D_k) \leq \frac{4a^2\lambda^2(n\sigma^2 + \nicefrac{5\|\nbfs\|^2(1 + \nicefrac{9L^2}{\mu^2})}{(1-\lambda)})}{(1-\lambda)(t+t_0)^2} \\ &+ \frac{32a^4n\sigma^2\lambda^2L^2}{(1-\lambda)^2(t+t_0)^3} +\frac{20a^2\lambda^2L^2(1+t_0)^{6}\|\bx^1-\bxs\|^2}{(1-\lambda)^2(t+t_0)^{8}}. 
    \end{align*} Noting that $\frac{1}{q} = \frac{\alpha_t\mu}{4+\alpha_t\mu} \leq \frac{\alpha_t\mu}{4}$, we finally get
    \begin{equation*}
        \sqrt[q]{\E\big[\exp\big(\nu q c_t\|\bxt - \obxt\|^2\big)\big]} \leq \exp\bigg(\frac{\nu K_t\alpha_t\mu}{4} N_t\bigg) \leq \exp\bigg(\frac{3aL^2\nu N_t}{2} \bigg), 
    \end{equation*} where $N_t \coloneqq \frac{4a^2\lambda^2(n\sigma^2 + \nicefrac{5\|\nbfs\|^2(1 + \nicefrac{9L^2}{\mu^2})}{(1-\lambda)})}{(1-\lambda)(t+t_0)^2} + \frac{32a^4n\sigma^2\lambda^2L^2}{(1-\lambda)^2(t+t_0)^3} +\frac{20a^2\lambda^2L^2(1+t_0)^{6}\|\bx^1-\bxs\|^2}{(1-\lambda)^2(t+t_0)^{8}}$. Define $G_1 \coloneqq \frac{1080\kappa\sigma^2(1 + \log(2d))}{\mu}$, $G_2 \coloneqq \frac{1296\kappa^2\lambda^2(n\sigma^2+\nicefrac{5\|\nbfs\|^2(1 + 9\kappa^2)}{(1-\lambda)})}{\mu(1-\lambda)}$, $G_4 \coloneqq \frac{6480\kappa^3\lambda^2L(1+t_0)^{6}\|\bx^1-\bxs\|^2}{(1-\lambda)^2}$ and $G_3 \coloneqq \frac{373248n\kappa^4\sigma^2\lambda^2}{\mu(1-\lambda)^2}$, and plug into \eqref{eq:setup-for-Lemma-5}, to get
    \begin{equation*}
        \E\big[\exp\big(\nu F_{t+1}\big)\big] \leq \Big(\E\big[\exp\big(\nu F_t \big)\big]\Big)^{b_t}\exp\bigg(\sum_{i \in [3]}\frac{\nu G_i}{(t+t_0)^i} + \frac{\nu G_4}{(t+t_0)^{8}}\bigg).
    \end{equation*} Recalling the definition of $b_t$ and \eqref{eq:pbt-bound}, it follows that $b_t \leq 1 - \frac{a\mu/2-1}{t+t_0} = 1 - \frac{2}{t+t_0}$, therefore we can bound the MGF of $\nu F_{t+1}$ using Lemma \ref{lm:mgf-bound-str-cvx} with $a = 2$, $M = 8$, $C_i = G_i$ for $i \in [3]$, $C_8 = G_4$ and $C_j = 0$, for $j \in \{4,\ldots,7\}$, to  finally get
    \begin{align}\label{eq:bdd-mgf}
        \E\big[\exp\big( \nu F_{t+1} \big) \big] &\leq \exp\bigg(\frac{(t_0+2)^{3}\nu \Delta_f}{(t+1+t_0)^2} + 4\nu G_1 + \frac{4\nu G_2}{t+1+t_0}\bigg) \nonumber \\
        &\times \exp\bigg(\frac{4\nu G_3\log(t+t_0+1)}{(t+t_0+1)^2} +\frac{4\nu G_4}{5(t_0+1)^5(t+1+t_0)^2} \bigg).
    \end{align} Applying Markov's inequality, we then get, for any $\epsilon > 0$
    \begin{equation*}
        \Prob\lp f(\oxtp) - f^\star > \epsilon \rp = \Prob\big( \nu F_{t+1} > \nu n(t+1+t_0)\epsilon \big) \leq \exp\lp -\nu n(t+1+t_0)\epsilon \rp\E\big[\big(\nu F_{t+1} \big) \big].
    \end{equation*} Using \eqref{eq:bdd-mgf}, it can be readily verified that for any $\delta \in (0,1)$, choosing 
    \begin{align}\label{eq:eps-1}
        \epsilon_t^1 &= \frac{\nu^{-1}\log(\nicefrac{1}{\delta}) + 4G_1}{n(t+t_0)} + \frac{4G_2}{n(t+t_0)^2} + \frac{(t_0+2)^3\Delta_f + 4G_3\log(t+t_0) +4G_4/5(t_0+1)^5}{n(t+t_0)^3}, 
    \end{align} results in $\Prob\lp f(\oxt) - f^\star > \epsilon_t^1 \rp \leq \delta$. Next, using Proposition \ref{prop:descent-ineq} with $x = \xit$ and $y = \oxt$, we get
    \begin{align*}
        f(\xit) &\leq f(\oxt) + \langle \nabla f(\oxt), \xit - \oxt\rangle + \frac{L}{2}\|\xit-\oxt\|^2 \\
        &\stackrel{(i)}{\leq} f(\oxt) + \frac{1}{2L}\|\nabla f(\oxt)\|^2 + \frac{L}{2}\|\xit - \oxt\|^2 + \frac{L}{2}\|\xit-\oxt\|^2 \\
        &\stackrel{(ii)}{\leq} f(\oxt) + f(\oxt) - f^\star + L\|\xit-\oxt\|^2,
    \end{align*} where in $(i)$ we used Proposition \ref{prop:Young} with $\epsilon = L$, while $(ii)$ follows from Proposition \ref{prop:descent-ineq}. Subtracting $f^\star$ from both sides and averaging over all users $i \in [n]$, we get
    \begin{equation}\label{eq:global-local-cost}
        \frac{1}{n}\sum_{i \in [n]}\big(f(\xit)-f^\star\big) \leq 2\big(f(\oxt)-f^\star\big) + \frac{L}{n}\|\bxt-\obxt\|^2.
    \end{equation} We now consider two events, $A_{t,\epsilon} \coloneqq \lcb \omega: f(\oxt)-f^\star > \epsilon \rcb$ and $B_{t,\epsilon} \coloneqq \lcb \omega: \frac{L}{n}\|\bxt - \oxt\|^2 > \epsilon \rcb$. From the previous analysis, we know that, for any $\delta \in (0,1)$ and $\epsilon_t^1$ from \eqref{eq:eps-1}, we have $\Prob\big( A_{t,\epsilon_t^1}\big) \leq \delta$. Similarly, using Markov's inequality and Lemma \ref{lm:consensus-str-cvx} with $K = L$, we have, for any $\epsilon > 0$
    \begin{align*}
        \Prob\bigg( \frac{L}{n}\|\bxt - \obxt\|^2 > \epsilon \bigg) &= \Prob\lp \nu K_{t}\| \bxt - \obxt\|^2 > \nu n(t+t_0+1)\epsilon\rp \\
        &\leq \exp(-\epsilon\nu n(t+t_0+1))\E\big[\exp\big(\nu K_{t}\|\bxt - \obxt\|^2 \big) \big] \\
        &\leq \exp\Bigg( -\epsilon\nu n(t+t_0+1) + K_t\nu\bigg(\frac{4a^2\lambda^2(n\sigma^2 + \nicefrac{5\|\nbfs\|^2(1 + \nicefrac{9L^2}{\mu^2})}{(1-\lambda)})}{(1-\lambda)(t+t_0)^2}\bigg)\Bigg) \\
        &\times \exp\Bigg(K_t\nu\bigg(\frac{32a^4n\sigma^2\lambda^2L^2}{(1-\lambda)^2(t+t_0)^3} +\frac{20a^2\lambda^2L^2(1+t_0)^{6}\|\bx^1-\bxs\|^2}{(1-\lambda)^2(t+t_0)^{8}} \bigg) \Bigg).
    \end{align*} Therefore, it can be readily seen that for any $\delta \in (0,1)$, choosing 
    \begin{align}\label{eq:eps-2}
        \epsilon_t^2 = \frac{\nu^{-1}\log(\nicefrac{1}{\delta})}{n(t+t_0+1)} &+ \frac{144\kappa\lambda^2(n\sigma^2 + \nicefrac{5\|\nbfs\|^2(1 + 9\kappa^2)}{(1-\lambda)})}{\mu(1-\lambda)n(t+t_0)^2} \nonumber \\ 
        &+ \frac{41472\kappa^3\sigma^2\lambda^2}{\mu(1-\lambda)^2(t+t_0)^3} +\frac{720\kappa^2\lambda^2L(1+t_0)^{6}\|\bx^1-\bxs\|^2}{(1-\lambda)^2n(t+t_0)^{8}},  
    \end{align} results in $\Prob\big(B_{t,\epsilon_t^2}\big) \leq \delta$. Finally, let $C_t \coloneqq \lcb \omega: \frac{1}{n}\sum_{i \in [n]}\big(f(\xit)-f^\star\big) > 2\epsilon_t^1+\epsilon_t^2 \rcb$. From \eqref{eq:global-local-cost} it readily follows that, for any $\delta \in (0,\nicefrac{1}{2})$, we have
    \begin{equation*}
        \Prob(C_t) \leq \Prob\big(A_{t,\epsilon_t^1}\cap B_{t,\epsilon_t^2}\big) \leq \Prob\big(A_{t,\epsilon_t^1}\big) + \Prob\big(B_{t,\epsilon_t^2}\big) \leq 2\delta.
    \end{equation*} Therefore, for any $\delta \in (0,1)$, with probability at least $1 - \delta$, we get
    \begin{align*}
        &\frac{1}{n}\sum_{i \in [n]}\big(f(\xit)-f^\star\big) = \bigO\bigg(\frac{\nu^{-1}\log(\nicefrac{2}{\delta}) + \sigma^2\kappa(1+\log(2d))/\mu}{n(t+t_0)} + \frac{\kappa\lambda^2(1+L)(n\sigma^2+\|\nbfs\|^2\nicefrac{(1+9\kappa^2)}{(1-\lambda))}}{\mu(1-\lambda)n(t+t_0)^2} \\
        &+ \frac{(2+t_0)^3\Delta_f}{n(t+t_0)^3} + \frac{\kappa^3\sigma^2\lambda^2(\kappa\log(t+t_0) + 1)}{\mu(1-\lambda)^2(t+t_0)^3} + \frac{\kappa^3\lambda^2L(1+t_0)\|\bx^1-\bxs\|^2}{(1-\lambda)^2n(t+t_0)^3} + \frac{\kappa^2\lambda^2L(1+t_0)^{6}\|\bx^1-\bxs\|^2}{(1-\lambda)^2n(t+t_0)^{8}} \bigg).
    \end{align*} Finally, it can be verified that the conditions on $a$, $t_0$ and $\nu$ in the statement of the theorem ensure that all the step-size conditions are satisfied, completing the proof.
\end{proof}

We remark that, similarly to the proof of Theorem \ref{thm:main-non-cvx}, one can analyze the strongly convex case with a fixed step-size, resulting in the dependence on some terms, e.g., optimality and iterate gaps $\Delta_f$ and $\|\bx^1 - \bx^\star\|$, decaying exponentially fast, i.e., $\bigO\big((\|\bx^1-\bxs\|^2 + \Delta_f)e^{-CT}\big)$, for some $C > 0$, as shown in, e.g., \cite{pmlr-v119-koloskova20a} for MSE guarantees in decentralized, or \cite{liu2024revisiting} for HP guarantees in centralized settings. For simplicity, we omit this analysis.

\section{Numerical Experiments}\label{sup:num}

In this section we provide some further numerical results and details omitted from the main body. Subsection \ref{subsec:synth} presents results on synthetic data, while Subsection \ref{subsec:real} presents results on real data. 

\subsection{Synthetic Data}\label{subsec:synth}

\textbf{Methodology.} We consider an instance of \eqref{eq:decentr-opt}, with $f_i(x) = \frac{1}{2}x^\top A_ix + b_i^\top x$, where $A_i \in \R^{d \times d}$ is positive definite, making each $f_i$ strongly convex, with \emph{unbounded gradients}. We consider an undirected communication network $G = (V,E)$, corresponding to a randomly generated Erd\H{o}s--R\'enyi graph with connectivity parameter $p = 0.8$, while the weight matrix $W \in \R^{n \times n}$ is computed using the Metropolis-Hastings weight scheme, e.g., \cite{XIAO200465}. When queried by user $i \in [n]$ in iteration $t \geq 1$, the \sfo returns a noisy gradient of $f_i$ evaluated at $\xit$, i.e., $\git = A_i\xit + b_i + \zit$, where $\zit \in \R^d$ is a zero-mean Gaussian random vector, making the noise consistent with assumption \textbf{(A4)}. We use the \dsgd method outlined in Algorithm \ref{alg:dsgd}, with the time-varying step-size schedule $\alpha_t = \frac{1}{t+1}$ and shared initializatio $x^1_i = 0$, for all $i \in [n]$. We are interested in testing the following two facets of our theory.
\begin{enumerate}
    \item \emph{Exponentially decaying tails} - we want to verify that the tail probability decays at an exponential scale, as defined in \eqref{eq:hp-definition} and predicted in Theorem \ref{thm:main-dsgd-str-cvx}.

    \item \emph{Linear speed-up} - we want to verify that the tail probability decays faster as the number of users increases, as predicted in Theorem \ref{thm:main-dsgd-str-cvx}.  
\end{enumerate}

To verify these two facets, we measure the performance in terms of the \emph{empirical tail probability} $\Prob^t_{n,\varepsilon}$, computed as follows. We first run \dsgd for $T$ iterations and repeat it over $R$ runs. Next, for each $t \in [T]$, we use Monte-Carlo sampling to create a set $S_t$ of indices from $[R]$, of some fixed length $|S_t| = S$. For any $\varepsilon > 0$, the empirical probability is computed as
\begin{equation*}
    \Prob^t_{n,\varepsilon} = \frac{1}{S}\sum_{r \in S_t}\mathbb{I}\bigg(\frac{1}{n}\sum_{i \in [n]}\|x_i^{t,r} - x^\star\|^2 > \varepsilon\bigg),
\end{equation*} where $x_i^{t,r} \in \R^d$ is the model of user $i$ in iteration $t$ and run $r$, with $x^\star = \argmin_{x \in \R^d}f(x)$ being the solution of the global problem and $\mathbb{I}(A)$ being the indicator of event $A$. The empirical tail probability is a proxy to the true tail probability and is the main metric in our experiments. To further illustrate our results, we also compute the \emph{empirical mean-squared error} $\E^t_n$, which is the optimality gap at time $t$, averaged across all users and runs, i.e., $\E^t_n = \frac{1}{nR}\sum_{i \in [n]}\sum_{r \in [R]}\|x_i^{t,r} - x^\star\|^2$. We next present the results.

\textbf{Exponentially decaying tails.} To verify that the (empirical) tail probability decays at an exponential rate, we consider a fixed network of $n = 30$ users. For local costs, each matrix $A_i$ is generated using python's \texttt{sklearn} library function \texttt{make\_sparse\_spd\_matrix}, with dimension $d = 50$ and value $\texttt{alpha} = 0.9$, while vectors $b_i$ are drawn from a multivariate normal distribution $\mathcal{N}(\mathbf{0}_d,\sigma_i^2 I_d)$, where $\sigma_i^2 = 1$ for $i \in \{1,\ldots,10\}$, $\sigma_i^2 = 2$ for $i \in \{11,\ldots,20\}$, and $\sigma^2_i = 4$ for $i \in \{21,\ldots,30\}$, ensuring that the data is heterogeneous across users. We run \dsgd for $T = 10000$ iterations and repeat across $R = 5000$ runs. To compute the empirical probability, in each iteration we draw $S = 3000$ Monte-Carlo samples and consider threshold values $\varepsilon = \big\{10^{-2},10^{-3},10^{-4}\big\}$. The results are presented in Figure \ref{fig:ex-dec-tail}, where the left plot shows the MSE behaviour, while the right plot shows the tail probability behaviour. We can see that the empirical tail probability decays exponentially fast for all values of $\varepsilon$, as predicted by our theory. Note that for the threshold value $\varepsilon = 10^{-4}$, the tail probability starts decaying exponentially after approximately $t = 6000$ iterations, which is consistent with the MSE behaviour on the left plot, where we can see that it takes \dsgd around the same number of iterations to reach the average accuracy $\E_n^t = 10^{-4}$.    

\begin{figure*}[!ht]
\centering
\begin{tabular}{cc}
\includegraphics[scale=0.45]{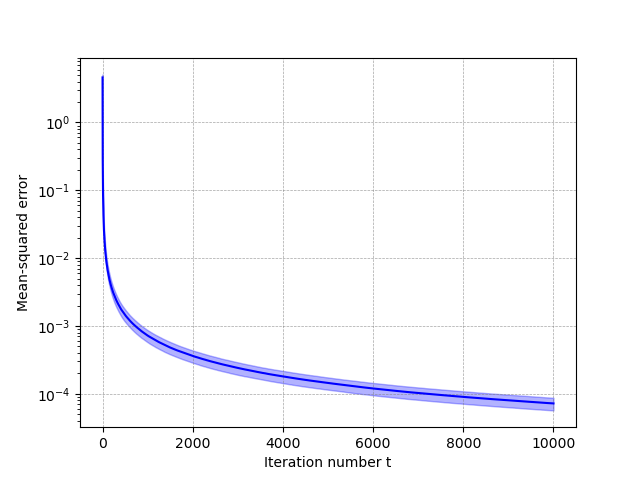}
&
\includegraphics[scale=0.45]{plots/tail_prob.png}
\end{tabular}
\caption{Performance of \dsgd, in the MSE sense (left) and HP sense (right). We can see that \dsgd achieves an exponential tail decay for all values of threshold $\varepsilon$. For the threshold $\varepsilon = 10^{-4}$, the tail probability starts decaying exponentially after approximately $t = 6000$ iterations, which is consistent with the MSE behaviour, where we can see that \dsgd takes around the same number of iterations to reach the average accuracy $\E^t_n = 10^{-4}$.}
\label{fig:ex-dec-tail}
\end{figure*}

\textbf{Linear speed-up.} To verify that linear speed-up in the number of users is achieved, we consider three networks, with $n = \{25,50,100\}$ users. In all three cases, we use Erd\H{o}s--R\'enyi communications graphs with Metropolis-Hastings weights that satisfy $\lambda = \|W - J\| \approx 0.6$. To ensure that the effect of heterogeneity is consistent across different networks, we fix the matrices $A_i$, i.e., $A_i \equiv A + I_d$, where $A \in \R^{d \times d}$ is generated using \texttt{sklearn}'s \texttt{make\_sparse\_spd\_matrix}, with $d = 50$ and $\texttt{alpha} = 0.9$, while vectors $b_i$ are given by $b_i = \beta_i\mathbf{1}_d$, where $\beta_i \in \{-2,-1,0,1,3\}$ are selected uniformly at random and in equal proportion across users (i.e., one fifth of users has $\beta_i = -2$, one fifth has $\beta_i = -1$, etc). Generating the networks and costs in this manner ensures that both network connectivity and user heterogeneity are constant across all three network settings, allowing us to properly capture the effect of linear speed-up. We run \dsgd for $T = 3000$ iterations and repeat across $R = 1000$ runs. To compute the empirical probability, in each iteration we draw $S = 600$ Monte-Carlo samples, with thresholds $\varepsilon = \big\{10^{-2},10^{-3},10^{-4}\big\}$. The results are presented in Figure \ref{fig:lin-speed}, where the plots left to right and top to bottom respectively show the MSE and tail probability behaviours for different values of $\varepsilon$. We can again see that the empirical tail probability decays exponentially fast for all values of $\varepsilon$, with the decay being consistently faster for larger number of users, demonstrating linear speed-up in the HP sense. Finally, note that in the lower right plot (i.e., threshold $\varepsilon = 10^{-4}$) the network with $n = 25$ users has a constant tail probability $\Prob^t_{n,\varepsilon} = 1$, while for the network with $n = 50$ the tail probability is slowly starting to decrease, which is again consistent with the MSE behaviour on the upper left plot, as the accuracy $\E^t_{n} = 10^{-4}$ for these two networks is not reached (on average) in the allocated number of iterations.    

\begin{figure*}[!ht]
\centering
\begin{tabular}{cc}
\includegraphics[scale=0.45]{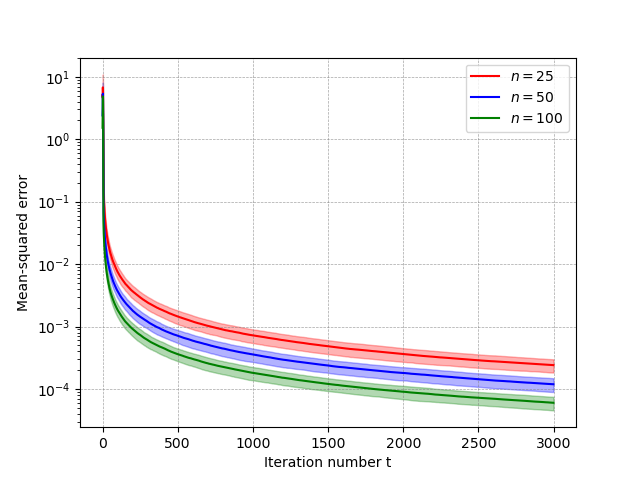}
&
\includegraphics[scale=0.45]{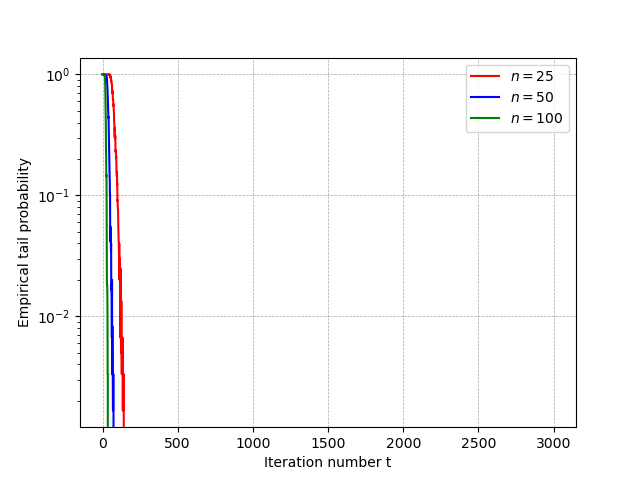} \\
\includegraphics[scale=0.45]{plots/tail_prob_eps=0.001.png}
&
\includegraphics[scale=0.45]{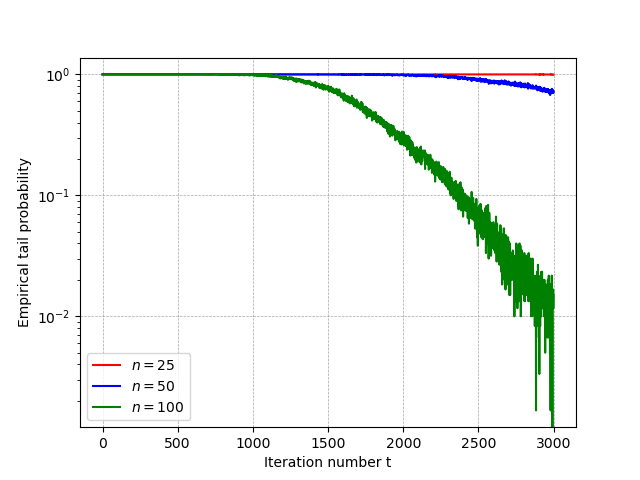}
\end{tabular}
\caption{Linear speed-up of \dsgd, in the MSE and HP sense. Left to right and top to bottom: MSE performance and tail decay with threshold $\varepsilon = \big\{10^{-2},10^{-3},10^{-4}\big\}$. We can see that \dsgd consistently achieves faster exponential tail decay for larger networks, across all values of threshold $\varepsilon$, illustrating the effect of the linear speed-up in the HP sense.}
\label{fig:lin-speed}
\end{figure*}

\subsection{Real Data}\label{subsec:real}

\paragraph{Methodology.} We validate our theory on a non-convex problem, considering a binary logistic regression classification task with a non-convex regularizer, e.g., \cite{antoniadis2011penalized}, with users' local costs given by
\begin{equation*}
    f_i(x) = \frac{1}{m_i}\sum_{r \in [m_i]} \log\big(1+\exp(-y_{i,r}\langle h_{i,r}, x\rangle)\big) + \eta \sum_{k \in [d]} \frac{[x]_k^2}{1+[x]_k^2},     
\end{equation*} where $h_{i,r} \in \mathbb{R}^{d}$ and $y_{i,r} \in \{+1, - 1\}$ are the feature vector and associated label, $\eta > 0$ is a user-specified penalty parameter, while $[x]_k$ denotes the $k$-th component of the model vector $x$. To evaluate the performance, we use the ``\texttt{mushroom}'', ``\texttt{a9a}'' and ``\texttt{ijcnn1}'' datasets from the LIBSVM library \cite{chang2011libsvm}, each providing a varying degree of heterogeneity. We split the data uniformly across agents, so that all agents have an equal-sized local dataset before training, i.e., $m_i = \frac{m}{n}$, where $m$ is the size of the original dataset. Similarly to synthetic data experiments, we use Erd\H{o}s-R\'enyi graphs with Metropolis-Hastings weights. For each experiment, we fix the following parameters: step-size $\alpha = 0.1$ and penalty parameter $\eta = 0.1$, giving a large learning rate and a non-trivial effect of the non-convex regularizer. Since the gap to the global minima is not computable in the non-convex case, we evaluate the performance using the average gradient norm-squared, i.e., $G^{t,r}_n = \frac{1}{nt}\sum_{\tau \in [t]}\sum_{i \in [n]}\|\nabla f(x_i^{\tau,r})\|^2$, so that the empirical tail probability is computed as $\Prob^t_{n,\epsilon} = \frac{1}{R}\sum_{r \in [R]}\mathbb{I}\big(G^{t,r}_n > \epsilon\big)$. Similarly to the previous section, we also compute and visualize the average performance across all runs, i.e., $\E^t_n = \frac{1}{R}\sum_{r \in [R]}G^{t,r}_n$. We note that in our experiments on real data, the stochastic noise comes from the mini-batch choice, hence the resulting noise is \emph{not necessarily sub-Gaussian}.

\begin{figure*}[t]
\centering
\begin{tabular}{cc}
\includegraphics[scale=0.45]{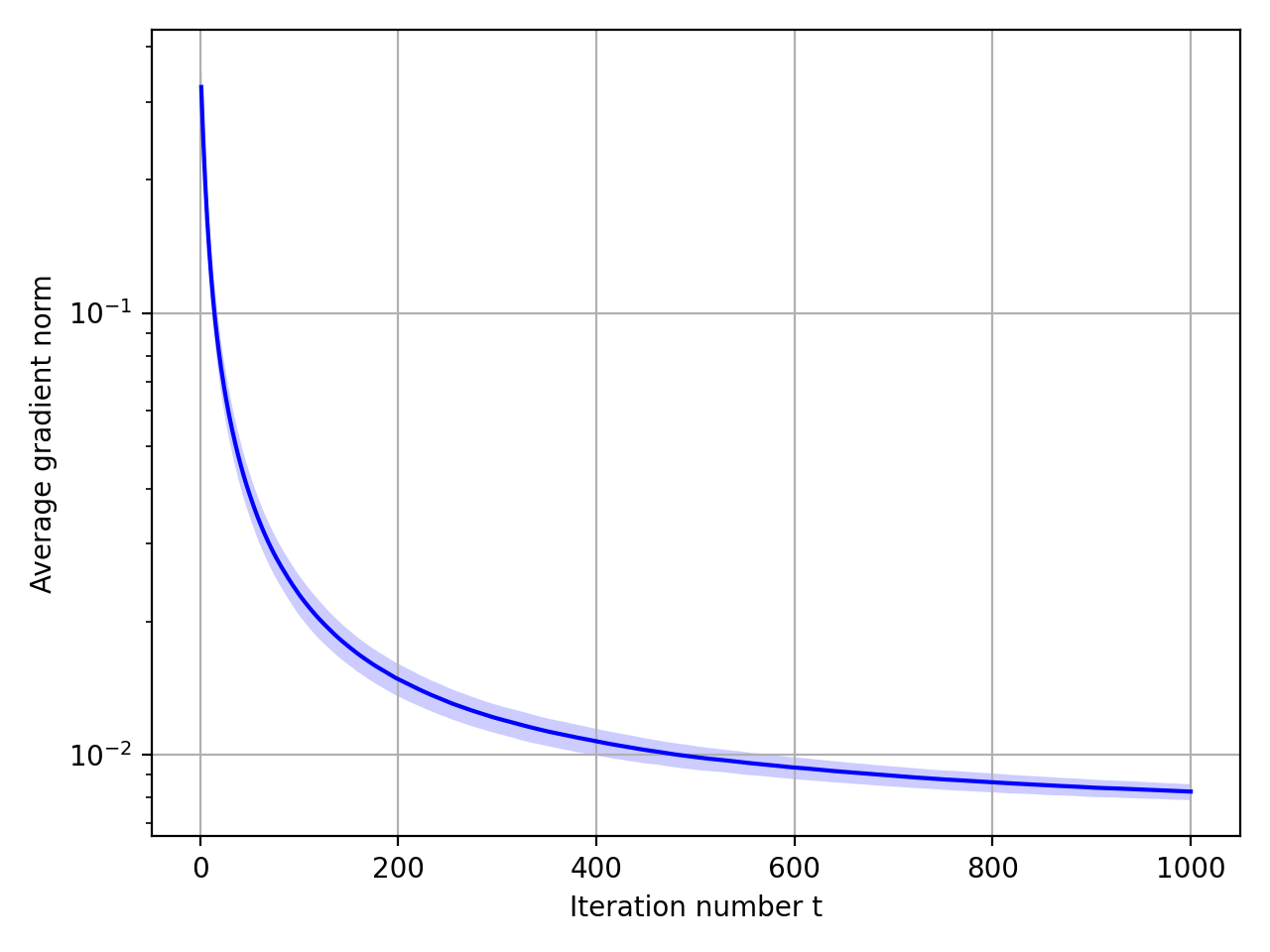}
&
\includegraphics[scale=0.45]{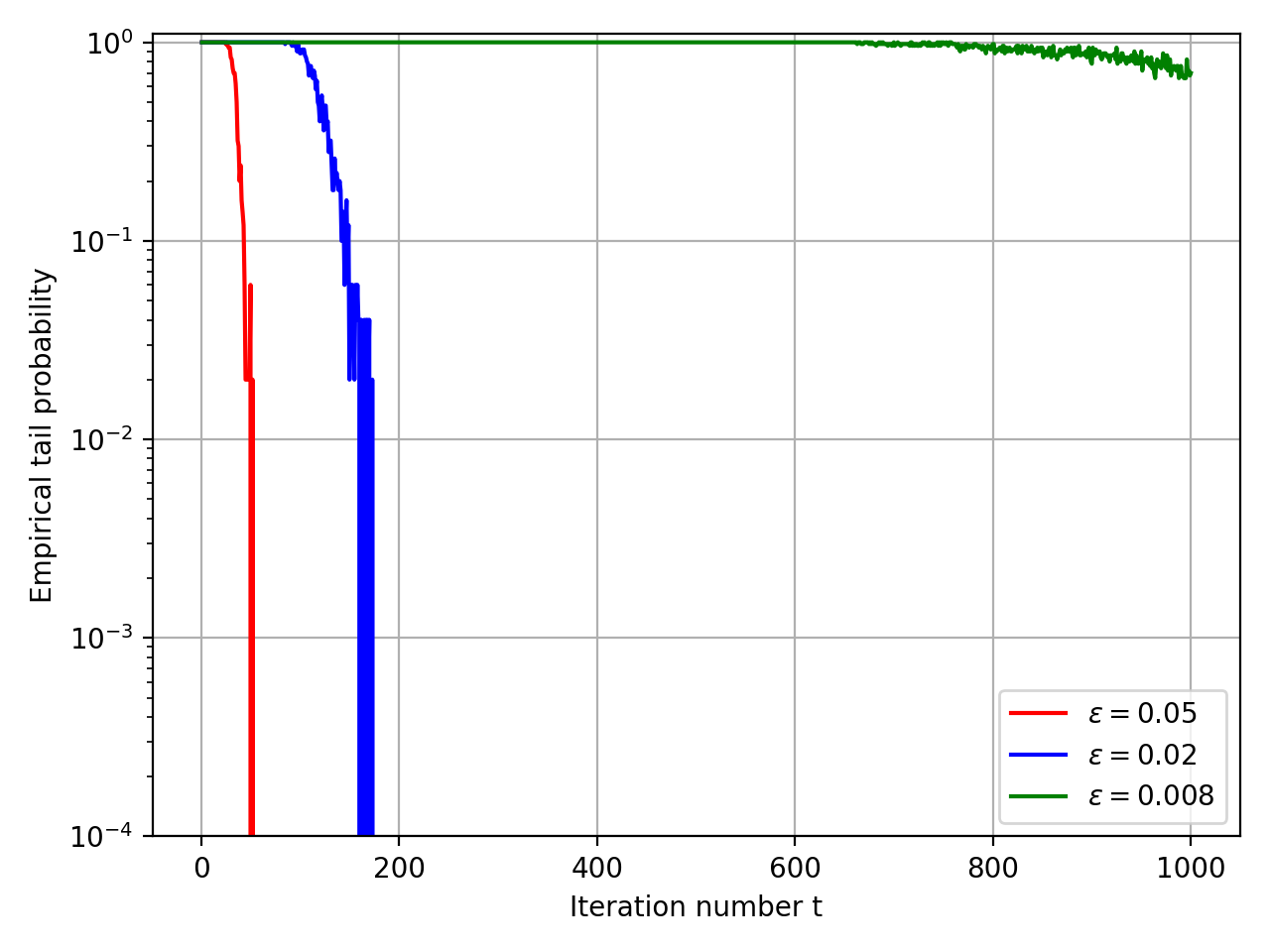}
\end{tabular}
\begin{tabular}{cc}
\includegraphics[scale=0.45]{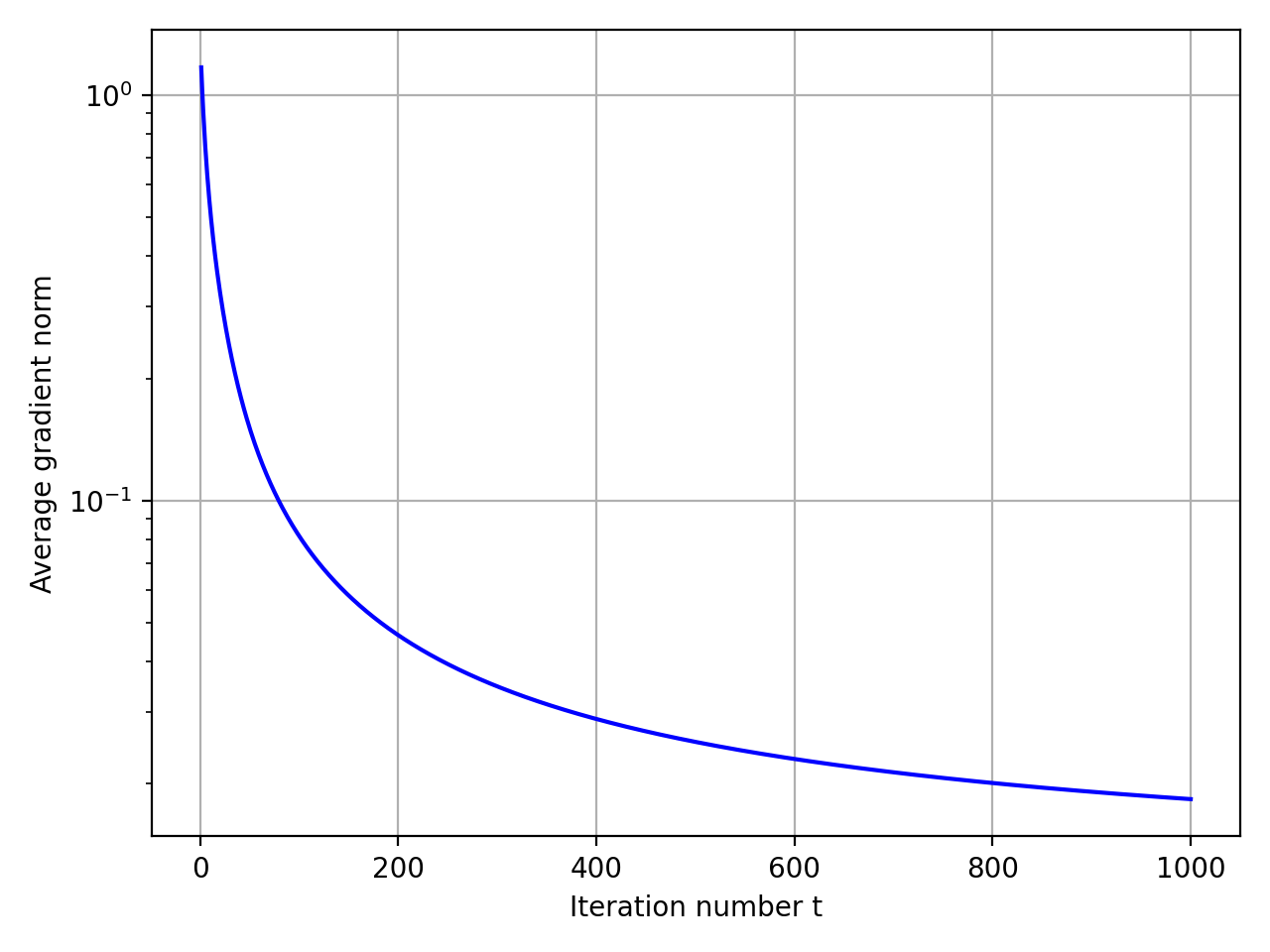}
&
\includegraphics[scale=0.45]{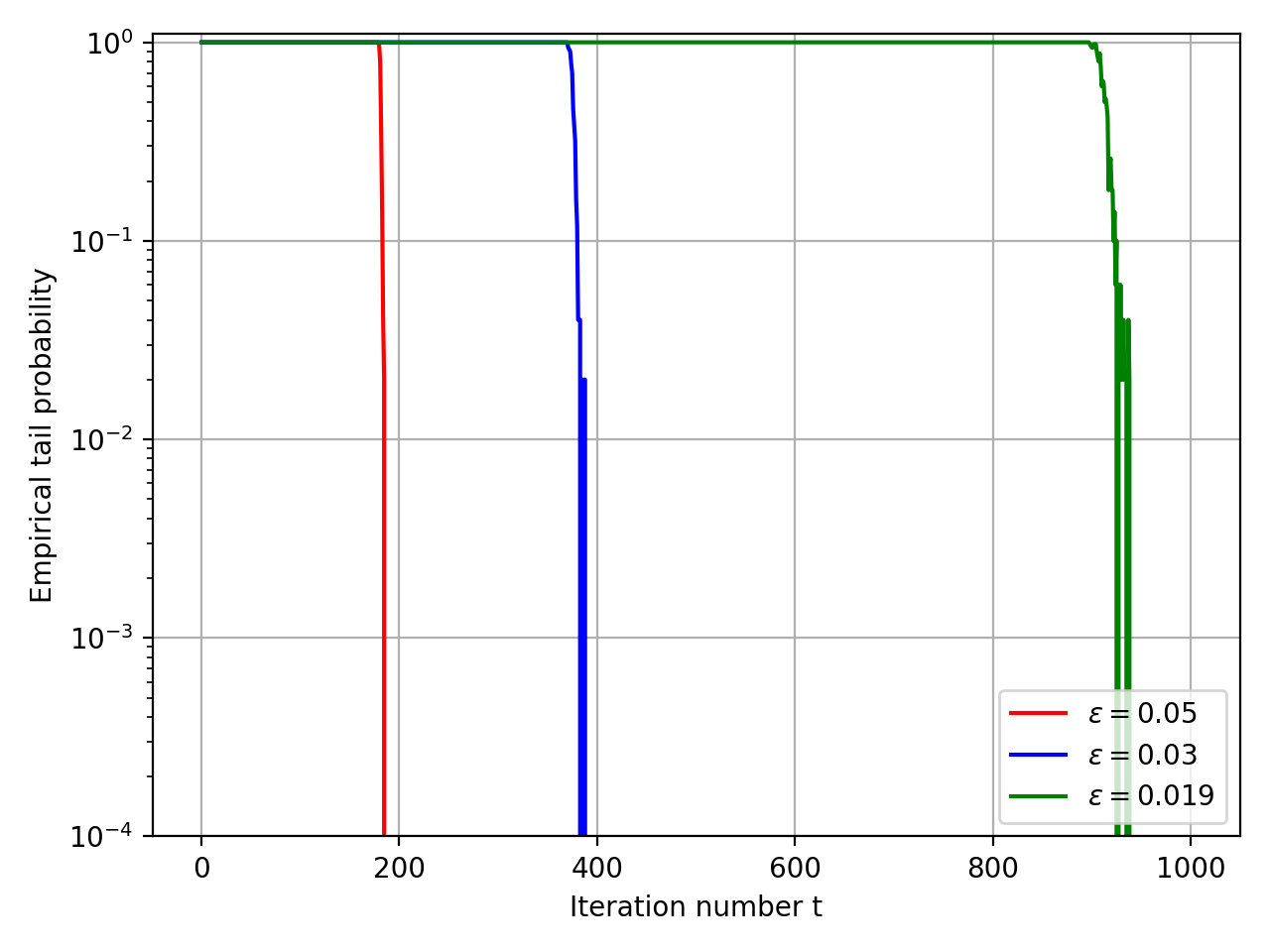}
\end{tabular}
\caption{Exponential tail decay on real data. Left to right: average gradient norm across all runs and its empirical tail probability. Top to bottom: performance on ``\texttt{a9a}'' and ``\texttt{mushroom}'' datasets. For the empirical tail probability, we use threshold values $\epsilon = \{0.01,0.05,0.01\}$ for the respective datasets. We can see that \dsgd consistently achieves exponentially decaying tails, across both datasets and different threshold values.}
\label{fig:real-exp}
\end{figure*}

\paragraph{Exponentially decaying tails.} We start by testing the decay rate of the empirical tail probability. We fix a network of $n = 30$ agents communicating over an Erd\H{o}s-R\'enyi graph. We run both methods for $T = 1000$ iterations, repeated across $R=100$ runs. The results are presented in Figure \ref{fig:real-exp}, where the top and bottom rows respectively correspond to ``\texttt{a9a}'' and ``\texttt{mushroom}'' datasets, while figures left to right visualizes the average error $\E^t_n$ and the empirical tail probability $\Prob^t_{n,\epsilon}$. The threshold values are chosen as $\varepsilon = \{0.05, 0.02, 0.008 \}$ for ``\texttt{a9a}'' and $\varepsilon = \{0.05, 0.03, 0.019 \}$ for ``\texttt{mushroom}'' datasets, based on the achieved average error. As can be seen from the figure, \dsgd achieves exponential tail decay across both datasets and all values of threshold $\varepsilon$.

\paragraph{Linear speedup.}  To verify that \dsgd achieves linear speed-up, we again consider three networks with $n = \{10, 30, 50\}$ agents, communicating over Erd\H{o}s--R\'enyi graphs. To ensure that the network connectivity is consistent, we enforce the condition $\lambda = \|W-J\| \approx 0.6$ on the resulting weight matrices, with $\alpha, T, R$ remaining unchanged. We consider the ``\texttt{a9a}'' and ``\texttt{ijcnn1}'' datasets, plotting the empirical tail probability versus the number of users for different error thresholds. In particular, we use the thresholds $\varepsilon = \{0.05,0.02,0.01\}$ for ``\texttt{a9a}'' and $\varepsilon = \{0.005, 0.0035, 0.003\}$ for ``\texttt{ijcnn1}'' dataset. The results are presented in Figures \ref{fig:real-speed-1} and \ref{fig:real-speed-2}, respectively. We can again see that the average gradient norm and empirical tail probability decay faster with larger $n$ across both datasets and different values of $\varepsilon$, with the speed-up effect more evident on the more heterogeneous ``\texttt{a9a}'' dataset.

\begin{figure*}[!ht]
\centering
\begin{tabular}{cc}
\includegraphics[scale=0.45]{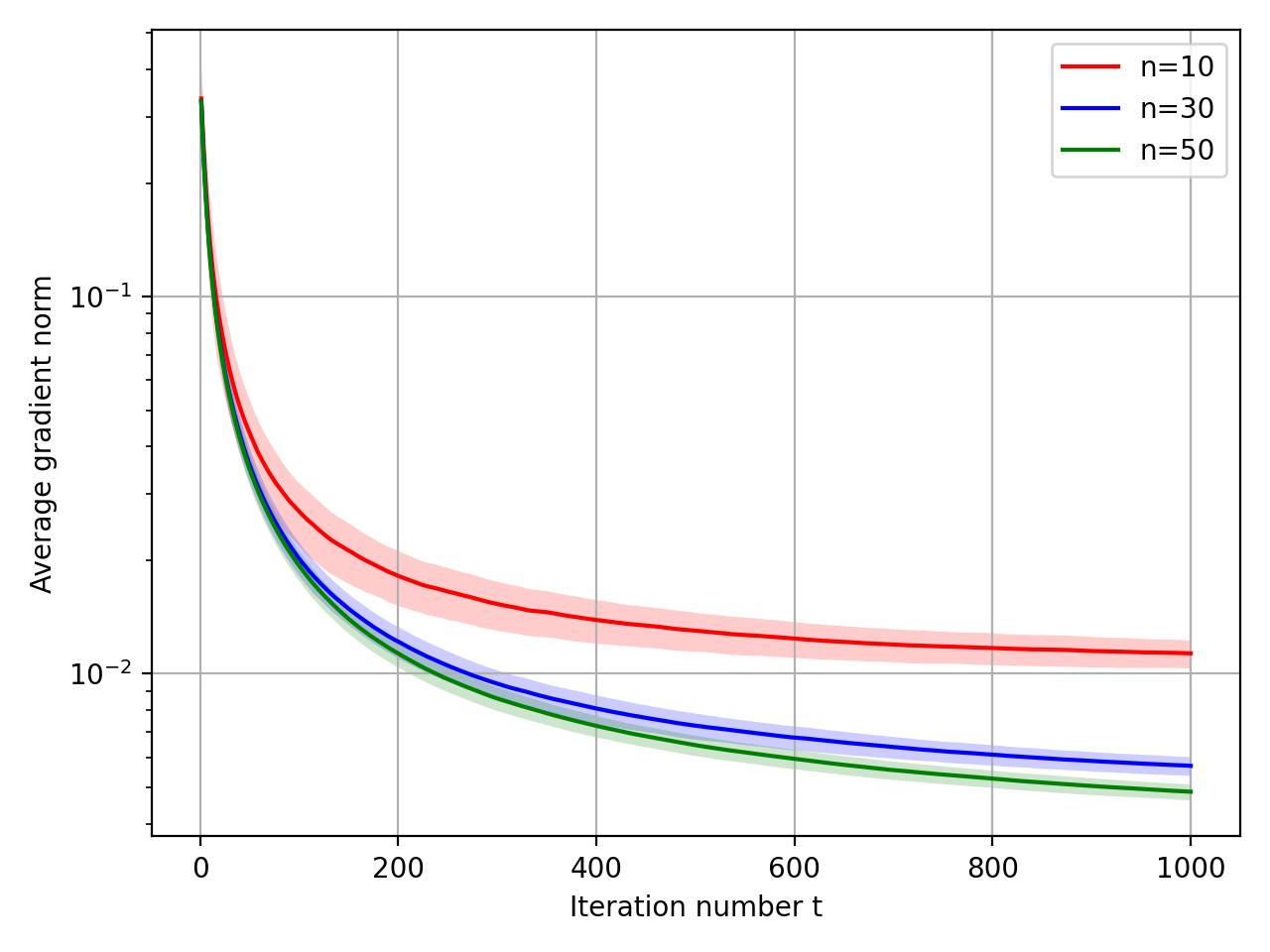}
&
\includegraphics[scale=0.45]{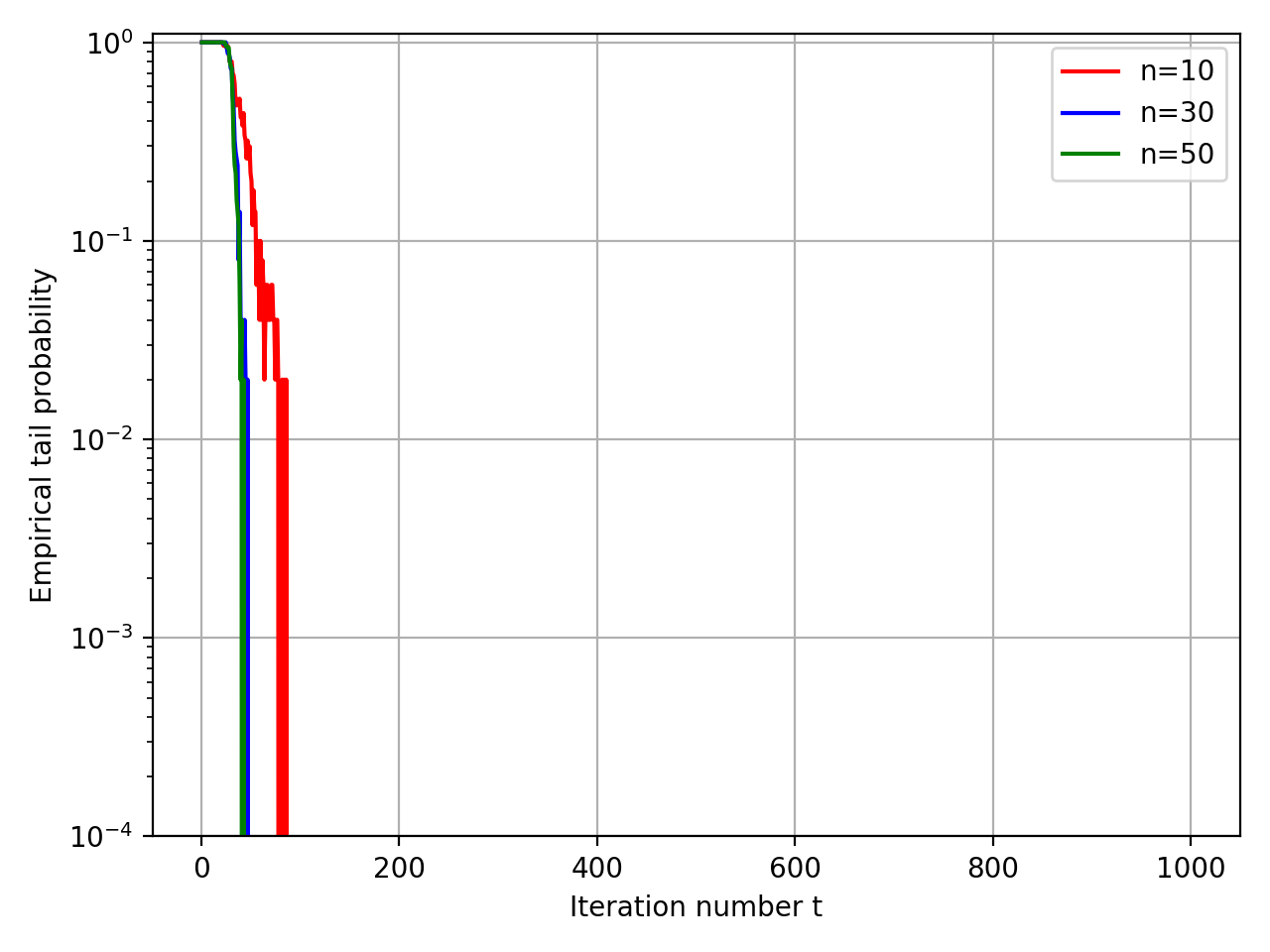} 
\\
\includegraphics[scale=0.45]{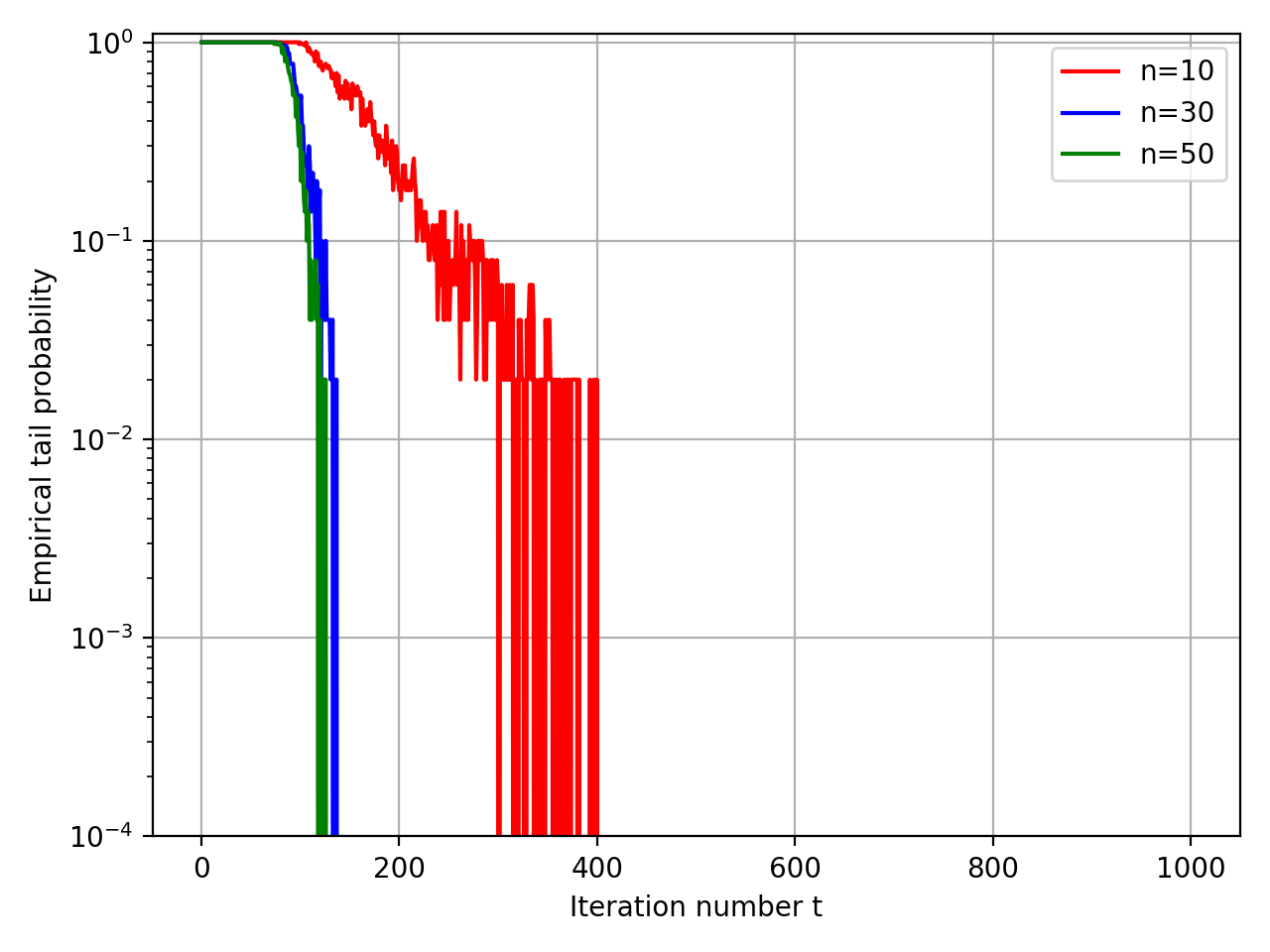}
&
\includegraphics[scale=0.45]{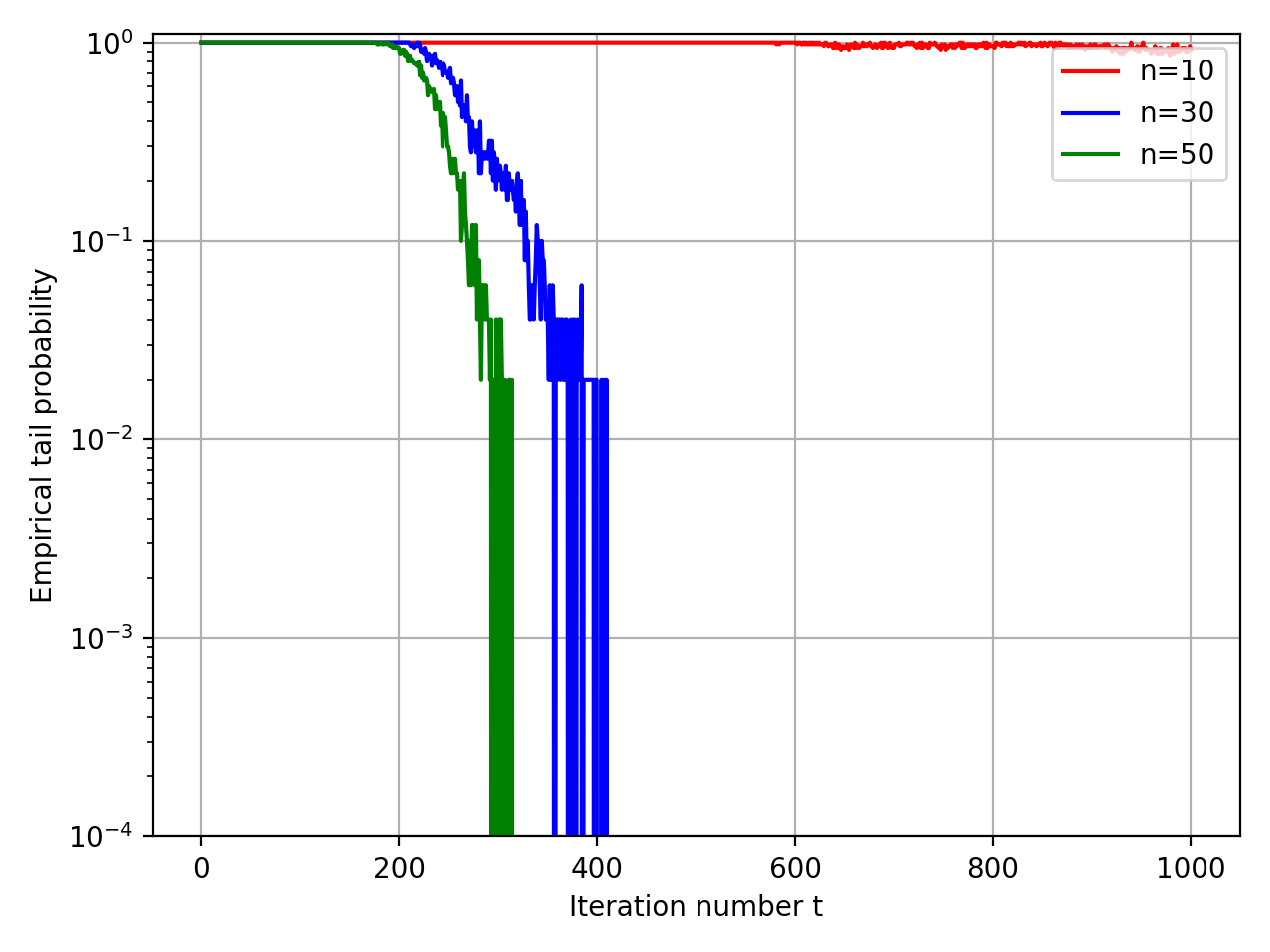}
\end{tabular}
\caption{Linear speed-up of \dsgd on the ``\texttt{a9a}'' dataset. Left to right and top to bottom: MSE performance and tail decay with threshold $\varepsilon = \big\{0.05,0.02,0.01\big\}$. We can see that \dsgd consistently achieves faster exponential tail decay for larger networks, across all values of threshold $\varepsilon$, illustrating the effect of the linear speed-up in the HP sense.}
\label{fig:real-speed-1}
\end{figure*}

\begin{figure*}[!ht]
\centering
\begin{tabular}{cc}
\includegraphics[scale=0.45]{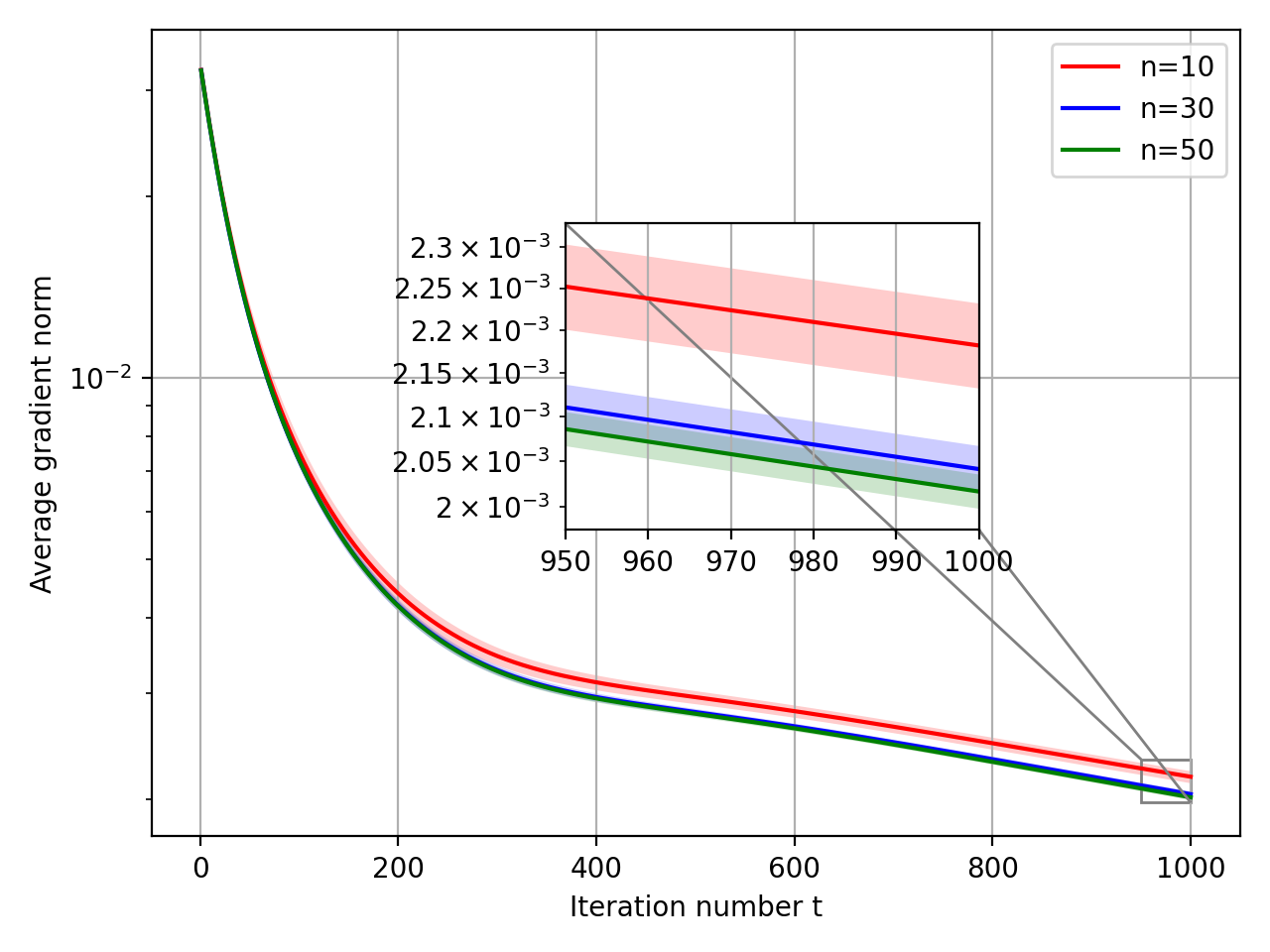}
&
\includegraphics[scale=0.45]{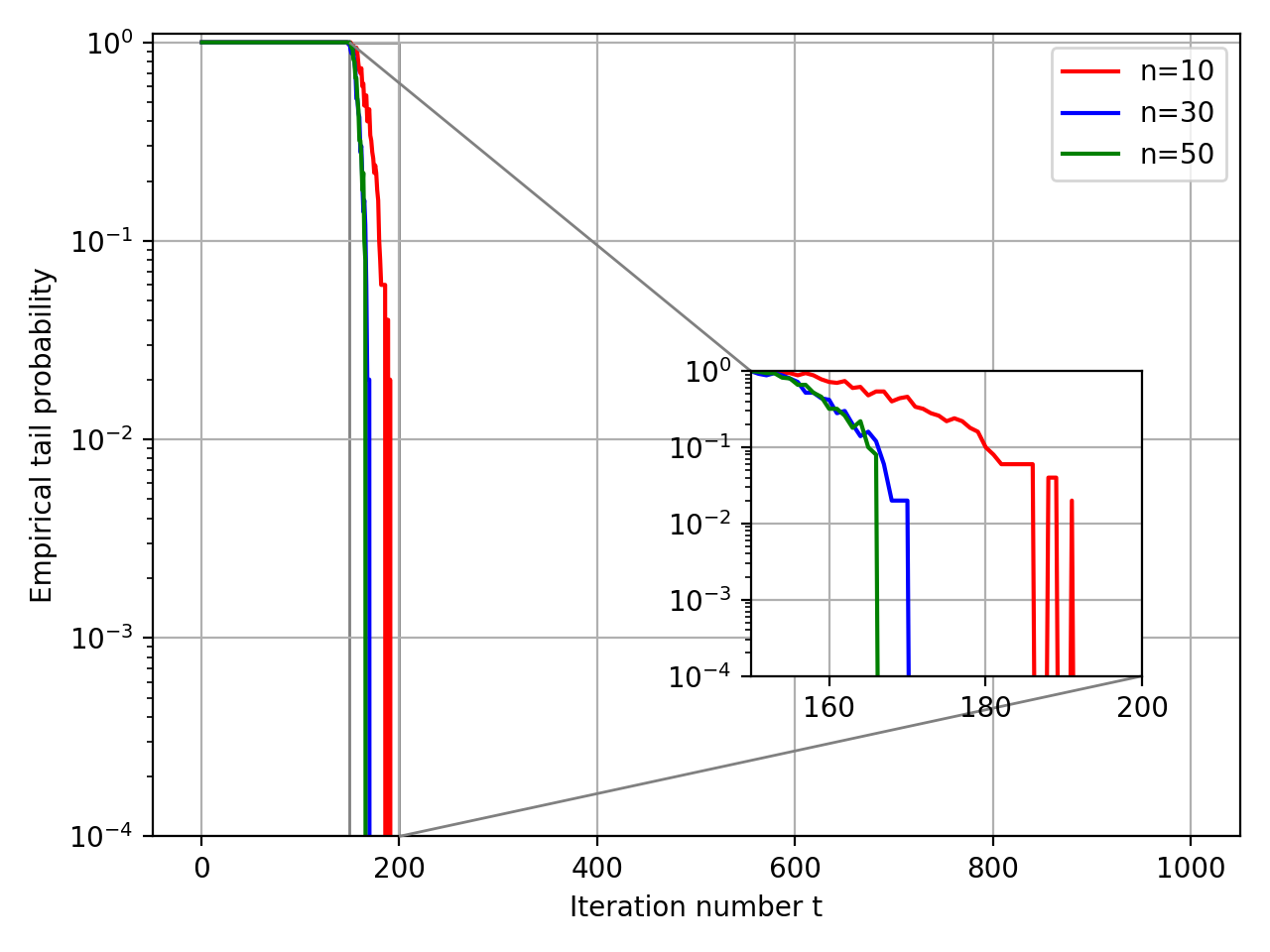} 
\\
\includegraphics[scale=0.45]{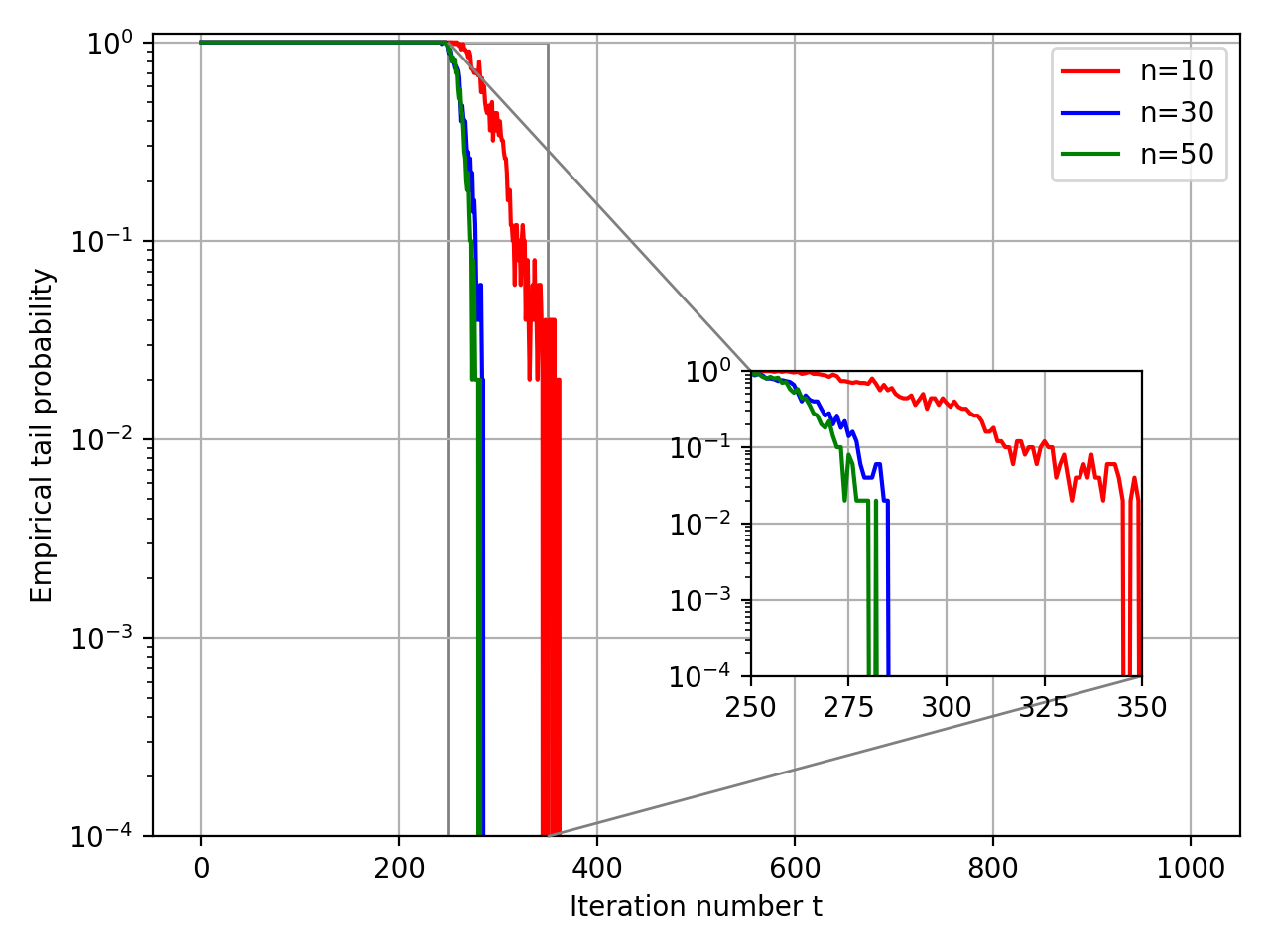}
&
\includegraphics[scale=0.45]{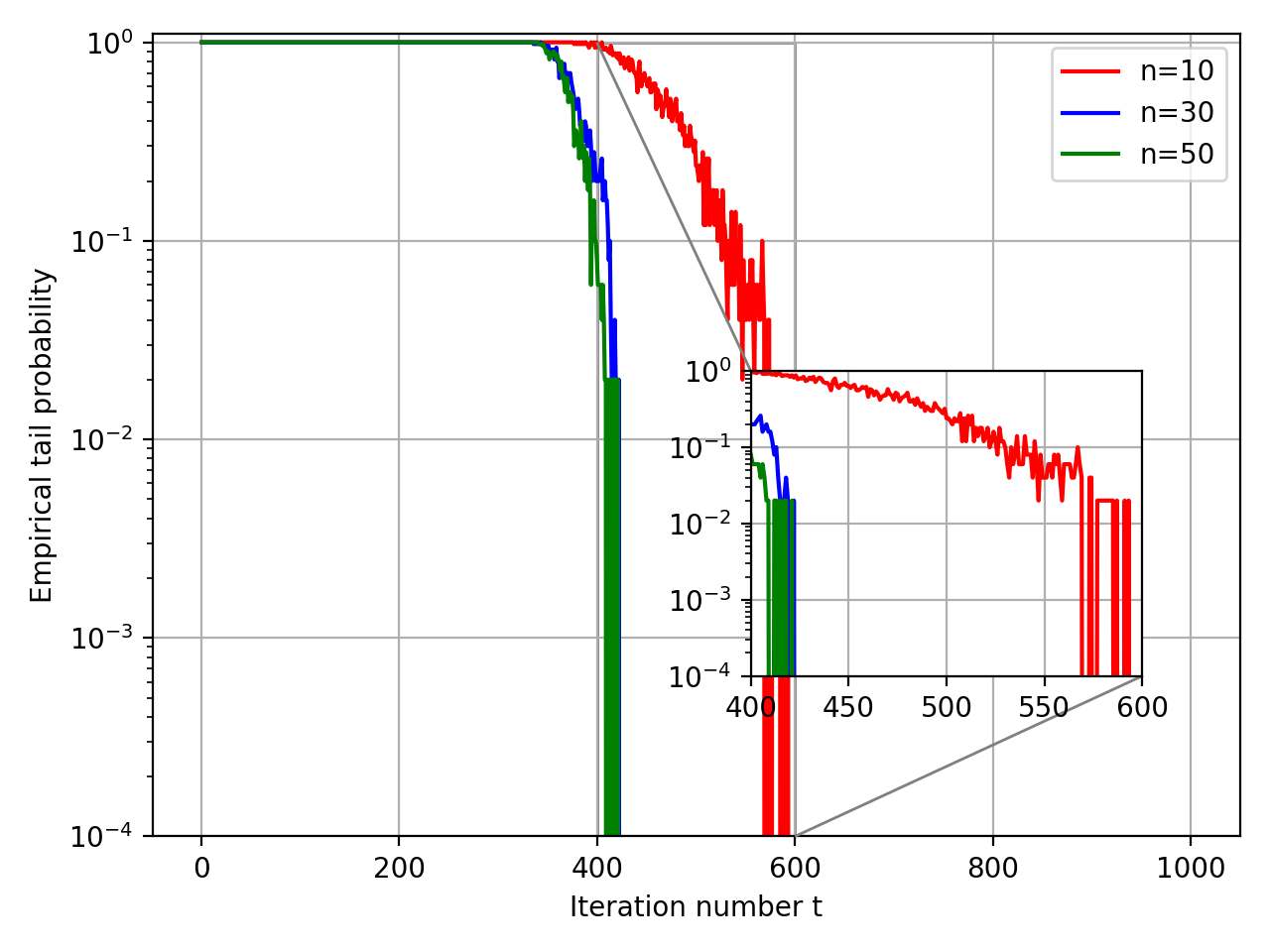}
\end{tabular}
\caption{Linear speed-up of \dsgd on the ``\texttt{ijcnn1}'' dataset. Left to right and top to bottom: MSE performance and tail decay with threshold $\varepsilon = \big\{0.005,0.0035,0.003\big\}$. We can see that \dsgd consistently achieves faster exponential tail decay for larger networks, across all values of threshold $\varepsilon$, illustrating the effect of the linear speed-up in the HP sense.}
\label{fig:real-speed-2}
\end{figure*}

\section{Deriving the Transient Time}\label{sup:tran-time}

In this section we provide details on the transient time resulting from our HP bounds. Similarly to \citet{unified-refined}, we focus on the dependence on network connectivity and number of users, ignoring other problem related constants. 

\textbf{Transient time for non-convex costs.} Recall that the bound in Theorem \ref{thm:main-non-cvx} is of the form
\begin{equation}\label{eq:tran-t-non-conv-1}
    \bigO\bigg( \frac{1}{\sqrt{nT}} + \frac{1}{CT} + \frac{\Delta_x}{(1-\lambda^2)T} + \frac{n}{(1-\lambda)^2T} \bigg),    
\end{equation} where the problem related constant $C$ depends on the network connectivity via the expression $C = \bigO\Big(\min\Big\{\frac{1-\lambda}{\lambda},\frac{\sqrt[3]{n}(1-\lambda)^{2/3}}{\lambda^2}\Big\}\Big)$. Since we are interested in the worst-case transient time (corresponding to poor network connectivity, i.e., $\lambda \approx 1$), it follows that $C = \bigO\Big(\frac{1-\lambda}{\lambda }\Big)$. For simplicity, assume that users have a shared initialization, i.e., $x^1_i = x_j^1$, for all $i,j \in [n]$, so that $\Delta_x = 0$. In this case, the bound in \eqref{eq:tran-t-non-conv-1} becomes
\begin{equation}\label{eq:tran-t-non-conv-2}
    \bigO\bigg( \frac{1}{\sqrt{nT}} + \frac{1}{(1-\lambda)T} + \frac{n}{(1-\lambda)^2T} \bigg).    
\end{equation} It is now evident that, for any $T \geq \max\Big\{\frac{n}{(1-\lambda)^2},\frac{n^3}{(1-\lambda)^4} \Big\} = \frac{n^3}{(1-\lambda)^4}$, the bound in \eqref{eq:tran-t-non-conv-2} becomes $\bigO\Big(\frac{1}{\sqrt{nT}}\Big)$, implying a transient time of at most $\bigO\Big(\frac{n^3}{(1-\lambda)^4}\Big)$. This matches the transient time of \dsgd obtained from the MSE rate in \cite{pmlr-v119-koloskova20a}, see, e.g., Table I in \cite{unified-refined}.

\textbf{Transient time for strongly convex costs.} Recalling the expressions in \eqref{eq:eps-1} and \eqref{eq:eps-2}, it follows that the full bound for strongly convex costs is of the form
\begin{align}\label{eq:tran-t-str-cvx-1}
    \bigO&\bigg(\frac{1}{n(T+t_0)} + \frac{1}{(1-\lambda)(T+t_0)^2} + \frac{1}{n(1-\lambda)^2(T+t_0)^2} + \frac{t_0^3}{n(T+t_0)^3} \nonumber \\ 
    &+ \frac{\log(T + t_0)}{(1-\lambda)^2(T+t_0)^3} + \frac{t_0}{n(1-\lambda)^2(T+t_0)^3} + \frac{t_0^6}{n(1-\lambda)^2(T+t_0)^8} \bigg),
\end{align} where $t_0$ depends on the network connectivity via the expression $t_0 = \Omega\Big(\max\Big\{\frac{\lambda}{1-\lambda},\frac{\lambda^2}{1-\lambda},\frac{1}{1-\lambda}\Big\}\Big)$. Since we are again interested in the worst-case transient time, it follows that $t_0 = \Omega\Big(\frac{1}{1-\lambda}\Big)$, so that the bound in \eqref{eq:tran-t-str-cvx-1} becomes
\begin{align}\label{eq:tran-t-str-cvx-2}
    &\quad\quad\bigO\bigg(\frac{1}{n(T+t_0)} + \frac{1}{(1-\lambda)(T+t_0)^2} + \frac{1}{n(1-\lambda)^2(T+t_0)^2} \nonumber \\
    &+ \frac{\log(T + t_0)}{(1-\lambda)^2(T+t_0)^3} + \frac{1}{n(1 - \lambda)^3(T+t_0)^3} + \frac{1}{n(1-\lambda)^8(T+t_0)^8} \bigg).
\end{align} It can be seen that for $T \geq \max\Big\{\frac{n}{1-\lambda}, \frac{1}{(1-\lambda)^2}, \frac{\sqrt{n}}{1-\lambda}, \frac{1}{(1-\lambda)^{\frac{3}{2}}},\frac{1}{(1-\lambda)^{\frac{8}{7}}} \Big\} = \max\Big\{\frac{n}{1-\lambda}, \frac{1}{(1-\lambda)^2} \Big\}$, the bound in \eqref{eq:tran-t-str-cvx-2} becomes $\mathcal{O}\Big(\frac{\log(T+t_0)}{n(T+t_0)}\Big)$, implying a transient time of at most $\mathcal{O}\Big(\max\Big\{\frac{n}{1-\lambda}, \frac{1}{(1-\lambda)^2}\Big\}\Big)$. Interestingly, this is strictly sharper than the transient time of \dsgd obtained from the MSE rate in \cite{pmlr-v119-koloskova20a}, which is of order $\mathcal{O}\Big(\frac{n}{(1-\lambda)^2}\Big)$, see, e.g., Table II in \cite{unified-refined}. As such, our HP analysis provides either matching or strictly sharper transient times for \dsgd compared to the transient times stemming from MSE results, further highlighting the tightness of our analysis.

\section{Achieving Linear Speed-up}\label{sup:lin-speed-up}

In this section we provide further discussions on the main results and techniques needed to obtain linear speed-up. 

\textbf{Linear speed-up for non-convex costs.} The crucial result for obtaining linear speed for non-convex costs is Lemma \ref{lm:noise-properties}, which establishes the variance reduction benefits of decentralized methods in the HP sense. To see why this is the case, note that using a fixed step-size $\alpha_t \equiv \alpha$ and following the same steps as in the proof of Theorem \ref{thm:main-non-cvx}, while relying only on assumption \textbf{(A4)} to control the noise, one would get a final bound where the leading term is of the form
\begin{equation}\label{eq:bdd-no-speedup}
    \bigO\bigg(\frac{1}{\alpha T} + \alpha\sigma^2\bigg).
\end{equation} It is obvious that no choice of $\alpha$ leading to a linear speed-up is possible, i.e., such that \eqref{eq:bdd-no-speedup} becomes $\bigO\Big(\frac{1}{\sqrt{nT}}\Big)$. On the other hand, using Lemma \ref{lm:noise-properties} to handle the noise, we are able to establish a bound where the leading term is of the form 
\begin{equation}\label{eq:bdd-lin-speedup}
    \bigO\bigg(\frac{1}{\alpha T} + \frac{\alpha\sigma^2}{n}\bigg),
\end{equation} where it is easy to see that choosing $\alpha = \Theta\Big(\frac{\sqrt{n}}{\sqrt{T}}\Big)$ leads to the desired linear speed-up. Works like \cite{dsgd-high-prob,dsgd-online-noncooperative-games,dsgd-online-stochastic} provide no results on the variance reduction benefit of decentralized learning in the HP sense, instead relying only on the definition of sub-Gaussianity.\footnote{\citet{dsgd-online-noncooperative-games} consider the problem of online noncooperative games and show HP convergence guarantees in terms of dynamic regret $R_i(t)$, for \emph{each individual $i \in [n]$}. While this is different from the metrics considered in our work (i.e., \emph{user-averaged} gradient norm/optimality gap), the analysis in \cite{dsgd-online-noncooperative-games} suffers from the same shortcomings as other decentralized works, i.e., no result showing variance reduction in the HP sense is provided, making linear speed-up impossible to achieve even if $\frac{1}{n}\sum_{i \in [n]}R_i(t)$ is considered.} As such, even if a fixed step-size was used, the authors would be unable to obtain a bound of the form \eqref{eq:bdd-lin-speedup} and achieve linear speed-up. As discussed in Appendix \ref{sup:non-conv}, using a time-varying step-size for non-convex costs inevitably leads to a loss of linear speed-up, as all terms decay at the same rate, leading to the bound
\begin{equation*}
    \bigO\bigg(\frac{1}{\sqrt{nT}} + \frac{n}{\sqrt{T}}\bigg).
\end{equation*} On the other hand, using a fixed step-size $\alpha = \Theta\Big(\frac{\sqrt{n}}{\sqrt{T}} \Big)$, we get the bound
\begin{equation*}
    \bigO\bigg(\frac{1}{\sqrt{nT}} + \frac{n}{T}\bigg),
\end{equation*} where the term not achieving linear speed-up is of higher order. This is consistent with MSE guarantees, e.g., \cite{pmlr-v119-koloskova20a,ran-improved,unified-refined}, where a fixed step-size is also needed to achieve linear speed-up.

\textbf{Linear speed-up for strongly convex costs.} In addition to Lemma \ref{lm:noise-properties}, a tighter control of the MGF of the process of interest is required, which is achieved in Lemma \ref{lm:mgf-bound-str-cvx}. To see why this is the case, consider the existing techniques in centralized settings, e.g., \cite{harvey2019tight,pmlr-v206-bajovic23a}, where the authors show via induction that the MGF of the process of interest is uniformly bounded, i.e., that there exists a $B > 0$, such that, for all $t \geq 1$ and $\nu \in (0,B]$
\begin{equation}\label{eq:harvey-mgf-sup}
    \E[\exp(\nu F^t)] \leq \exp\Big(\frac{\nu}{B}\Big).    
\end{equation} Recalling that $F^t = n(t+t_0)\big(f(\oxt) - f^\star\big)$, it then follows from \eqref{eq:harvey-mgf-sup} and Markov's inequality that, for any $\epsilon > 0$
\begin{equation*}
    \Prob\bigg(f(\oxt) - f^\star > \frac{\epsilon + 1}{n\nu(t+t_0)} \bigg) = \Prob\big(\nu F^t > \epsilon + 1 \big) \leq \exp(-\epsilon).
\end{equation*} Choosing $\nu = B$, we then get for any $\delta \in (0,1)$, with probability at least $1 - \delta$
\begin{equation}\label{eq:harvey-bound}
    f(\oxt) - f^\star \leq \frac{\log(\nicefrac{1}{\delta}) + 1}{nB(t+t_0)}.
\end{equation} While the above bound seemingly achieves linear speed-up, we now show that the constant $B$ must depend on $n$, effectively canceling the linear speed-up. To see that this is the case, we go back to \eqref{eq:harvey-mgf-sup} and note that for $t = 1$, we have 
\begin{equation*}
    \E[\exp(\nu F^1)] = \exp\Big(\nu n(t_0+1)\big(f(\ox^1) - f^\star\big)\Big) \leq \exp\Big(\frac{\nu}{B}\Big),    
\end{equation*} where the equality follows from the definition of $F^t$ and the fact that the initial models are chosen deterministically, while the inequality follows if and only if $B \leq \frac{1}{n(t_0+1)(f(\ox^1) - f^\star)}$. As such, in order for \eqref{eq:harvey-mgf-sup} to hold, the constant $B$ has to decay with $n$, canceling the linear speed-up in \eqref{eq:harvey-bound} and resulting in a HP bound of the form
\begin{equation*}
    f(\oxt) - f^\star \leq \frac{\log(\nicefrac{1}{\delta}) + 1}{B^\prime(t+t_0)},
\end{equation*} where $B^\prime > 0$ is independent of $n$ and the linear speed-up is lost. On the other hand, Lemma \ref{lm:mgf-bound-str-cvx} implies that
\begin{align*}
    \E\big[\exp\big( \nu F_{t+1} \big) \big] &\leq \exp\bigg(\frac{(t_0+2)^{3}\nu \Delta_f}{(t+1+t_0)^2} + 4\nu G_1 + \frac{4\nu G_2}{t+1+t_0} +\frac{4\nu G_3\log(t+t_0+1)}{(t+t_0+1)^2} +\frac{4\nu G_4}{5(t_0+1)^5(t+1+t_0)^2} \bigg),
\end{align*} for some problem related constants $G_i > 0$, $i \in [4]$, providing a tighter and more fine-grained control of the MGF compared to \eqref{eq:harvey-mgf-sup} and crucially allowing for a bound where the leading term achieves linear speed-up in the number of users (as $G_1$ is independent of $n$). On the other hand, using a different approach to \citet{harvey2019tight,pmlr-v206-bajovic23a}, \citet{liu2024revisiting} establish the following HP bound for the last iterate of \sgd in centralized setting and smooth, strongly convex costs
\begin{equation*}
    f(x^t) - f^\star = \bigO\bigg(\frac{\log(t)\log(\nicefrac{1}{\delta})}{t+t_0} + \frac{1}{(t+t_0)^2} \bigg).
\end{equation*} While such a result could potentially be leveraged to maintain linear speed-up (provided that the constant growing linearly with $n$ affects only the higher-order term), it still accrues a factor of $\log(t)$ in the leading term, whereas the $\log$ term in our bound only affects higher-order terms. As such, Lemma \ref{lm:mgf-bound-str-cvx} provides the most fine-grained control of the MGF and is of independent interest when studying ``almost decreasing'' type processes, even in centralized settings.

\section{Extending to Time-varying Networks}\label{sup:time-var-net}

As discussed in the main body, to model the communication network, we use assumption \textbf{(A1)}, which subsumes fixed undirected networks, as well as a class of fixed directed networks. The authors in \cite{dsgd-high-prob} consider a more general case of directed, time-varying networks, facilitated by the following condition.

\textbf{(A1$^\prime$)} At any time $t \geq 1$, the network is given by a time-varying directed graph $G^t = (V,E^t,W^t)$, where $E^t = \{(i,j): \: \text{user } i \text{ can send a message to user } j \text{ in iteration } t\}$ and $W^t \in \R^{n \times n}$ is such that $[W^t]_{ij} \geq l > 0$ if $(i,j) \in E^t$, otherwise $[W^t]_{ij} = 0$, where $l \in (0,1)$. Moreover, $W^t$ is row-stochastic and balanced for any $t \geq 1$, i.e., $W^t\mathbf{1}_n = \mathbf{1}_n$ and $\sum_{j \in [n]}[W^t]_{ij} = \sum_{j \in [n]}[W^t]_{ji}$, for all $i \in V$. Finally, for some $U \in \N$ and all $t \geq 1$, the graph $G^t_U = (V,E_U^t)$, where $E_U^t = \bigcup_{k = tU}^{(t+1)U-1}E^k$, is strongly connected.\footnote{Recall that a directed graph $G = (V,E)$ is strongly connected if for any two vertices $i,j \in V$, there exists a path from $i$ to $j$ (and vice versa), see, e.g., \cite{cvetkovic_rowlinson_simic_1997,chung1997spectral}.}

Our work can be readily extended to the setting of time-varying networks, with \textbf{(A1)} replaced by \textbf{(A1$^\prime$)}. To see why this is the case, note that in our proofs we only use the fact that under \textbf{(A1)}, we have $\lambda = \|W - J\| \in [0,1)$. \citet{dsgd-high-prob} use a similar result for time-varying in  networks (see Lemma 1 in \cite{dsgd-high-prob}), namely that under \textbf{(A1$^\prime$)}, for any $t \geq s$
\begin{equation}\label{eq:time-varying}
    \Big|[\Phi(t,s)]_{ij} - \frac{1}{n}\Big| \leq M\rho^{t-s},
\end{equation} where $\Phi(t,s) = W^{t-1}\times \ldots \times W^s$ is the transition matrix from time $s$ to $t$ (with $\Phi(t,t) = I_n$), while $M > 0$ and $\rho \in (0,1)$ are graph related constants. As discussed in \cite{dsgd-high-prob}, the constant $\rho$ plays the same role of the network connectivity parameter, as $\lambda$ does in our work. It can be readily verified that replacing \textbf{(A1)} with \textbf{(A1$^\prime$)} and using \eqref{eq:time-varying} to control the consensus gap, our results readily go through, with $\lambda$ being replaced by $\rho$. As such, our results can be easily extended to the more general (directed) time-varying network setting considered in \cite{dsgd-high-prob}.

\section{On Sub-Gaussian Vectors and Dimension Dependence}\label{sup:dimension}

As mentioned in Section \ref{sec:proof-and-discuss} in the main body, our rates in Theorems \ref{thm:main-non-cvx} and \ref{thm:main-dsgd-str-cvx} exhibit a mild dependence on the problem dimension, of order $\log(d)$, with the dependence stemming from Lemma \ref{lm:noise-properties}. This is a consequence of working with random vectors, where we simultaneoulsy need to show a bound on the MGF of the \emph{inner product} $\langle \ozt,v \rangle$ for any $\Ft$-measurable vector $v \in \R^d$ (shown in point 2 of Lemma \ref{lm:noise-properties}), as well as on the MGF of the \emph{squared norm} $\|\ozt\|^2$ (shown in point 3 of Lemma \ref{lm:noise-properties}). While the inner product maintains some desirable properties, such as being zero-mean and linear (in the sense that $\langle \ozt, v \rangle = \frac{1}{n}\sum_{i \in [n]}\langle \zit,v\rangle $), this is not the case with the squared norm, which is neither zero-mean, nor linear. Therefore, trying to directly establish the variance reduction benefit of decentralized learning, i.e., that $\ozt$ is $\bigO\big(\frac{\sigma}{\sqrt{n}} \big)$-sub-Gaussian, in the sense of condition \textbf{(A4)}, fails to yield the desired result. In particular, recalling that $\sigma^2 = \frac{1}{n}\sum_{i \in [n]}\sigma_i^2$, we then have
\begin{align}\label{eq:sedrakyan}
    \E\bigg[\exp\bigg(\frac{\|\ozt\|^2}{\sigma^2} \bigg)\: \big\vert \: \Ft\bigg] &\stackrel{(i)}{\leq} \E\Bigg[\exp\Bigg(\frac{\big(\sum_{i \in [n]}\|\zit\|\big)^2}{n\sum_{i \in [n]}\sigma_i^2} \Bigg)\: \Big\vert \: \Ft\Bigg] \stackrel{(ii)}{\leq} \E\bigg[\exp\bigg(\frac{1}{n}\sum_{i \in [n]}\frac{\|\zit\|^2}{\sigma_i^2} \bigg)\: \big\vert \: \Ft\bigg] \nonumber \\
    &\stackrel{(iii)}{\leq} \prod_{i \in [n]}\Bigg(\E\bigg[\exp\bigg(\frac{\|\zit\|^2}{\sigma_i^2} \bigg)\: \big\vert \: \Ft\bigg]\Bigg)^{1/n} \stackrel{(iv)}{\leq} \prod_{i \in [n]}\exp\bigg(\frac{1}{n} \bigg) = \exp(1),
\end{align} where $(i)$ follows from Proposition \ref{prop:Jensen}, $(ii)$ follows from Sedrakyan's inequality, namely that
\begin{equation*}
    \frac{(\sum_{i \in [n]}a_i)^2}{\sum_{i \in [n]}b_i} \leq \sum_{i \in [n]}\frac{a_i^2}{b_i},
\end{equation*} which holds for any $n \in \N$, $a_i \in \R$ and $b_i > 0$, in $(iii)$ we used Proposition \ref{prop:Jensen} and the fact that noise is conditionally independent across users, while $(iv)$ follows from \textbf{(A4)}. Therefore, a direct approach yields that $\ozt$ is $\sigma$-sub-Gaussian, failing to show the variance reduction benefit. To circumvent this issue, we use a different argument, namely that condition \textbf{(A4)} is equivalent to \emph{norm-sub-Gausiannity}, i.e., that for any $\epsilon > 0$, we have 
\begin{equation*}
    \Prob\big(\|\zit\|^2 > \epsilon \: \vert \: \Ft \big) \leq 2\exp\bigg(\frac{\epsilon^2}{2\sigma_i^2}\bigg),
\end{equation*} which, combined with Proposition \ref{prop:concentration}, implies that
\begin{equation*}
    \Prob\big(\|\ozt\|^2 > \epsilon\big) \leq 2d\exp\bigg(\frac{n\epsilon}{3\sigma^2} \bigg).
\end{equation*} This result can be further leveraged to show the desired bound on the MGF of $\ozt$, while introducing the mild $\log(d)$ dependence. Note that when there is no noise, i.e., $\sigma = 0$, the dimension dependence in our bounds vanishes, highlighting that this dependence stems from the noise. As mentioned in \cite{jin2019short}, while mild, it is not clear if the dependence on $d$ can be completely removed. We now highlight three important facts. First, if a different definition of sub-Gaussian vectors is used, namely the one via inner products, which states that, for any $\Ft$-measurable vector $v$, we have
\begin{equation*}
    \E\big[\exp\big(\langle v,\zit\rangle\big) \: \vert \: \Ft\big] \leq \exp\bigg(\frac{\sigma_i^2\|v\|^2}{2} \bigg),
\end{equation*} this would inevitably lead to a worse dependence on the problem dimension, as in order to bound the MGF of the squared norm of the average noise, i.e., $\|\ozt\|^2$, one would need to use a similar argument as the one for Lemma 1 in \cite{jin2019short}, via covers of the unit sphere, picking up a factor of $d$ in the final rate, much worse than our $\log(d)$ dependence. Second, while the notion of sub-Gaussian random vectors used in \textbf{(A4)} is equivalent to the notion of norm-sub-Gaussian vectors in \cite{jin2019short} and stronger than the notion of sub-Gaussian vectors defined via inner products above, it is in fact the standard notion of sub-Gaussianity used in almost all HP results, in both centralized and decentralized settings, see \cite{nemirovski2009robust,ghadimi2013stochastic,li2020high,liu2023high,liu2024revisiting,dsgd-high-prob,dsgd-online-noncooperative-games,dsgd-online-stochastic} and references therein. Finally, we note that in the extreme regime of $n \ll \log(d)$, we can simply use the argument in \eqref{eq:sedrakyan}, to conclude that $\ozt$ is $\sigma$-sub-Gaussian, completely removing the dependence on dimension $d$, at the cost of losing linear speed-up in the number of users.

\section{General Step-Size for Strongly Convex Costs}\label{sup:gen-step}

In this section we extend the results for strongly convex costs under a more general step-size schedule. To that end, we consider a time-varying step-size schedule $\alpha_t \propto t^{-\eta}$, where $\eta \in (1/2,1]$. We start by providing some technical result. The first extends Proposition \ref{prop:Ran} for a general step-size schedule.

\begin{lemma}\label{lm:Ran}
    For any $c,t_0 > 0$, $\eta \in (1/2,1]$ and $0 \leq a \leq b$, we have
    \begin{equation*}
        \prod_{k = a}^b\Big(1 - \frac{c}{(k+t_0)^\eta} \Big) \leq \frac{(a+1+t_0)^{c\eta}}{(b+2+t_0)^{c\eta}}.
    \end{equation*}
\end{lemma}

\begin{proof}
    Using the inequality $1-x \leq e^{-x}$, it follows that
    \begin{equation}\label{eq:exp-switch}
        \prod_{k = a}^b\Big(1 - \frac{c}{(k+t_0)^\eta} \Big) \leq \prod_{k = a}^b\exp\bigg( - \frac{c}{(k+t_0)^\eta} \bigg) = \exp\bigg( -\sum_{k = a}^b \frac{c}{(k+t_0)^\eta} \bigg). 
    \end{equation} Next, note that for any $k \geq 1$
    \begin{equation*}
        \frac{1}{(k+t_0)^\eta} = \int_{k}^{k+1}\frac{1}{(k+t_0)^\eta}ds \geq \int_k^{k+1}\frac{1}{(s+t_0)^\eta}ds,
    \end{equation*} therefore, we have
    \begin{equation*}
        \sum_{k=a}^b\frac{c}{(k+t_0)^\eta} \geq \sum_{k = a}^b\int_k^{k+1}\frac{c}{(s+t_0)^\eta}ds.
    \end{equation*} We now differentiate between two cases: first, if $\eta = 1$, then $\int_k^{k+1}\frac{c}{(s+t_0)^\eta}ds = c\log\Big(\frac{k+t_0}{k+1+t_0} \Big)$, therefore
    \begin{equation*}
        \sum_{k=a}^b\frac{c}{(k+t_0)^\eta} \geq \sum_{k = a}^bc\log\bigg(\frac{k+t_0}{k+1+t_0} \bigg) = c\log\bigg(\frac{a+t_0}{b+t_0+1}\bigg). 
    \end{equation*} Plugging into \eqref{eq:exp-switch}, we get
    \begin{equation}\label{eq:bdd-case-1}
        \prod_{k = a}^b\Big(1 - \frac{c}{(k+t_0)^\eta} \Big) \leq \bigg(\frac{a+t_0}{b+t_0+1}\bigg)^c.
    \end{equation} Next, if $\eta \in (1/2,1)$, we use the fact that $\frac{1}{(s+t_0)^\eta} \geq \frac{\eta}{s+t_0+1}$, implying $\int_k^{k+1}\frac{c}{(s+t_0)^\eta}ds \geq c\eta\log\Big(\frac{k+t_0+1}{k+t_0+2}\Big)$, therefore
    \begin{equation*}
        \sum_{k=a}^b\frac{c}{(k+t_0)^\eta} \geq c\eta\sum_{k = a}^b\log\bigg(\frac{k+1+t_0}{k+2+t_0}\bigg) = c\eta\log\bigg(\frac{a+1+t_0}{b+2+t_0} \bigg).
    \end{equation*} Plugging into \eqref{eq:exp-switch}, we get
    \begin{equation}\label{eq:bdd-case-2}
        \prod_{k = a}^b\Big(1 - \frac{c}{(k+t_0)^\eta} \Big) \leq \bigg(\frac{a+1+t_0}{b+2+t_0}\bigg)^{c\eta}.
    \end{equation} The claim follows by combining \eqref{eq:bdd-case-1}-\eqref{eq:bdd-case-2} and noting that $\Big(\frac{a+t_0}{b+1+t_0}\Big)^c \leq \Big(\frac{a+1+t_0}{b+2+t_0}\Big)^c$.
\end{proof}

The next result extends Lemma \ref{lm:mgf-bound-str-cvx} for a general step-size.

\begin{lemma}\label{lm:mgf-bound-str-cvx-gen-step}
    Let $\{X^t\}_{t \in \N}$ be a sequence of random variables initialized by a deterministic $X^1 > 0$, such that, for some $\eta \in (1/2,1]$, some $C_1,C_2,C_3 > 0$ and every $t \geq 1$ 
    \begin{equation}\label{eq:almost-decrease-gen-step}
        \E[\exp(X^{t+1})] \leq \E\bigg[\exp\bigg(\Big(1-\frac{2}{(t+t_0)^{\eta}}\Big)X^t + \frac{C_1}{(t+t_0)^{2\eta}} + \frac{C_2}{(t+t_0)^{1+\eta}} + \frac{C_3}{t+t_0} \bigg)\bigg].
    \end{equation} If $t_0 \geq 2$, we have
    \begin{align*}
        \E[\exp(X^{t+1})] \leq \exp\bigg(\frac{X_1}{(t+1+t_0)^{2\eta}} + \frac{C_1}{(t+t_0)^{2\eta-1}} + \frac{C_2}{(t+t_0)^{\eta}} + C_3\bigg).
    \end{align*}
\end{lemma}

\begin{proof}
    Starting from \eqref{eq:almost-decrease-gen-step}, taking the logarithm, defining $Y_{t} \coloneqq \log\E[\exp(X^t)]$ and $b_t = 1-\frac{2}{t+t_0}$, we then have
    \begin{align}\label{eq:useful-recursion-gen-step}
        Y^{t+1} &\leq \frac{C_1}{(t+t_0)^{2\eta}} + \frac{C_2}{(t+t_0)^{1+\eta}} + \frac{C_3}{t+t_0} + \log\E[\exp(b_tX^t)] \nonumber \\ 
        &\leq b_tY^t + \frac{C_1}{(t+t_0)^{2\eta}} + \frac{C_2}{(t+t_0)^{1+\eta}} + \frac{C_3}{t+t_0},
    \end{align} where the second inequality follows from the fact that $b_t \in (0,1)$ and Proposition \ref{prop:Jensen}. Unrolling the recursion \eqref{eq:useful-recursion-gen-step} and noting that $Y_1 = X_1$, since $X_1 > 0$ is deterministic, we get
    \begin{align*}
        Y^{t+1} &\leq X^1\prod_{k \in [t]}b_k + \sum_{k \in [t]}\frac{C_1}{(k+t_0)^{2\eta}}\prod_{s = k + 1}^tb_s + \sum_{k \in [t]}\frac{C_2}{(k+t_0)^{1+\eta}}\prod_{s = k + 1}^tb_s + \sum_{k \in [t]}\frac{C_3}{(k+t_0)}\prod_{s = k + 1}^tb_s.
    \end{align*} Using Lemma \ref{lm:Ran}, it follows that $\prod_{s = k+1}^tb_s \leq \Big(\frac{k+t_0}{t+t_0}\Big)^{2\eta}$, therefore using Darboux sums, we get
    \begin{align}\label{eq:technical-intermed-gen-step}
        Y^{t+1} &\leq \frac{X^1}{(t+t_0)^{2\eta}} + \frac{C_1}{(t+t_0)^{2\eta-1}} + \frac{C_2}{(t+t_0)^\eta} + C_3.
    \end{align} Taking the exponent on both sides completes the proof.
\end{proof}

The next two results generalize Lemmas \ref{lm:bdd-iterates} and \ref{lm:consensus-str-cvx}. The proofs follow essentially the same steps as those of Lemmas \ref{lm:bdd-iterates} and \ref{lm:consensus-str-cvx}, with Proposition \ref{prop:Ran} replaced by Lemma \ref{lm:Ran}. For brevity, we omit the details.

\begin{lemma}\label{lm:bdd-iterates-gen-step}
    Let assumptions \textup{\textbf{(A1)}}-\textup{\textbf{(A4)}} and \textup{\textbf{(A6)}} hold and let $a,t_0,K > 0$ and $\nu \in (0,1]$ be some positive constants. If the step-size satisfies $\alpha_t \leq \min\lcb \frac{1}{\osigma\sqrt{2(t+t_0+2)^{\eta-1}K}}, \frac{1}{\mu} \rcb$ for all $t \geq 1$, with $\nu \leq \min\lcb 1, \frac{\mu}{24a\osigma^2K} \rcb$ and $K_{t+1} = (t+t_0+2)^{2\eta-1}K$, then 
    \begin{equation*}
        \E[\exp(\nu K_{t+1}\|\bxtp - \bxs\|^2)] \leq \exp\Bigg(\nu K_{t+1}\bigg(\frac{nC_1}{(t+t_0+2)^{2\eta-1}} + (t+t_0+2)^{1-\eta}C_2 + \frac{C_3}{(t+t_0+2)^{a\mu\eta}}\bigg)\Bigg),
    \end{equation*} for some problem dependent constants $C_i > 0$, $i \in [3]$, all independent of $n$.
\end{lemma}

\begin{lemma}\label{lm:consensus-str-cvx-gen-step}
    Let \textup{\textbf{(A1)}}-\textup{\textbf{(A4)}} and \textup{\textbf{(A6)}} hold, let $a,t_0,K > 0$ and the step-size be given by $\alpha_t = \frac{a}{(t+t_0)^\eta}$ for $\eta \in (1/2,1]$, and let $x^1_i = x^1_j$, for all $i,j\in[n]$. If $a = \frac{6}{\mu}$ and $t_0 \geq \max\Big\{6^{\frac{1}{\eta}},\frac{1}{1-\lambda},\frac{7776\osigma^2\lambda^2K}{\mu^2(1-\lambda)},\Big(\frac{12\lambda L\sqrt{10}}{\mu(1-\lambda)}\Big)^{\frac{1}{\eta}}\Big\}$, then for $K_{t+1} = (t+t_0+2)^{2\eta-1}K$ and any $\nu \leq \min \lcb 1, \frac{\mu^2}{144\sigma^2K} \rcb$, we have 
    \begin{align*}
        \E\bigg[\exp\Big(\nu K_{t+1}\sum_{i \in [n]}\|\xitp - \oxtp \|^2\Big)\bigg] \leq \exp\bigg(\nu K_{t+1}\Big(\sum_{k = 1}^{t}\lambda^{t-k}S_{k} + \sum_{k = 1}^{t}\lambda^{t-k}D_{k}\Big) \bigg),
    \end{align*} where $D_t = \alpha_t^2\Big(\frac{nC_1^\prime}{(t+t_0+2)^{2\eta-1}} + (t+t_0+2)^{1-\eta}C_2^\prime + \frac{C_3^\prime}{(t+t_0+2)^{a\mu\eta}}\Big)$ and $S_t = \alpha_t^2nC_4^\prime$, for some problem related constants $C_i^\prime > 0$, $i \in [4]$, all independent of $n$.
\end{lemma}

We are now ready to state the main result.

\begin{theorem}\label{thm:str-cvx-gen-step}
    Let \textup{\textbf{(A1)}-\textbf{(A4)}} and \textup{\textbf{(A6)}} hold, the step-size be given by $\alpha_t = \frac{a}{(t+t_0)^{\eta}}$ for $\eta \in (1/2,1]$ and let $x^1_i = x^1_j$, for all $i,j\in[n]$. If $a =\frac{6}{\mu}$, $t_0 \geq \max\Big\{6^\frac{1}{\eta},\frac{1+\lambda}{1-\lambda},\frac{23328\osigma^2\kappa^2\lambda^2}{\mu(1-\lambda)},\frac{9720\sigma^2\kappa}{\mu},\Big(\frac{12\kappa\lambda\sqrt{10}}{1-\lambda} \Big)^\frac{1}{\eta}\Big\}$ and $\nu = \min\Big\{1, \frac{\mu}{432\sigma^2\kappa^2}, \frac{\mu}{72\kappa } \Big\}$, then for any $\delta \in (0,1)$ and $T \geq 1$, with probability at least $1 - \delta$, it holds that 
    \begin{align*}
        \frac{1}{n}\sum_{i \in [n]}\big(f(\xit)-f^\star\big) = \bigO\bigg(\frac{\log(\nicefrac{2d}{\delta})}{n(t+t_0)^{2\eta-1}} + \frac{1}{(t+t_0)^{3\eta - 1}} \bigg).
    \end{align*} 
\end{theorem}

Theorem \ref{thm:str-cvx-gen-step} extends the results of Theorem \ref{thm:main-dsgd-str-cvx} for the polynomially decaying step-size $\alpha_t \propto t^{-\eta}$ for $\eta \in (1/2,1]$, recovering the results of Theorem \ref{thm:main-dsgd-str-cvx} when $\eta = 1$. Interestingly, it offers the following insight: \emph{for strongly convex costs, the step-size does not determine the rate in the number of users $n$, but only the rate in time $t$.} In other words, \dsgd achieves the optimal linear speed-up for strongly convex costs independent of the step-size schedule. This is intuitive, as the step-size in Theorem \ref{thm:str-cvx-gen-step} does not depend on $n$, whereas for non-convex costs the step-size needs to be chosen as $\alpha \propto (nT)^{-1/2}$ to guarantee linear speed-up. We now provide a proof sketch, highlighting the main differences to the proof of Theorem \ref{thm:main-dsgd-str-cvx}.

\textit{Proof sketch of Theorem \ref{thm:main-dsgd-str-cvx}.} The main difference stems from changing the scaling when defining the quantity $F^t$, which we now define as $F^{t} \coloneqq n(t+t_0)^{2\eta-1}\big(f(\oxt)-f^\star\big)$. Denoting by $A_t \coloneqq \alpha_t(t+t_0+1)^{2\eta-1}$, using Lemma \ref{lm:descent-inequality} and the properties of strongly convex functions, we get
\begin{align}\label{eq:proof-sketch-str-cvx-gen}
    F^{t+1} &\leq (1-\alpha_t\mu)\bigg(\frac{t+t_0+1}{t+t_0}\bigg)^{2\eta-1}F^t - A_t\langle \nabla f(\oxt),\ozt\rangle + \alpha_tA_tnL\|\ozt\|^2 + \frac{A_tL^2}{2}\sum_{i \in [n]}\|\xit - \oxt\|^2.
\end{align} Starting from \eqref{eq:proof-sketch-str-cvx-gen}, we follow similar steps as in the original proof, with Lemmas \ref{lm:consensus-str-cvx} and \ref{lm:mgf-bound-str-cvx} replaced by Lemmas \ref{lm:consensus-str-cvx-gen-step} and \ref{lm:mgf-bound-str-cvx-gen-step}. To bound the consensus gap, we use Lemma \ref{lm:consensus-str-cvx-gen-step} and additionally proceed as follows
\begin{equation*}
    \E\big[\exp\big(\nu q c_t\|\bxt - \obxt\|^2\big)\big] \leq \E\big[\exp\big(\nu K_t\|\bxt - \obxt\|^2\big)\big] \leq \exp\bigg(\nu K_{t}\sum_{k = 1}^{t-1}\lambda^{t-1-k}\Big(D_k + S_k\Big) \bigg),
\end{equation*} where $D_k = \alpha_k^2\Big(\frac{nC^\prime_1}{(k+t_0+2)^{2\eta-1}} + C^\prime_2(k+t_0+2)^{1-\eta} + \frac{C^\prime_3}{(k+t_0+2)^{a\mu\eta}} \Big)$ and $S_k = \alpha_k^2nC^\prime_4$. To further bound the above expression, we use Lemma \ref{lm:technical-result}, to get
\begin{align*}
    nC^\prime_1\sum_{k = 1}^{t-1}\frac{\lambda^{t-1-k}\alpha_k^2}{(k+t_0)^{2\eta-1}} &= \sum_{k = 1}^{t-1}\lambda^{t-1-k}\frac{nC^\prime_1}{(k+t_0)^{4\eta-1}} \leq \frac{nC^\prime_1}{(t+t_0)^{4\eta-1}} \\
    C^\prime_2\sum_{k = 1}^{t-1}\lambda^{t-1-k}\alpha_k^2(k+t_0)^{1-\eta} &= \sum_{k = 1}^{t-1}\lambda^{t-1-k}\frac{C^\prime_2}{(k+t_0)^{3\eta-1}} \leq \frac{C^\prime_2}{(t+t_0)^{3\eta-1}} \\
    C^\prime_3\sum_{k = 1}^{t-1}\lambda^{t-1-k}\frac{\alpha_k^2}{(k+t_0)^{a\mu\eta}} &= \sum_{k = 1}^{t-1}\lambda^{t-1-k}\frac{C^\prime_3}{(k+t_0)^{(a\mu+2)\eta}} \leq \frac{C^\prime_3}{(t+t_0)^{\eta(2+a\mu)}} \\
    nC^\prime_4\sum_{k = 1}^{t-1}\lambda^{t-1-k}\alpha_k^2 &= \sum_{k = 1}^{t-1}\lambda^{t-1-k}\frac{nC^\prime_4}{(k+t_0)^{2\eta}} \leq \frac{nC^\prime_4}{(t+t_0)^{2\eta}}. 
\end{align*} Noting that $2\eta - 1 \geq 0$ and that $a\mu \geq 1$, it then follows that 
\begin{align*}
    \E\big[\exp\big(\nu q c_t\|\bxt - \obxt\|^2\big)\big] \leq \exp\bigg(\nu K_t\bigg(\frac{G_1}{(t+t_0)^{3\eta-1}} + \frac{nG_2}{(t+t_0)^{2\eta}} \bigg) \bigg),
\end{align*} where $G_1 = C^\prime_2 + C^\prime_3$ and $G_2 = C^\prime_1 + C^\prime_4$. Since $\frac{1}{q} = \frac{\alpha_t\mu}{4+\alpha_t\mu} \leq \frac{\alpha_t\mu}{4}$, we finally get
\begin{align*}
    \sqrt[q]{\E\big[\exp\big(\nu q c_t\|\bxt - \obxt\|^2\big)\big]} &\leq \exp\bigg(\frac{\nu\mu \alpha_tK_t}{4}\bigg(\frac{G_1}{(t+t_0)^{3\eta-1}} + \frac{nG_2}{(t+t_0)^{2\eta}} \bigg) \bigg) \leq \exp\bigg(\frac{\nu G_1}{(t+t_0)^{2\eta}} + \frac{\nu nG_2}{(t+t_0)^{1+\eta}} \bigg), 
\end{align*} where the last inequality follows from the definition of $K_t$ and the step-size. The rest of the proof follows the same steps as that of Theorem \ref{thm:main-dsgd-str-cvx} and is omitted, for brevity.

\end{document}